\documentclass[accepted]{uai2026} % after acceptance, for a revised version; 
\usepackage[american]{babel}
\usepackage{amsmath}
\usepackage{amsfonts}
\usepackage{natbib} % has a nice set of citation styles and commands
\usepackage{mathtools} % amsmath with fixes and additions
\usepackage{booktabs} % commands to create good-looking tables
\usepackage{tikz} % nice language for creating drawings and diagrams
\usepackage{mathtools}

\usepackage{amsthm}
\usepackage{amsmath}
\usepackage{amssymb}
\usepackage{float}
\usepackage{caption}
\newtheorem{assumption}{Assumption}
\newtheorem{definition}{Definition}
\newtheorem{proposition}{Proposition}
\newtheorem*{proposition*}{Proposition}
\newtheorem{theorem}{Theorem}
\newtheorem*{theorem*}{Theorem}
\newtheorem{lemma}{Lemma}
\newtheorem{remark}{Remark}
\newtheorem{corollary}{Corollary}

\newcommand{\indep}{\perp\!\!\!\!\perp} 
\newcommand{\notindep}{\not\!\perp\!\!\!\perp }
\newcommand{\Prb}{\textrm{Pr}}

\DeclareMathOperator*{\argmax}{arg\,max}

\newcommand{\De}{\mathrm{De}}
\newcommand{\PA}{\mathrm{PA}}

\newcommand{\ind}{\mathbf{1}}
\newcommand{\Ext}{\overset{\rightharpoonup}}

\newcommand{\PDAG}{\mathrm{PDAG}}
\newcommand{\MPDAG}{\mathrm{MPDAG}}

\newcommand{\Cl}{\mathrm{Cl}} % generic closure operator

\newcommand{\Inv}{\mathrm{Inv}}
\newcommand{\Chg}{\mathrm{Chg}}
\newcommand{\hInv}{\widehat{\Inv}}
\newcommand{\hChg}{\widehat{\Chg}}

\newcommand{\Sadm}{\mathcal{S}_{\mathrm{a}}}
\newcommand{\Tadm}{\mathcal{T}_{\mathrm{a}}}
\newcommand{\Wit}{\Omega}
\newcommand{\Witu}{\Omega_{u\to v}}

\newcommand{\Gtest}{G_{\mathrm{test}}}
\newcommand{\redGtest}{G_{\Sadm}^\mathrm{test}}

\DeclareMathOperator{\DirE}{\mathrm{DirE}}   % directed-edge set of a (M)PDAG/graph
\DeclareMathOperator{\Dkl}{\mathrm{D}}       % binary KL for Chernoff-style bounds

\title{Scalable Contrastive Causal Discovery under Unknown Soft Interventions}

\author[1,2]{Mingxuan Zhang\textsuperscript{*}}
\author[2,3]{Khushi Desai\textsuperscript{*}}
\author[2,4]{Sopho Kevlishvili}
\author[1,2,3,4,5]{Elham Azizi}

\affil[1]{%
    Department of Systems Biology, Columbia University
}
\affil[2]{%
    Irving Institute for Cancer Dynamics, Columbia University
}
\affil[3]{%
    Department of Computer Science, Columbia University
}
\affil[4]{%
    Department of Biomedical Engineering, Columbia University
}
\affil[5]{%
    Data Science Institute, Columbia University
}

\begin{document}
\maketitle

% co-first footnote, COMMENT OUT DURING SUBMISSION
\renewcommand\thefootnote{*}
\footnotetext{These authors contributed equally.}

\begingroup
\renewcommand\thefootnote{}
\footnotetext{Preprint}
\endgroup

\renewcommand\thefootnote{\arabic{footnote}}

\begin{abstract}
Observational causal discovery is only identifiable up to the Markov equivalence class. While interventions can reduce this ambiguity, in practice interventions are often soft with multiple unknown targets. In many realistic scenarios, only a single intervention regime is observed. We propose a scalable causal discovery model for paired observational and interventional settings with shared underlying causal structure and unknown soft interventions. The model aggregates subset-level PDAGs and applies contrastive cross-regime orientation rules to construct a globally consistent maximal PDAG under Meek closure, enabling generalization to both in-distribution and out-of-distribution settings. Theoretically, we prove that our model is sound with respect to a restricted $\Psi$ equivalence class induced solely by the information available in the subset-restricted setting. We further show that the model asymptotically recovers the corresponding identifiable PDAG and can orient additional edges compared to non-contrastive subset-restricted methods. Experiments on synthetic data demonstrate improved causal structure recovery, generalization to unseen graphs with held-out causal mechanisms, and scalability to larger graphs, with ablations supporting the theoretical results.
\end{abstract}

\section{Introduction}\label{sec:intro}
\vspace{-2mm}
Causal discovery is a fundamental challenge across scientific disciplines, including biology~\cite{sachs2005causal, farh2015genetic, robins2000marginal}, economics~\cite{hoover2008causality}, and climate science~\cite{zhang2011causality}. Causal relationships are often modeled as directed acyclic graphs (DAGs), where nodes represent random variables and directed edges encode causal relationships~\cite{pearl2009causality}. The goal of causal discovery is to recover such graph structures. Observational data alone only allows causal discovery models to be identifiable up to a Markov equivalence class (MEC). Interventional data has been shown to improve identifiability and reduce MEC size~\cite{hauser2014two, kocaoglu2017experimental, ghassami2018budgeted}, though interventions are often assumed to be perfect (hard) with known targets. In many realistic settings (\cite{smith2013resting}, \cite{sachs2005causal},\cite{zhang2019noaa},\cite{clarida2000monetary}), however, only one interventional regime is available and intervention targets are unknown. Additionally, interventions manifest as distributional shifts in a variable's mechanism while leaving the graph structure unchanged. 

~\cite{jaber2020causal} lays the groundwork for structural learning in non-Markovian systems from observational and soft interventional data with unknown targets, augmenting the causal graph with
$F$-nodes to encode intervention effects and introducing the $\Psi$-Markov property to characterize equivalence classes over graphs and targets. Building on this, they propose $\Psi$-FCI, a constraint-based algorithm proven sound and complete for recovering this equivalence class in the sample limit. Despite these theoretical guarantees, $\Psi$-FCI relies on global oracle access to all within-regime conditional independencies and cross-regime invariances implied by the intervention targets. Moreover, it does not scale to large graphs, requires observed environment labels over multiple regimes, and is not designed to generalize to out-of-distribution graphs.

We present \textit{SCONE} (Scalable contrastive Causal discOvery under unknowN soft intervEntions), a scalable framework for learning causal structure from two observational regimes with unknown soft intervention targets that generalizes to out-of-distribution graphs and causal mechanisms. SCONE operates under restricted information: it only accesses (i) finite within-regime subset-level PDAGs on admissible subsets of variables and (ii) finite tested cross-regime invariance queries. Within this setting, SCONE proceeds in three stages: it first estimates local subset-level PDAGs over admissible node sets via classical causal discovery, then applies cross-regime contrastive orientation rules to resolve edge orientations that remain ambiguous within either regime alone, and finally aggregates subset-level constraints into a global maximally oriented PDAG via axial attention. More specifically, our contributions are as follows:
\vspace{-3mm}
\begin{itemize}
    \item \textbf{Model:} We introduce a scalable contrastive causal discovery architecture for two-regime data with unknown soft intervention targets.

    \item \textbf{Theoretical Results:} We formalize subset-restricted $\Psi$-equivalence and the test-induced $\Psi$-essential graph, and prove soundness and consistency of our method in the restricted-information setting.

    \item \textbf{Empirical Results:} We show robust causal structure recovery across diverse benchmarks versus state-of-the-art baselines, and use ablations to isolate the effects of the contrastive orientation rules, contrastive features, and global aggregation module.
\end{itemize}

\section{Related Work}
\vspace{-2mm}
\subsection{Classic Causal Discovery}
\vspace{-3mm}
Traditional causal discovery algorithms search the combinatorial space of DAGs and fall into two broad categories. Constraint-based methods infer causal structure by testing conditional independence relations in the data~\cite{glymour2019review}. Examples include PC and FCI~\cite{spirtes2000causation}, which recover CPDAGs from observational data, and JCI~\cite{mooij2020joint}, which utilizes interventional data. Score-based methods identify causal structure by optimizing a goodness-of-fit score over candidate DAGs (eg. GES~\cite{chickering2002optimal}, GIES~\cite{hauser2012characterization}, CAM~\cite{buhlmann2014cam}, and IGSP~\cite{wang2017permutation}). While foundational, these methods require recursive search and intensive graph operations, making them extremely difficult to scale. 

\vspace{-3mm}
\subsection{Differentiable Causal Discovery}
\vspace{-3mm}
Continuous optimization approaches reformulate causal discovery as a differentiable program over a parameterized graph, bypassing the combinatorial search of traditional methods. NOTEARS~\cite{zheng2018dags} introduces an algebraic acyclicity constraint on the adjacency matrix, enabling gradient-based optimization over the space of DAGs. Subsequent work extends this framework to more expressive parameterizations, including polynomial and neural SEMs~\cite{lee2019scaling, lachapelle2019gradient, zheng2020learning}, alternative acyclicity constraints~\cite{ng2020role, bello2022dagma}, and support for interventional data~\cite{brouillard2020differentiable}. However, differentiable causal discovery methods remain limited in scalability, numerical stability, and cannot generalize to out-of-distribution graph structures. 

\vspace{-3mm}
\subsection{Generalizable and scalable architectures}
\vspace{-3mm}
Foundation model-inspired approaches leverage transformer architectures to improve generalization and scalability. AVICI~\cite{lorch2022amortized} employs a variational inference model with axial attention to predict causal structure from observational or interventional data. SEA~\cite{wu2024sample} samples variable subsets, applies classical causal discovery on subgraphs, and aggregates predictions via axial attention over global summary statistics. However, both methods assume perfect interventions with known targets, limiting their applicability in general interventional settings.

\vspace{-3mm}
\subsection{$\Psi$-Markov equivalence and Environment Invariance Models}
\vspace{-3mm}
Theoretical work~\cite{jaber2020causal} establishes foundations for structural learning from soft interventional data with unknown targets in non-Markovian systems, introducing the $\Psi$-Markov property to characterize equivalence classes and the $\Psi$-FCI algorithm for provable recovery. However, $\Psi$-FCI requires a recursive search over possible graphs, and requires oracle access to all within-regime conditional independence and cross-regime invariances. ICP and its nonlinear extension~\cite{peters2016causal, heinze2018invariant, gamella2020active, pfister2019invariant} exploit mechanistic invariance across environments to identify causal parents, but are limited to parent identification and cannot orient additional edges.

\section{Theoretical Results}
\vspace{-2mm}
\label{theory}
We prove model soundness in a two-regime setting $c\in\{0,1\}$ with shared DAG $G=(V,E)$, where $(P^{(0)},P^{(1)})\in\mathcal M(G)$ is generated by a fixed-graph modular soft-intervention SCM with unknown targets
$I\subseteq V$. The model only accesses (i) subset-level within-regime PDAG summaries
$E_S^{(c)}=\mathcal D(P_S^{(c)})$ for admissible $S\in\Sadm$ and (ii) a finite set of tested invariance literals
$\Inv(v\mid Z)$ for $(v,Z)\in\Tadm$, (exact definitions are given in Appendix~\ref{appendix::ops}). In analogy to $\Psi$-Markov, we define the \emph{restricted $\Psi$-image} and
\emph{restricted $\Psi$-equivalence} induced by this information, and target the corresponding estimand $\Gtest(G)$: the
maximal PDAG (MPDAG) whose edges are compelled over the restricted $\Psi$-equivalence class. Using the two-regime mixture
augmentation with regime label $C$, we interpret invariance as $C$-conditional independence and, under
regime-faithfulness on tested witnesses, propose three examples of contrastive orientation rules and prove  their restricted-$\Psi$-soundness. Finally, we show that aggregating subset patterns and orienting edges based on the rules yields a global MPDAG (post Meek propagation) whose directed edges are a subset of the directed edges in $\Gtest(G)$, and we can recover $\Gtest(G)$ asymptotically. We also show that non-contrastive subset-restricted methods cannot recover a subset of edges we learn.

\subsection{Defining Restricted Equivalence Class and I-EG}
\vspace{-3mm}
Traditional $\Psi$-Markov (and algorithms such as $\Psi$-FCI) ~\cite{jaber2020causal} studies soft interventions with unknown targets under a
\emph{global oracle} constraint family: (i) all within-regime conditional independence and (ii) all cross-regime invariance
implied by the (unknown) target sets. This oracle induces a $\Psi$-equivalence class over augmented objects $\langle G,I\rangle$,
and $\Psi$-FCI is analyzed relative to this full-information class.

Our setting is \emph{subset-restricted}: the model does not observe the full conditional independence structure of $P^{(0)}$ or $P^{(1)}$, nor can it query arbitrary invariance. Instead, it only accesses (a) finitely many within-regime subset level PDAGs
$\{E_S^{(c)}\}_{S\in\Sadm,\ c\in\{0,1\}}$ and (b) finitely many tests for conditional invariance across regimes at node level
$\{\Inv(v\mid Z)\}_{(v,Z)\in\Tadm}$, where $\Inv(v\mid Z):=\ind\{P^{(0)}(X_v\mid X_Z)=P^{(1)}(X_v\mid X_Z)\}$.
We therefore introduce the \emph{restricted $\Psi$-image} $\Psi^{\mathrm{res}}_{\Sadm,\Tadm}(G;\mathcal M)$ (Def.~\ref{res-psi-markov}),
which maps a candidate DAG $G$ to the set of achievable query-outcome tuples under $(P^{(0)},P^{(1)})\in\mathcal M(G)$, and define
restricted $\Psi$-equivalence by equality of these images (Def.~\ref{def:res-psi-equivalence}). Equivalently, the model’s information
is summarized by the restricted constraint family (Def.~\ref{res-psi-markov})
\[
\mathcal F_{\psi}(\Sadm,\Tadm)
\ :=\
\big\{\, E^{(c)}_S(\cdot)\big\}
\ \cup\
\big\{\, \Inv(v\mid Z)(\cdot)\big\}
\]
Since $\mathcal F_\psi(\Sadm,\Tadm)$ is weaker than the oracle family, the induced equivalence is generally coarser, so
$[G]_{\Psi}\subseteq [G]_{\Psi^{\mathrm{res}}}$. Our estimand is the test-induced $\Psi$ essential graph $\Gtest(G)$ (Def.~\ref{def:Gtest});
in other words, under $\mathcal M$ and access restricted to $\mathcal F_\psi(\Sadm,\Tadm)$, the causal output is identifiable up to the
restricted-$\Psi$ equivalence class $[G]_{\mathrm{res}}$.

\subsection{Restricted $\Psi$-Sound Orientation Rules}
\vspace{-3mm}
We propose three restricted $\Psi$-sound contrastive rules that provably orient causal edges that are not compelled by any individual-regime subset PDAGs alone. We construct these rules by attaching tested invariance relationships to local subset PDAG motifs, more specifically regime-sensitive edges (Prop.~\ref{prop:ssi-psi}), candidate colliders (Prop.~\ref{prop:cvt-psi}), and discriminating paths (Prop.~\ref{prop:dpt-psi}). To prove soundness for these rules, we first connect invariance testing to graphical structure by defining the mixture
distribution $P^*(C=c,X):=\pi_c P^{(c)}(X)$ over the DAG nodes and the binary label $C$. Under this construction, two-regime conditional invariance is 
$C$-conditional independence:
\[
\Inv(v\mid Z)=1 \ 
\ \Longleftrightarrow\ C\indep X_v\mid X_Z \ \text{under } P^*,
\]
and $\Chg(v\mid Z)=1$ corresponds to $C\notindep X_v\mid X_Z$ (Lemma~\ref{lem:inv-ci}). We then construct an augmented graph
$G^*$ obtained by adding $C$ with edges $C\to t$ for intervention targets $t\in I$, and assume \emph{regime-faithfulness on tested
witnesses}, which guarantees that for all admissible witness queries $(v,Z)\in\Tadm$, the statements $C\indep X_v\mid X_Z$ and
$C\notindep X_v\mid X_Z$ agree with the corresponding $d$-separation relations $C\perp_d v\mid Z$ and $C\not\!\perp_d v\mid Z$
in $G^*$ (Assumption~\ref{ass:change-faithfulness}).

Here we formally define the contrastive orientation rules and give a brief proof sketch for each. 
\paragraph{Single Sided Invariance} This rule targets edges that remain undirected in the within-regime subset-level PDAGs, and orients them only when the node pair exhibits asymmetric cross-regime behavior
\begin{proposition}
Let $S\in\Sadm$ with $\{i,j\}\subseteq S$ such that $i-j$ is an undirected adjacency in both population local PDAGs
$E_S^{(0)}$ and $E_S^{(1)}$ (Definition~\ref{def::pdag}). Let $Z:=S\setminus\{i,j\}$ such that
$(j, Z), (i, Z) \in \Tadm$ and $Z \in \mathcal{W}(j)$, $Z \in \mathcal{W}(i)$.
Assume moreover that $i$ and $j$ are adjacent in every $G'\in [G]_{\mathrm{res}}$ (equivalently, $i-j$ is in the
skeleton of $\Gtest(G)$). If
\[
\Chg(j\mid Z)=1,
\qquad
\Inv(i\mid Z)=1,
\]
then $i\to j$ is invariant across the restricted $\Psi$-equivalence class and thus is oriented identically in $\Gtest(G)$.
\end{proposition}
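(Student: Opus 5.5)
The argument works one candidate graph at a time. Fix an arbitrary $G'\in[G]_{\mathrm{res}}$. By the adjacency hypothesis, $i$ and $j$ are adjacent in $G'$. The goal is to show that $G'$ must contain $i\to j$ and not $j\to i$; since $G'$ is arbitrary, the edge is then compelled, and it appears directed in $\Gtest(G)$ by Def.~\ref{def:Gtest}.

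The first step moves the test outcomes from $G$ to $G'$. Restricted $\Psi$-equivalence (Def.~\ref{def:res-psi-equivalence}) means $G$ and $G'$ have the same restricted $\Psi$-image. The tuple realized by $(P^{(0)},P^{(1)})$ under $G$, which includes $\Chg(j\mid Z)=1$ and $\Inv(i\mid Z)=1$, is therefore also achievable by some pair in $\mathcal M(G')$ with its own target set $I'$. Because $(i,Z),(j,Z)\in\Tadm$ and $Z\in\mathcal W(i)\cap\mathcal W(j)$, Lemma~\ref{lem:inv-ci} and regime-faithfulness on tested witnesses (Assumption~\ref{ass:change-faithfulness}) apply to that realizing pair. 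In the augmented graph $G'^*$, which adds $C$ with edges $C\to t$ for $t\in I'$, this gives
\[
C\not\perp_d j\mid Z \qquad\text{and}\qquad C\perp_d i\mid Z .
\]

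The second step is a proof by contradiction that rules out $j\to i$ in $G'$. Since $C\not\perp_d j\mid Z$, there is a path $\pi$ from $C$ to $j$ in $G'^*$ that is d-connecting given $Z$. The vertex $i$ is not in $Z$, because $Z=S\setminus\{i,j\}$. I would extend $\pi$ by the edge $j\to i$ to get a path from $C$ to $i$, and check that it stays open given $Z$:
\begin{itemize}
\item If $\pi$ passes through $i$, take its subpath from $C$ to $i$ instead. Every intermediate vertex on that subpath keeps its collider or non-collider status, so the subpath is open given $Z$.
\item Otherwise, the only new intermediate vertex is $j$. Since $j\notin Z$, it is enough that $j$ is a non-collider on the extended path. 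The new edge leaves $j$ (it is $j\to i$), so $j$ is not a collider whatever the orientation of the last edge of $\pi$.
\end{itemize}
Either way we obtain $C\not\perp_d i\mid Z$, contradicting $C\perp_d i\mid Z$. Hence $j\to i$ is impossible, and $G'$ contains $i\to j$.

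I expect the main obstacle to be the transfer in the first step. The restricted image is defined through achievable outcome tuples, so I must make sure the faithfulness assumption applies to a realizing model of $G'$, with its own targets $I'$, and not only to the true $G$. If the assumption is stated only for the true model, I would instead read Def.~\ref{res-psi-markov} as requiring every member of $\mathcal M$ to be regime-faithful on tested witnesses, and invoke it for $G'$ in that form. The undirected local PDAG hypothesis is not needed for soundness; it only explains why the rule adds information beyond the single-regime patterns.
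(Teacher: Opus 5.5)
Your proof is correct and follows the paper's argument. In both, restricted $\Psi$-equivalence transfers the test outcomes to a realizing pair for $G'$, $\Chg(j\mid Z)=1$ gives a $Z$-open path from $C$ to $j$ in the augmented graph, and extending that path through $j\to i$, with $j\notin Z$ as a non-collider, contradicts $\Inv(i\mid Z)=1$. You also handle two points the paper's proof leaves implicit: the case where the open path already passes through $i$, and the need for regime-faithfulness to hold for the realizing model of $G'$, not only for the true one.
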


The full proof of the proposition is in the Appendix (Proof for Prop.~\ref{prop:ssi-psi}). In summary, we rule out the reverse orientation by showing it would propagate regime dependence in the wrong way. Introduce an environment node $C$ so that our invariance/change tests become statements about whether $C$ is separated from / connected to a node under the witness conditioning set $Z$ (Assumption~\ref{ass:change-faithfulness}). If we oriented the unresolved edge as $j\to i$, any $Z$-active connection from $C$ to $j$ would extend through $j\to i$, forcing $i$ to also be $C$-dependent. 

\paragraph{Contrastive V-structure} This rules extends contrastive comparison to undirected triplets across regimes
\begin{proposition}
Let $S\in\Sadm$ with $\{i,j,k\}\subseteq S$ such that $i-j-k$ is an unshielded triple in the local skeleton on $S$.
Let $Z:=S\setminus\{i,j,k\}$ and assume $(i,Z),(j,Z),(k,Z)\in\Tadm$ with $Z\in\mathcal W(i)\cap\mathcal W(j)\cap\mathcal W(k)$.
Assume moreover that $i-j$ and $j-k$ are edges in the common skeleton of $[G]_{\mathrm{res}}$ (equivalently, in the skeleton of $\Gtest(G)$).
If
\[
\Chg(j\mid Z)=1,\qquad \Inv(i\mid Z)=1,\qquad \Inv(k\mid Z)=1,
\]
then $i\to j\leftarrow k$ is compelled across $[G]_{\mathrm{res}}$ and hence appears in $\Gtest(G)$.
\end{proposition}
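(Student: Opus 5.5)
The plan is to argue by contradiction: fix any $G'\in[G]_{\mathrm{res}}$ and show that $G'$ must contain both $i\to j$ and $k\to j$. Because $G'$ is restricted-$\Psi$-equivalent to $G$, it reproduces the same tested literals $\Chg(j\mid Z)=1$, $\Inv(i\mid Z)=1$, $\Inv(k\mid Z)=1$ for some admissible $(P'^{(0)},P'^{(1)})\in\mathcal M(G')$ with target set $I'$. Through Lemma~\ref{lem:inv-ci} these literals become $C$-(in)dependences in the mixture $P'^*$. Assumption~\ref{ass:change-faithfulness} then turns them into $d$-separation statements in the augmented graph $G'^*$:
\[
C\not\!\perp_d j\mid Z,\qquad C\perp_d i\mid Z,\qquad C\perp_d k\mid Z .
\]
The hypothesis places $i-j$ and $j-k$ in the common skeleton, so both adjacencies exist in $G'$. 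What remains is to orient them.

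For $i\to j$, reuse the argument of Prop.~\ref{prop:ssi-psi} with the pair $(i,j)$. Suppose instead $j\to i$ in $G'$. Take a $Z$-active path $\pi$ from $C$ to $j$ in $G'^*$. Since $j\notin Z$ (because $Z=S\setminus\{i,j,k\}$), we can append the edge $j\to i$ to $\pi$. The point $j$ then becomes a non-collider on the extended path, and it is not in $Z$, so the extended walk is $Z$-active from $C$ to $i$, contradicting $C\perp_d i\mid Z$. Two subtleties need handling here. First, if $\pi$ already passes through $i$, its prefix ending at $i$ is active (the prefix of an active path is active up to its endpoint) and gives the contradiction directly. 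Second, a $Z$-active walk implies a $Z$-active path, which is the standard reduction. Running the same argument for the pair $(k,j)$ gives $k\to j$. Hence every $G'\in[G]_{\mathrm{res}}$ contains $i\to j\leftarrow k$, both edges are compelled, and by Def.~\ref{def:Gtest} they appear directed in $\Gtest(G)$.

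The step I expect to be the main obstacle is making sure the case split on the active path is airtight. The key fact is that the final node of an active path into $j$ never needs to be a collider, whichever direction the last edge points, so extending by $j\to i$ only creates the non-collider status at $j$. I would also need $C$'s edges to point only into targets, so that $C$ is never an intermediate collider issue, and $i\notin Z$. Whether $\pi$ enters $j$ via $j\to\cdot$ or $\cdot\to j$ should not matter, because the new triple $(\text{prev},j,i)$ has $j\to i$ and so is a non-collider either way. Two degenerate cases also need checking: the direct case $C\to j$, where $j\in I'$, is covered by the same extension, and the case where $i$ or $k$ lies inside $Z$ is excluded by construction.

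Finally, I would note that unshieldedness in the local skeleton is not needed for the orientation argument itself. Its role is to guarantee that the pattern is a genuine v-structure that is not already forced by the regime PDAGs. The compelled orientation follows purely from the two single-sided contrasts.
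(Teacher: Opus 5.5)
Your proposal is correct and matches the paper's proof. Both convert the three literals into $d$-(non)separation statements in the augmented graph via Lemma~\ref{lem:inv-ci} and Assumption~\ref{ass:change-faithfulness}. Both then extend a $Z$-active path from $C$ to $j$ by the hypothetical edge $j\to i$ (resp.\ $j\to k$) to contradict $\Inv(i\mid Z)=1$ (resp.\ $\Inv(k\mid Z)=1$), and lift the result to every $G'\in[G]_{\mathrm{res}}$ by realizing the same image tuple under some model of $G'$.

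Two of your points tighten steps the paper glosses over. You handle the case where the path already visits $i$ by taking its prefix. You also observe that non-adjacency of $i$ and $k$ in $G'$ is not needed for the two compelled orientations; the hypotheses do not actually guarantee it.
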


The full proof is deferred to the Appendix (Proof for Prop.~\ref{prop:cvt-psi}). In this proof, we rule out any non-collider orientation at $j$ by showing it would create regime dependence at $j$, contradicting $\Inv(j\mid Z)=1$. We introduce an environment node $C$ so that invariance corresponds to $C$ being separated from $j$ given the witness set $Z$ (Assumption~\ref{ass:change-faithfulness}). If $i-j-k$ were oriented with a tail at $j$, then any $Z$-active connection from $C$ reaching one neighbor would extend through that tail into $j$, forcing $C$ to be $Z$-connected to $j$ and hence $\Chg(j\mid Z)=1$. Therefore $j$ must be a collider, yielding $i\to j\leftarrow k$, and the same argument applies across the restricted $\Psi$-equivalence class, so the v-structure is compelled and appears in $\Gtest(G)$.

\paragraph{Contrastive Discriminating Path} We can further
extend the logic to operate on discriminating paths(Def. ~\ref{def:candidate-dp-motif}) in subset level PDAGs. We point readers to Appendix (Lemma~\ref{lem:active-concat}, Lemma ~\ref{lem:dpt-first-mismatch}, Lemma ~\ref{lem:dpt-j-to-r}, Lemma~\ref{lem:dpt-contradiction}, Prop. ~\ref{prop:dpt-psi}) for the exact formulation and proof due to length constraints. In summary, the proof decomposes into three steps: we first show that any alternative replay assignment(Def. !\ref{def:replay-status}, Def. ~\ref{def:replay-nodes}) that changes the discriminating-path mark(Def.~\ref{def:dp-mark}) must have a \emph{first replay disagreement} node along the path (Lemma~\ref{lem:dpt-first-mismatch}); we then prove that, under some $j$ where $\Chg(j\mid Z^\pi)=1$ and the path-segment compatibility conditions, this first mismatch inherits regime dependence by concatenating an unblocked path from the regime node $C$ to $j$ with the corresponding unblocked segment of the discriminating path, yielding $C\not\!\perp_d r\mid Z^\pi$ (via Lemma~\ref{lem:active-concat} and Lemma~\ref{lem:dpt-j-to-r}); finally, Assumption~\ref{ass:change-faithfulness} and Lemma~\ref{lem:inv-ci} convert this graphical dependence into $\Chg(r\mid Z^\pi)=1$, contradicting the replay-node invariance premise $\Inv(r\mid Z^\pi)=1$ (Lemma~\ref{lem:dpt-contradiction}). The proposition then lifts this uniqueness argument from the generating DAG to every $G'\in [G]_{\mathrm{res}}$ by restricted $\Psi$-equivalence, implying that the discriminating-path mark on $(j,k)$ is compelled and therefore appears in $\Gtest(G)$.

\vspace{-3mm}
\subsection{Global Restricted $\Psi$ I-EG Estimation}
\vspace{-3mm}
In this section we show soundness of our model by: (1) global contrastive aggregation guarantee and provable gap between non-contrastive and contrastive aggregation (2) asymptotic recovery of the test-induced $\Psi$ essential graph. The full proof is given in Appendix~\ref{thm:ctr-dom-soundness}. For (1) we show that any subset aggregator without cross-regime contrasts cannot recover additional edges compelled by contrastive information, necessitating our contrastive framework. Since non-contrastive aggregators do not have access to tested invariance queries, we define a reduced estimand $\redGtest$ based only on per-regime local PDAGs on admissible subsets, and characterize a gap $R$ between this reduced estimand and the restricted test-induced $\Psi$ essential graph $\Gtest$. We formulate the following theorem:

\begin{theorem}[Contrastive aggregation guarantee and $\Gtest$-soundness]\label{thm:main-ctr-dom}
For a testing family $(\Sadm, \Tadm)$ with local PDAGs $\{ E_S^{(c)}\}_{S \in \Sadm, c \in \{ 0, 1 \}}$ and queries $\{\Inv(v \mid Z)\}_{(v, Z) \in \Tadm}$, under Assumptions~\ref{ass:change-faithfulness}, ~\ref{ass:closure-sound}, and assuming $\DirE(H_\mathrm{per}) \subseteq \DirE(\redGtest)$:
\begin{enumerate}
    \item \textbf{Monotonic enrichment:} $\DirE(H_\mathrm{obs}) \subseteq \DirE(H_\mathrm{per}) \subseteq \DirE(H_\mathrm{ctr})$.
    \item $\Gtest$\textbf{-soundness:} $\DirE(H_\mathrm{ctr}) \subseteq \DirE(\Gtest)$.
    \item \textbf{Separation:} If $R \neq \varnothing$ then $\DirE(\mathcal{A}) \subsetneq \DirE(\Gtest)$ and $\DirE(H_\mathrm{per}) \subsetneq \DirE(\Gtest)$.
\end{enumerate}
\end{theorem}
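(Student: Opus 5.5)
The plan is to fix the objects the statement leaves implicit, in the way the appendix presumably defines them. $H_{\mathrm{obs}}$ is the Meek closure of the aggregated observational-regime subset PDAGs $\{E_S^{(0)}\}$. $H_{\mathrm{per}}$ is the Meek closure of the union of directed edges from both regimes' subset PDAGs $\{E_S^{(c)}\}_{c\in\{0,1\}}$. $H_{\mathrm{ctr}}$ is the Meek closure of $H_{\mathrm{per}}$'s directed edges together with all edges oriented by the contrastive rules (Props.~\ref{prop:ssi-psi}, \ref{prop:cvt-psi}, \ref{prop:dpt-psi}). $\mathcal A$ denotes any non-contrastive subset aggregator, whose output depends only on $\{E_S^{(c)}\}$. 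The gap $R$ is $\DirE(\Gtest)\setminus\DirE(\redGtest)$. With these definitions, each part reduces to a monotonicity property of Meek closure together with the soundness statements already proved.

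\textbf{Monotonic enrichment.} I would first record that Meek closure $\Cl$ is extensive and monotone. If the directed-edge seed set $D\subseteq D'$ over a common skeleton and $\Cl(D')$ is consistent, then $\DirE(\Cl(D))\subseteq\DirE(\Cl(D'))$. This holds because every Meek rule application that fires on $D$ still fires, or is already satisfied, on the larger set. Assumption~\ref{ass:closure-sound} provides the consistency needed here. Since the seeds satisfy $D_{\mathrm{obs}}\subseteq D_{\mathrm{per}}\subseteq D_{\mathrm{ctr}}$ by construction, the chain follows. The one point to check is that the contrastive rules never propose an orientation opposite to one in $H_{\mathrm{per}}$. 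I would get this from part 2: both sets lie in $\DirE(\Gtest)$, which is a PDAG and so cannot contain both $u\to v$ and $v\to u$.

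\textbf{$\Gtest$-soundness.} Two facts place every seed edge of $H_{\mathrm{ctr}}$ in $\DirE(\Gtest)$. First, the hypothesis $\DirE(H_{\mathrm{per}})\subseteq\DirE(\redGtest)$ applies. Second, $\DirE(\redGtest)\subseteq\DirE(\Gtest)$, because $[G]_{\mathrm{res}}\subseteq[G]_{\mathrm{red}}$: the restricted images refine the reduced ones by additionally recording the invariance literals, so an edge compelled over the larger class is compelled over the smaller one. Each contrastive orientation is compelled across $[G]_{\mathrm{res}}$ by the three propositions; these rely on Assumption~\ref{ass:change-faithfulness} and Lemma~\ref{lem:inv-ci}. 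It remains to show that Meek closure preserves membership in $\DirE(\Gtest)$. $\Gtest$ is an MPDAG, so it is closed under Meek rules. Moreover, each Meek rule is sound for any class of DAGs sharing a skeleton and v-structures: an edge forced in every member by the premises stays forced. So $\Cl(\text{seeds})\subseteq\Cl(\DirE(\Gtest))=\DirE(\Gtest)$.

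\textbf{Separation.} Take $e\in R$, so $e$ is compelled in $\Gtest$ but not in $\redGtest$. Then some $G'$ in the reduced class has $e$ reversed while producing identical local PDAGs $\{E_S^{(c)}\}$. A non-contrastive $\mathcal A$ receives identical inputs under $G$ and $G'$. If $\mathcal A$ is sound with respect to the reduced class, it therefore cannot output $e$. This gives $\DirE(\mathcal A)\subseteq\DirE(\redGtest)\subsetneq\DirE(\Gtest)$, and the same argument with the stated hypothesis handles $H_{\mathrm{per}}$.

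The main obstacle is the indistinguishability step. Its precise form depends on how $\mathcal A$ is formalized, namely as soundness relative to the reduced class rather than an arbitrary function. The proof also needs the inclusion $[G]_{\mathrm{res}}\subseteq[G]_{\mathrm{red}}$ to be stated carefully from Defs.~\ref{res-psi-markov} and \ref{def:res-psi-equivalence}. I would handle both by defining non-contrastive aggregation as a map measurable in $\{E_S^{(c)}\}$ and required to be reduced-sound.
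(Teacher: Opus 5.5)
\textbf{The gap: the closure step in part 2.} Your decomposition is the paper's. In part 1 you use monotonicity of closure along nested knowledge sets. In part 2 you get $\DirE(\redGtest)\subseteq\DirE(\Gtest)$ from $[G]_{\mathrm{res}}\subseteq[G]^{\mathrm{res}}_{\Sadm}$ (the reduced image is a coordinate projection of the restricted one), the rule arrows from Propositions~\ref{prop:ssi-psi}--\ref{prop:dpt-psi}, and then apply closure. In part 3 you use the ceiling $\DirE(\redGtest)$.

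The step that does not go through as argued is the closure step. You claim $\Gtest$ is Meek-closed and that Meek's rules are sound for a class sharing skeleton and v-structures. But $[G]_{\mathrm{res}}$ is defined only through equality of images over small admissible subsets. Its members need not share a skeleton (this is why $\Gtest$'s skeleton is an intersection), nor v-structures. Suppose, say, $a\to b\leftarrow c$ and $a\to b\to c$ were both in the class. Then $\Gtest$ contains $a\to b - c$ with $a,c$ nonadjacent, and R1 orients $b\to c$, which is not compelled. So the compelled-edge PDAG of Definition~\ref{def:Gtest} need not be Meek-closed, and if you take the optional closure instead, its extra arrows need not be compelled.

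There is a second problem in either reading. The closure producing $H_{\mathrm{ctr}}$ acts on the skeleton of $H_{\mathrm{obs}}$, not on $\Gtest$'s intersection skeleton. So $\Cl(\text{seeds})\subseteq\Cl(\DirE(\Gtest))$ compares closures on different graphs.

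Nothing in the definitions supplies this fact. The paper does not try to derive it: at this point it invokes Assumption~\ref{ass:closure-sound}, which is among the theorem's hypotheses. You should do the same. Read it, as the paper implicitly does, as ``closure adds no arrow outside $\DirE(\Gtest)$'', since literally it only forbids contradicting arrows.

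\textbf{Other differences from the paper.} The paper sets $\mathcal K_{\mathrm{per}}:=\DirE(\redGtest)$, $H_{\mathrm{per}}:=\MPDAG(H_{\mathrm{obs}}\ \text{subject to}\ \mathcal K_{\mathrm{per}})$ and $H_{\mathrm{ctr}}:=\MPDAG(H_{\mathrm{obs}}\ \text{subject to}\ \mathcal K_{\mathrm{per}}\cup\mathcal K_{\mathrm{rules}})$. Its part 1 is therefore monotonicity along $\varnothing\subseteq\mathcal K_{\mathrm{per}}\subseteq\mathcal K_{\mathrm{per}}\cup\mathcal K_{\mathrm{rules}}$ rather than along your seed sets built from the local PDAGs. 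The logic is the same.

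In part 3 the paper \emph{defines} a non-contrastive aggregator by $\DirE(\mathcal A)\subseteq\DirE(H_{\mathrm{per}})$. Separation is then a two-line chain through the hypothesis $\DirE(H_{\mathrm{per}})\subseteq\DirE(\redGtest)$. Your indistinguishability argument instead derives $\DirE(\mathcal A)\subseteq\DirE(\redGtest)$ from reduced-soundness. Reduced images are equal as sets, so the realized tuple is achievable under every member of $[G]^{\mathrm{res}}_{\Sadm}$, and a sound $\mathcal A$ can output only arrows common to the whole reduced class. This is a more meaningful justification than the paper's definitional one. It is correct once you replace ``$e$ reversed in some $G'$'' by ``$e$ absent (reversed or nonadjacent) in some $G'$''.
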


Under the assumption of per-regime soundness under $\redGtest$, we show monotonic enrichment of global observational, per-regime, and contrastive MPDAGs based on monotonicity of the $\MPDAG(\cdot)$ operator on increasing knowledge sets on directed edges $\DirE(\cdot)$ of $\redGtest$ and those produced by contrastive rules. We then show soundness of the contrastive MPDAG by extension of the reduced $\Psi$-image with contrastive rules that are restricted $\Psi$-sound by Propositions 1, 2, and 3. Finally, we establish that any edge in $R$ is unrecoverable by any non-contrastive subset-respecting aggregator (Lemma~\ref{lem:noncontrastive-impossible}, Corollary~\ref{cor:red-ident}), since the recovery of maximal sound directed-edges from per-regime information alone is bounded by $\redGtest$. 

Next we formulate the theorem for asymptotic recovery of $\Gtest$.

\begin{theorem}[Consistency for the test-induced restricted $\Psi$ essential graph]\label{thm:main-psi}
Let $\widehat G$ be the model output after $T$ sampled subsets:
(i) estimate $\widehat E_S^{(c)}$ on each sampled $S$ and each regime $c$,
(ii) estimate $\hInv(v\mid Z)$ for tested $(v,Z)$ pairs on sampled witnesses,
(iii) apply orientation rules using these estimates, and
(iv) aggregate all inferred constraints with closure $\Cl$ to output $\widehat G$.
Under Assumptions~\ref{ass:finite-scope}--\ref{ass:closure-sound} and witness coverage
(Assumption~\ref{ass:witness-coverage-psi}), as $n,T\to\infty$,
\[
\Prb\!\left(\widehat G = \Gtest\right)\ \to\ 1.
\]  
\end{theorem}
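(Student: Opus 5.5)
The plan is the standard ``consistent plug-in plus finitely many events'' argument. It combines the population-level soundness of Theorem~\ref{thm:main-ctr-dom} with a completeness statement supplied by witness coverage, and then lets $n,T\to\infty$. Write $\widehat G_T$ for the output and $G_T$ for the same pipeline run on population quantities, that is, oracle $E_S^{(c)}$ and $\Inv(v\mid Z)$ on the same sampled subsets. Then
\[
\Prb(\widehat G\neq \Gtest)\le \Prb(\widehat G_T\neq G_T)+\Prb(G_T\neq \Gtest),
\]
and I will bound the two terms separately.

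\emph{Step 1 (estimation error).} By the finite-scope assumption (Assumption~\ref{ass:finite-scope}), $\Sadm$ and $\Tadm$ are finite, so only finitely many distinct queries can ever be issued, regardless of $T$. For each $S$ and $c$, the local learner is assumed consistent: $\Prb(\widehat E_S^{(c)}\neq E_S^{(c)})\to 0$ as $n\to\infty$. For each $(v,Z)\in\Tadm$, the invariance test is assumed to have vanishing type I and type II error: $\Prb(\hInv(v\mid Z)\neq \Inv(v\mid Z))\to 0$. Type II consistency uses regime-faithfulness (Assumption~\ref{ass:change-faithfulness}) together with Lemma~\ref{lem:inv-ci}, so that $\Chg=1$ is a genuine distributional difference. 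A union bound over this finite query set shows that, with probability tending to one, every estimated input equals its population value. The rule application in step (iii) and the closure $\Cl$ in step (iv) are deterministic functions of these inputs. Hence $\widehat G_T=G_T$ on that event, uniformly in $T$, so the first term vanishes.

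\emph{Step 2 (population pipeline equals $\Gtest$).} Soundness follows from part 2 of Theorem~\ref{thm:main-ctr-dom} and Propositions~\ref{prop:ssi-psi}--\ref{prop:dpt-psi} under closure soundness (Assumption~\ref{ass:closure-sound}). Together they give $\DirE(G_T)\subseteq\DirE(\Gtest)$ and, by the same adjacency argument, skeleton agreement. Completeness is the main obstacle, since nothing proved so far says that every edge compelled over $[G]_{\mathrm{res}}$ is produced. I would invoke witness coverage (Assumption~\ref{ass:witness-coverage-psi}) in the form: for each directed edge of $\Gtest$, there is a finite collection of subsets $S$ (with associated witnesses $Z$) whose local PDAG orientations or contrastive rule firings, followed by $\Cl$, force that edge. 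I would also argue that skeleton edges are resolved through subset PDAGs covering each pair. Since $\Sadm$ is finite, this gives a finite ``certificate'' family $\mathcal C\subseteq\Sadm$. If this reading of witness coverage is not available, I would instead define $\Gtest$ as the closure of all population constraints over all of $\Sadm$ and $\Tadm$, so that equality reduces to monotonicity of $\Cl$ plus soundness.

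\emph{Step 3 (sampling).} Assume each $S\in\mathcal C$ is drawn with probability at least $p_{\min}>0$ per round, which is part of the coverage assumption. Then a union bound gives
\[
\Prb(\mathcal C\not\subseteq\text{sampled sets})\le |\mathcal C|(1-p_{\min})^T\to 0.
\]
On the event that $\mathcal C$ is fully sampled, monotonicity of $\Cl$ (already used for part 1 of Theorem~\ref{thm:main-ctr-dom}) gives $\DirE(\Gtest)\subseteq\DirE(G_T)$. Combining this with soundness gives $G_T=\Gtest$, so the second term vanishes. Adding the two bounds yields $\Prb(\widehat G=\Gtest)\to 1$ as $n,T\to\infty$, jointly and in any order, because the Step 1 bound is uniform in $T$.
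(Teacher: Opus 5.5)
Your proposal is correct and follows essentially the paper's route: a union bound over the finite families $\Sadm,\Tadm$ gives the event $\mathcal E_n$ on which all estimated inputs are correct, soundness comes from part 2 of Theorem~\ref{thm:main-ctr-dom}, and completeness comes from witness coverage via a $(1-\pi)^T\to 0$ tail bound plus a union bound over the finitely many compelled edges. Your split through the oracle pipeline $G_T$ only repackages the paper's argument on $\mathcal E_n\cap\{L_e\ge 1\}$, and the paper's Assumption~\ref{ass:witness-coverage-psi} (each witness subset recovers the edge directly, and the witness family has positive mass, which with finite $\Sadm$ gives some individual subset positive mass) already supplies the certificate family you need in Step~3.
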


The full proof is given in the Appendix (\ref{thm:main-psi}). In this proof, we show that as $n \to \infty$, $\Prb(\mathcal E_n) \to 1$, where $\mathcal E_n$ is the event that all invoked local PDAGs and invariance tests used by the model are correct over the finite admissible families. On $\mathcal E_n$, since the empirical restricted $\Psi$-image coincides with the population restricted $\Psi$-image, the estimated graph $\widehat G$ lies in the same restricted $\Psi$-equivalence class as $G$. By $\Gtest$-soundness, this implies $\DirE(\widehat G) \subseteq \DirE(\Gtest)$. For any compelled edge $e \in \Gtest$, witness coverage ensures it is sampled with probability $\to 1$. Therefore, each compelled edge is inserted into the aggregation at the population level. By closure soundness, no incorrect orientations are introduced. Applying a union bound over the finite edge set, yields $\Pr(\widehat G = \Gtest) \to 1$ as $n, T \to \infty$.

\section{Model Architecture}
\vspace{-2mm}
The SCONE architecture operates using two streams: a marginal stream processing edge-level representations from admissible subsets, and a global stream maintaining dense representations over all ordered node pairs. The architecture integrates classical causal discovery, contrastive orientation bias heads, and axial attention for scalable aggregation.

The method that is most similar to our architecture is SEA~\cite{wu2024sample}; however, our architecture remains significantly distinctive in the following ways.

\subsection{Constructing marginal causal structures}
\vspace{-3mm}
\label{marginal}
Let $V = \{ 1, ..., N\}$ be the nodes. We form $T$ admissible subsets $S_1, ..., S_T \subseteq V$ and fix a global catalog of undirected edges $e_k = \{ i, j\}$ for $k = 1,..., K$ unique unordered pairs. We consider two regimes $c \in \{ 0, 1\}^N$.

\vspace{-3mm}
\subsubsection{Sampling admissible subsets}
\vspace{-3mm}
We construct admissible subsets via a greedy sampling strategy that prioritizes soft intervention targets while ensuring graph coverage. Node sensitivity scores $s_i \in [0,1]$ combine standardized mean shift across regimes and mutual information with the regime label. Pairwise contrast scores 
\begin{equation}
\Gamma_{i,j} = |\mathrm{corr}^{(1)}(X_i,X_j) - \mathrm{corr}^{(0)}(X_i,X_j)|
\end{equation}
capture cross-regime correlation shifts. Both are decayed by visit frequency, $n_{i,j}^t$, for subset $t$ to discourage resampling:
\begin{equation}
\alpha^{\,t}_{i,j} = \alpha_{i,j}/q^{\,n^{\,t}_{i,j}}, \qquad \Gamma^{\,t}_{i,j} = \Gamma_{i,j}/r^{\,n^{\,t}_{i,j}},
\end{equation}
where $q, r > 1$ control exploration-exploitation. A seed node is drawn categorically, then nodes are added greedily by scoring candidates on pairwise relevance $\alpha^{\,t}_{i,j}$, sensitivity $s_j$, mean contrast $\Gamma^{\,t}_{i,j}$, and a coverage bonus. Visit and pair counts are updated after each subset is formed (full equations in Appendix~\ref{appendix:arch-sampling}).

\vspace{-3mm}
\subsubsection{Classical causal discovery ensemble}
\vspace{-3mm}
The ensemble causal discovery module combines multiple base learners to produce robust edge votes over four classes: $\{i\!\to\!j,\ j\!\to\!i,\ i-j,\ \text{no-edge}\}$ for each unordered pair $\{i,j\}$. We primarily use PolyBIC as a pairwise functional learner (and provide the option to use GES-BIC~\cite{} as a structure-based learner), which orients edges by comparing temperature scaled BIC scores of polynomial regressions to weight either direction.
Bootstrap replicates $\{x^{(r)}\}_{r=1}^{R_\mathrm{boot}}$ are drawn by resampling $S$ observations with replacement. Each replicate yields scores decoded into one-hot votes via a threshold $\tau_\ell$ and tie-breaking margin $\epsilon$, weighted vote counts are accumulated across learners and replicates, and the final edge class is determined by majority vote with no-edge as the tie-breaking default (full details in Appendix~\ref{appendix:arch-ensemble}).

\vspace{-3mm}
\subsubsection{Edge tokenization}
\vspace{-3mm}
For each sampled subset $S_t$, the ensemble CD module runs independently on both regimes to produce local PDAGs, whose edges are collected and deduplicated into the candidate edge set $K_\mathrm{cand}$ of $K$ edges. Each edge $e=\{i,j\}$ is encoded with a 6-dimensional endpoint encoding, one-hot node identifiers, and per-regime pairwise statistics $\phi_{ij}^{(c)} \in \mathbb{R}^4$ — containing correlation, partial correlation, and pairwise regression coefficients — forming $x_{e,c}$. These are projected to dimension $d$ and augmented with sinusoidal positional embeddings to yield $H_E \in \mathbb{R}^{B \times T \times K \times 2 \times d}$:
\begin{equation}
H_E[t, e, c, :] = \mathrm{MLP}(x_{e,c}) + \mathrm{PE}_T(t) + \mathrm{PE}_K(k).
\end{equation}
In parallel, a global stream $H_{\rho} \in \mathbb{R}^{B \times N \times N \times d}$ is constructed from the inverse covariance matrix averaged across regimes, projected to dimension $d$ and summed with learned node positional embeddings (full details in Appendix~\ref{appendix:arch-tokenization}).

\vspace{-3mm}
\paragraph{Invariant/contrast reparametrization}
Edge embeddings are then reparameterized into invariant and contrast channels $z_{\text{avg}} = \frac{1}{2}(H_E^{(1)} + H_E^{(0)})$ and $z_{\Delta} = H_E^{(1)} - H_E^{(0)}$, which replace the original regime embeddings as the two operative channels and forms $\widetilde{H}_E$. This bijective reparameterization disentangles regime-invariant structure from regime-specific variation, sharpening the contrast signal available to the orientation rules.

\vspace{-3mm}
\subsection{Bias heads for contrastive orientation}
\vspace{-3mm}
\label{bias-heads}
Each contrastive orientation rule produces signed biases over the three orientation classes $\{i\!\to\!j,\ j\!\to\!i,\ i - j\}$ for each subset $S_t$. To quantify how much a node's mechanism shifts across regimes, we define an invariance score
\begin{equation}
\gamma_v = \mathrm{Norm}\!\left(\sum_{e\ni v}\big\|\Delta_{e}\big\|\right)\in[0,1],
\quad g_v := 1-\gamma_v,
\label{eq:inv-score}
\end{equation}
where $\Delta_{ij} = \widetilde H_E[t,\{i,j\},1,:]-\widetilde H_E[t,\{i,j\},0,:]$ is the regime contrast for edge $e=\{i,j\}$, and $g_v$ denotes its complement. Nodes with high $g_v$ are more likely regime-invariant (non-targets) and thus provide stronger orientation signal.

\vspace{-3mm}
\subsubsection{Single-Sided Invariance (SSI)}
\vspace{-3mm}
For each edge $\{i,j\}$, parent contexts $\mathbf{p}_i, \mathbf{p}_j \in \mathbb{R}^d$ are formed by mean-pooling embeddings of incident candidate edges appearing in at least a fraction $\rho$ of subsets. A shared predictor $\psi$ maps each endpoint embedding and its parent context to a sufficient-statistic vector $\phi^{(c)}_v = \psi([\widetilde{H}_E[t,\{v,\cdot\},c,:]\ \|\ \mathbf{p}_v\ \|\ \mathbf{m}_v])$, and $\delta_v = \|\phi^{(1)}_v - \phi^{(0)}_v\|$ measures the cross-environment shift. A bias toward $i\!\to\!j$ is applied when $j$ shifts more than $i$:
\begin{equation}
\begin{aligned}
b^{\text{SSI}}_{t,\{i,j\},c}(i\!\to\! j) \mathrel{+}= \kappa\cdot\mathrm{clip}(\delta_j-\delta_i,-\tau,\tau),
\\
b^{\text{SSI}}_{t,\{i,j\},c}(j\!\to\! i) \mathrel{-}= \kappa\cdot\mathrm{clip}(\delta_j-\delta_i,-\tau,\tau).
\end{aligned}
\end{equation}
gated by $\gamma_j \geq \gamma_i + \eta$ and $\gamma_i \leq \gamma_{\max}$, ensuring the bias fires only when $j$ is the more perturbed endpoint (full details in Appendix~\ref{appendix:arch-ssi}).

\vspace{-3mm}
\subsubsection{Contrastive V-structure (CVT)}
\vspace{-3mm}
For each unshielded triple $(i,j,k)$ in $S_t$, we score the likelihood of a collider $i\!\to\! j\!\leftarrow\! k$ using edge embeddings and cross-regime contrast statistics $\Delta_{ij}, \Delta_{kj}$:
\begin{equation}
\small
\begin{aligned}
s^{(1)}_{\mathrm{vstr}}(i\!\to\! j\!\leftarrow\! k) = 
\sigma\!\Big(&u^\top\big[\widetilde H_E[t,\{i,j\},1,:]\ \|\ \\
&\widetilde H_E[t,\{k,j\},1,:]\
\|\
\Delta_{ij}\ \|\ \Delta_{kj}\big]\Big)
\end{aligned}
\end{equation}
where $\sigma$ is an activation and $u$ is learned. Scaled by the invariance score $g_j$, this score applies a positive bias toward arrowheads at $j$ on both incident edges, promoting the collider orientation while penalizing the reverse directions (full details in Appendix~\ref{appendix:arch-cvt}).

\vspace{-3mm}
\subsubsection{Contrastive Discriminating Path (DPT)}
\vspace{-3mm}
Let $\mathcal{G}_t$ be the \textit{edge-graph} of subset $S_t$, where nodes represent edges and adjacency encodes shared vertices. For each edge $\{i,j\}$, the top-$B$ short paths $\pi=(e_0,\dots,e_m)$ of length $m\le L$ are enumerated via BFS. Each path is encoded independently per environment via a DeepSets-style aggregator $\mathrm{DeepSet}(\cdot)$ over edge embeddings and cross-environment contrasts, producing per-environment scores $s^{(c)}_\pi \in [0,1]$. The higher-confidence environment $c^* = \arg\max_c\, s^{(c)}_\pi$ provides the orientation signal, which is applied to the other environment $\bar{c} = 1 - c^*$:
\begin{equation}
\begin{aligned}
b^{\text{DPT}}_{t,\{i,j\},\bar{c}}(i\!\to\! j) \mathrel{+}= \lambda\, s^{(c^*)}_\pi,\\
b^{\text{DPT}}_{t,\{i,j\},\bar{c}}(j\!\to\! i) \mathrel{-}= \lambda\, s^{(c^*)}_\pi,
\end{aligned}
\end{equation}
aggregated across candidate paths (full details in Appendix~\ref{appendix:arch-dpt}).

\subsection{Aggregation model architecture}
\label{aggregation}
\vspace{-3mm}
SCONE maintains two streams: a $\emph{marginal}$ stream $H_E \in \mathbb{R}^{T \times K \times 2 \times d}$ over edge tokens from sampled subsets, and a \emph{global} stream $H_\rho \in \mathbb{R}^{N \times N \times d}$ over all ordered node pairs. Edge embeddings $H_E$ are first reparametrized to $\widetilde{H}_E$ and the two streams are coupled across layers via message passing (full details in Appendix~\ref{appendix:arch-aggregator}).

\vspace{-3mm}
\subsubsection{Axial Aggregator}
\vspace{-3mm}
\paragraph{Marginal axial block.}
Each block applies axial attention along $T$ and $K$, followed by a feedforward layer with pre-LN and residual connections. Contrastive bias heads (CVT, SSI, DPT) are applied after attention and gated by per-edge invariance scores $\gamma_v$ before being projected back into the embedding space.

\vspace{-3mm}
\paragraph{Message Passing.}
For marginal to global update, at each layer, edge embeddings are pooled across subsets via learned attention weights $w_t$, producing a per-edge message $m_e \in \mathbb{R}^d$. These are scattered into a global node-pair table by writing $H_\rho[i,j,:] \leftarrow m_e$ for each candidate edge $e = \{i,j\}$, with unobserved pairs defaulting to zero. In the global block, the populated node-pair table undergoes row- and column-wise axial attention over the $N \times N$ grid, followed by a position-wise feedforward layer, enabling global relational reasoning across all node pairs. For the global to marginal update, the updated global representations are broadcast back to the marginal stream by adding the corresponding node-pair embedding to each edge token:
\begin{equation}
\widetilde{H}_E[t, e, :, :] \leftarrow H_E[t, e, :, :] + H_\rho[i, j, :]
\end{equation}
This closes the local-global loop, ensuring subset-level edge representations are informed by full graph context at each layer.

\vspace{-3mm}
\subsubsection{Global head}
\vspace{-3mm}
Since candidate edges may not capture all edges required for orientation, an auxiliary global head applies a linear projection to $H_\rho$ to predict edge classes over all ordered node pairs, providing supervision beyond the candidate set. Its loss $\mathcal{L}_{\text{global}}$ is computed over all positive edges and a sampled subset of negatives, weighted by $\lambda_{\text{global}}$ in the total objective, and its logits can optionally be fused with marginal logits for non-candidate edges at inference. Crucially, this auxiliary signal encourages consistency between local subset-level predictions and global pairwise representations (full details in Appendix~\ref{appendix:arch-global-head}.

\vspace{-3mm}
\subsubsection{Optimization objective}
\vspace{-3mm}
The model is trained with a cross-entropy loss over candidate edges with labels $\{u\!\to\!v,\ v\!\to\!u,\ \text{no-edge}\}$, supplemented by a soft SHD penalty and a pairwise ranking loss:
\begin{equation}
\mathcal{L} = \mathcal{L}_{\text{CE}} + \lambda_{\text{SHD}}\,\mathcal{L}_{\text{SHD}} + \lambda_{\text{rank}}\,\mathcal{L}_{\text{rank}} + \lambda_{\text{global}}\,\mathcal{L}_{\text{global}},
\end{equation}
where $\mathcal{L}_{\text{SHD}}$ penalizes the squared deviation of predicted edge probabilities from ground truth, $\mathcal{L}_{\text{rank}}$ is a pairwise hinge loss ranking true directed edges above non-edges, and $\mathcal{L}_{\text{global}}$ is the auxiliary global head loss over all node pairs. A gate regularization term encourages contrastive bias heads to remain active throughout training, preventing collapse to per-regime local orientations. The model is optimized with AdamW with a warmup and cosine learning rate schedule, and EMA weights are used at evaluation (full details in Appendix~\ref{appendix:arch-opt}).

\section{Empirical Results}
\vspace{-2mm}
We evaluate SCONE on synthetic soft-intervention datasets with unknown intervention targets across different experimental settings, comparing against state-of-the-art baselines. Details of synthetic data generation, evaluation metrics, baseline configurations and extended results are provided in Appendices ~\ref{appendix:datagen}, ~\ref{appendix:metrics}, and ~\ref{appendix:fullbench}.

\vspace{-3mm}
\subsection{Baseline Models}
\vspace{-3mm}
We consider two types of baseline models: (1) scalable CD learners with out-of-distribution generalization capability, (2) CD learners that need to learn on data generated by individual causal graphs. Baseline models in (2) are required to be trained on testing datasets, hence, corresponding to a less demanding setting. Importantly, none of these baselines are designed for unknown soft interventions. Methods supporting interventional data are trained on interventional settings with known intervention targets.

\vspace{-3mm}
\paragraph{In-distribution CD learners} \textbf{DCDI} learns a nonlinear structural equation model by jointly optimizing the adjacency matrix and neural mechanisms under a differentiable acyclicity constraint. \textbf{NOTEARS-NONLINEAR} applies continuous optimization framework to nonlinear functions, enforcing DAG structure using a smooth acyclicity constraint. \textbf{DCD-FG} scales DCDI by factorizing the adjacency matrix into low-rank components. 

\vspace{-3mm}
\paragraph{Out-of-distribution generalization capable learners}
\textbf{AVICI} performs amortized inference over causal graphs, learning to approximate $P(G|D)$ across a distribution of data-generating processes. \textbf{SEA} samples variable subsets, runs a standard CD algorithm on each subgraph (FCI or GIES), and aggregates local predictions via axial-attention, incorporating global statistics to produce a full-graph estimate. 

\vspace{-3mm}
\subsection{Results}
\vspace{-3mm}
Extended results for all settings are given in Appendix~\ref{appendix:extresults}. Note that we exclude NOTEARS and DCDI from the 50-node and 100-node experiments since they do not scale to larger graphs. On a single NVIDIA A100 (40GB), runtime per 20-node 20-edge graph ranged from 2 to 96 hours for NOTEARS and 5 to 72 hours for DCDI, depending on initialization and convergence behavior. At 50 and 100 nodes, both methods exceed practical time limit.

\begin{figure}[H]
    \centering
    \includegraphics[width=\linewidth]{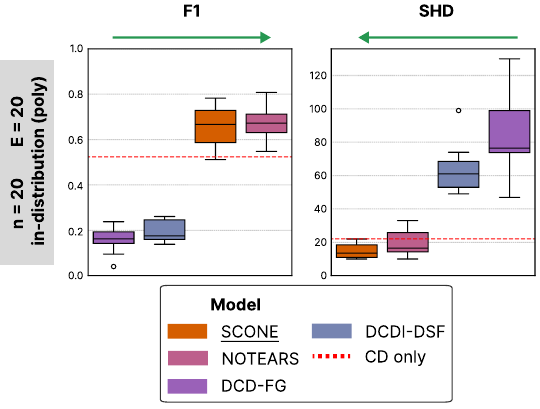}
    \caption{F1 (harmonic mean of precision and recall) and SHD (structural hamming distance) for state-of-the-art methods that only perform in distribution graph prediction on 20 nodes 20 edges. 10 graphs were generated using the polynomial mechanism. The horizontal red dashed line indicates mean across 10 graphs on predictions from running classic causal discovery baseline.}
    \vspace{-5mm}
    \label{fig:indist-results}
\end{figure}

\begin{figure}[H]
    \centering
    \includegraphics[width=\linewidth]{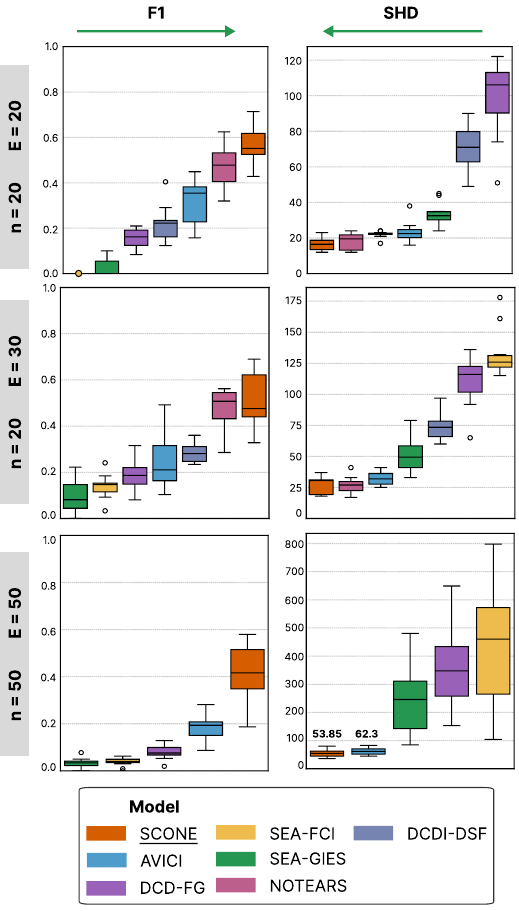}
    \caption{F1 and SHD for state-of-the-art methods compared to SCONE on predictions over 10 graphs with 20 nodes 20 edges, 20 nodes 30 edges, and 20 graphs for 50 nodes 50 edges. SEA-FCI obtains an F1 score of 0.0 in the 20 node 20 edge example, and is represented by a circle on the bottom of the axis.}
    \vspace{-5mm}
    \label{fig:ood_results}
\end{figure}

\vspace{-3mm}
\subsubsection{In-distribution Benchmarking}
\vspace{-3mm}
We evaluate all methods in-distribution, where train, validation, and test data are generated using the Polynomial mechanism only, averaged over 10 graphs with 20 nodes and 20 edges. SCONE achieves the lowest SHD ($14.6 \pm 4.4$) across all baselines (Figure~\ref{fig:indist-results}), indicating improved structural recovery when training and test mechanisms match (extended results in Appendix: Figure~\ref{fig:in-dist-appendix}, Table~\ref{tab:results_poly_indist}).

\vspace{-3mm}
\subsubsection{Out-of-distribution Generalization Benchmarking}
\vspace{-3mm}
We next evaluate out-of-distribution generalization by training on Linear, NN, and NN Additive mechanisms and testing on unseen types (Polynomial, Sigmoid), with varying graph size and edge density to assess robustness across sparsity regimes. SCONE consistently outperforms both in-distribution learners and out-of-distribution baselines, with lower SHD and higher F1 across all evaluated regimes (Figure~\ref{fig:ood_results}), demonstrating strong generalization to unseen causal mechanisms (extended results in Appendix: Figure~\ref{fig:ood-appendix}, Table~\ref{tab:results_main}).

\vspace{-3mm}
\subsubsection{Scalability to Larger Graphs}
\vspace{-3mm}
We evaluate scalability on graphs with 100 nodes and 100 edges. SCONE achieves the lowest SHD and highest F1 at this scale (Table~\ref{tab:resultsscale}), maintaining stable performance as graph size increases, while DCD-FG and SEA variants exhibit significant SHD increase (extended results in Table~\ref{tab:resultsscale-appendix}).

\begin{table}[H]
\centering
\begin{tabular}{lcc}
\toprule
Model & SHD $\downarrow$ & F1 $\uparrow$ \\
\midrule
\textsc{\textbf{Scone}}      & $\textbf{126.7} \pm 25.0$ & $\textbf{0.237} \pm 0.046$ \\
\textsc{Avici}      & $131.3 \pm 18.0$ & $0.100 \pm 0.042$ \\
\textsc{Dcd-Fg}     & $1001.3 \pm 537.9$ & $0.052 \pm 0.011$ \\
\textsc{Sea} (FCI)  & $3178.8 \pm 425.2$ & $0.022 \pm 0.004$ \\
\textsc{Sea} (GIES) & $1797.2 \pm 296.6$ & $0.022 \pm 0.004$ \\
\bottomrule
\end{tabular}
\caption{Comparison of causal discovery methods on graphs with $N=100$ nodes and $E=100$ edges. Results are averaged over 20 graphs (mean $\pm$ std).}
\vspace{-5mm}
\label{tab:resultsscale}
\end{table}

\vspace{-3mm}
\subsubsection{Ablation Study}
\vspace{-3mm}
In our model architecture (Appendix~\ref{appendix::arch}), contrastive orientation rules are implemented as learned biases from independent modules (bias heads) that encode the corresponding structural motifs using cross-regime contrast signals. We also include multiple contrastive features -- such as reparametrization, contrastive sampling, and edge embedding pairwise statistics -- as proxies for cross-regime invariance. Hence, we provide ablation study to empirically validate these theoretical claims.
\paragraph{Ablating Bias Head}
We first ablate the bias heads to empirically validate their contribution to orientation accuracy (Table ~\ref{tab:resultsabl-sconenb}). Removing the bias heads (\textsc{SCONE-NB}) increases SHD and reduces F1, which indicates that contrastive rules aligned additional true edges. We also observed larger gaps at 50 nodes, indicating increasing importance of contrastive orientation in larger graphs (extended results in Table~\ref{tab:resultsabl-sconenbapp}).

\vspace{-3mm}
\paragraph{Ablating Contrastive Features} We ablate the contrastive features to empirically validate that invariance/contrast is necessary to learn additional edges in our setting (Table ~\ref{tab:resultsabl-sconenc}). Removing these features (\textsc{SCONE-NC}) significantly increases SHD and reduces F1 across graph sizes (extended results in Table~\ref{tab:resultsabl-sconencapp}), demonstrating the necessity of contrastive invariance literals for accurate causal structure recovery.

\begin{table}[H]
\centering
\begin{tabular}{lllcc}
\toprule
$N$ & $E$ & Model & SHD $\downarrow$ & F1 $\uparrow$ \\
\midrule
& & \textsc{\textbf{Scone}} & $\textbf{39.0} \pm 9.7$ & $\textbf{0.469} \pm 0.143$ \\
\raisebox{1.5ex}{20} & \raisebox{1.5ex}{40} & \textsc{Scone-NB} & $39.6 \pm 9.0$ & $0.453 \pm 0.123$ \\
\midrule
& & \textsc{\textbf{Scone}} & \textbf{53.85} $\pm$ 11.60 & \textbf{0.424} $\pm$ 0.102 \\
\raisebox{1.5ex}{50} & \raisebox{1.5ex}{50} & \textsc{Scone-NB} & 56.60 $\pm$ 10.79 & 0.393 $\pm$ 0.092 \\
\bottomrule
\end{tabular}
\caption{Comparison of SCONE with and without using orientation rules. Results are averaged over 20 graphs (mean $\pm$ std). SCONE-NB is the model with bias head ablated and thus do not incur rules.}
\vspace{-5mm}
\label{tab:resultsabl-sconenb}
\end{table}

\begin{table}[H]
\centering
\begin{tabular}{lllcc}
\toprule
$N$ & $E$ & Model & SHD $\downarrow$ & F1 $\uparrow$ \\
\midrule
& & \textsc{\textbf{Scone}} & $\textbf{39.0} \pm 9.7$ & $\textbf{0.469} \pm 0.143$ \\
\raisebox{1.5ex}{20} & \raisebox{1.5ex}{40} & \textsc{Scone-NC} & $48.6 \pm 9.3$ & $0.341 \pm 0.136$ \\
\midrule
& & \textsc{\textbf{Scone}} & $\textbf{53.85} \pm 11.60$ & $\textbf{0.424} \pm 0.102$ \\
\raisebox{1.5ex}{50} & \raisebox{1.5ex}{50} & \textsc{Scone-NC} & $66.55 \pm 9.1$ & $0.322 \pm 0.076$ \\
\bottomrule
\end{tabular}
\caption{Comparison of SCONE with and without using contrastive features. Results for $N=20$ are averaged over 10 graphs and $N=50$ over 20 graphs (mean $\pm$ std). \textsc{Scone-NC} is the model with contrastive features ablated.}
\vspace{-5mm}
\label{tab:resultsabl-sconenc}
\end{table}

\section{Conclusion}
\vspace{-2mm}
We introduce \textsc{SCONE}, a scalable contrastive framework for causal discovery from two regimes with unknown soft interventions, targeting the restricted-$\Psi$ essential graph induced by available subset-level PDAGs and admissible invariance tests. We formalize restricted $\Psi$-equivalence and its essential-graph estimand, prove the contrastive rules (SSI, CVT, DPT) are restricted-$\Psi$-sound and only orient edges compelled over the restricted class, and establish a strict separation from non-contrastive subset-respecting aggregators. Under local PDAG/invariance-test consistency, regime faithfulness on tested witnesses, and witness coverage, \textsc{SCONE} asymptotically recovers the restricted-$\Psi$ essential graph. Empirically, it improves structural recovery across mechanism shift, density, and scale, with ablations showing contrastive bias heads and invariance/contrast reparameterization are necessary for orientations beyond per-regime information. Limitations include dependence on faithfulness and positive witness mass, restriction to two regimes, and an information ceiling from the admissible subset/test families; extending to richer interventions, adaptive witness selection, and imperfect regimes is future work.

%\begin{contributions} % will be removed in pdf for initial submission 
					  % (without ‘accepted’ option in \documentclass)
                      % so you can already fill it to test with the
                      % ‘accepted’ class option
    %Briefly list author contributions. 
    %This is a nice way of making clear who did what and %to give proper credit.
    %This section is optional.

    %H.~Q.~Bovik conceived the idea and wrote the paper.
    %Coauthor One created the code.
    %Coauthor Two created the figures.
%\end{contributions}

\section*{Competing Interests}
M.Z., K.D., S.K., E.A. are inventors on a provisional patent application filed by the Trustees of Columbia University in the City of New York.

\begin{acknowledgements} % will be removed in pdf for initial submission,
						 % (without ‘accepted’ option in \documentclass)
                         % so you can already fill it to test with the
                         % ‘accepted’ class option
  We thank David Blei and Aaron Zweig for helpful discussions and valuable feedback on the project’s core ideas. This work was supported by the NIH NHGRI grant R01HG012875, and grant number 2022-253560 from the Chan Zuckerberg Initiative DAF, an advised fund of Silicon Valley Community Foundation.
\end{acknowledgements}

% References
\bibliography{uai2026-template}

\newpage

\onecolumn

\title{Scalable Contrastive Causal Discovery under Unknown Soft Interventions\\(Supplementary Material)}
\maketitle
\appendix
\section{Background}
\subsection{$\Psi$-Markov and $\Psi$-FCI}
Learning from soft interventions with \emph{unknown} targets is formalized via the \emph{$\Psi$-Markov property}, which couples (i) standard within-regime Markov/CI constraints with (ii) \emph{cross-regime invariance} constraints whose premises depend on the symmetric difference of (unknown) intervention targets. Concretely, for a causal graph $D$ and a tuple of intervention targets $I$, $\Psi$-Markov requires:
(a) $Y \perp\!\!\!\perp Z \mid W$ in $D \Rightarrow P_i(y\mid w,z)=P_i(y\mid w)$ (within-regime CIs);
(b) $Y \perp\!\!\!\perp K \mid W\setminus W_K$ in a suitable mutilated graph (where $K=I_i\Delta I_j$) $\Rightarrow P_i(y\mid w)=P_j(y\mid w)$ (cross-regime invariances).
$\Psi$-Markov equivalence is defined at the level of \emph{pairs} $\langle D,I\rangle$ via equality of the induced set of admissible distribution tuples, and $\Psi$-FCI is shown sound and complete for the resulting $\Psi$-Markov equivalence class (i.e., maximally informative in the population limit).

\subsection{Contributions}
We focus on extending the theory to a scalable and subset-restricted pipeline under 2 regimes. Our specific theortical contributions are the following:

\subsubsection{An Estimand for Scalable Subset-restricted Learning}
$\Psi$-FCI targets the $\Psi$-equivalence class under a global oracle and outputs a $\Psi$-PAG/I-MAG object. We define a model-matched estimand: the essential graph induced by exactly the subset-level PDAGs and cross-regime invariances we can reliably estimate at scale with our model(Definitions \ref{res-psi-markov}, \ref{def:res-psi-equivalence}, \ref{def:Gtest}). The model only ever sees local PDAGs and a finite invariance query set and only estimates partial orientation that is identifiable given those evidence.

\subsubsection{Restricted-$\Psi$ Sound Contrastive Orientation Rules}
Under the definition of our estimand, we prove that the contrastive orientation rules are restricted-$\Psi$ equivalence sound and only orient edges compelled in the restricted-$\Psi$ essential graph(Propositions ~\ref{prop:ssi-psi}, \ref{prop:cvt-psi}, \ref{prop:dpt-psi}). The rules are specific to local witnesses and subset aggregation, and the soundness proof is correspondingly novel since existing alignment rules is proved by augmenting global graphs.

\subsubsection{Necessity of Contrasting Across Regimes}
To showcase the necessity of our contrastive framework, we establish strict separation between our formulation and a subset aggregator without contrastive structures: Any subset aggregator that does not use cross-regime contrasts cannot orient a non-vanishing fraction of edges that are compelled by our orientation(Lemma \ref{lem:noncontrastive-impossible}). We also demonstrate additional aggregation guarantees that ensures model soundness(Theorem \ref{thm:ctr-dom-soundness}). 

\subsubsection{Asymptotic Recovery of the Estimand}
We lastly show that we asymptotically recover our estimand which is the test-induced restricted $\Psi$ essential graph(Theorem ~\ref{thm:main-psi}). This also serves as an identifiability guarantee(under the subset restricted setting) of our model. 

%%%%%%%%%%%%%%%%%%%%%%%%%%%%%%%%%%%%%%%%%%%%%%%%%%%%%%%%%%%%%%%%%%%%%%%%%%%%
\section{Setup}
\subsection{Two-regime data and shared DAG.}
Let $V=\{1,\dots,p\}$ index observed variables $X_V$. We observe two regimes $c\in\{0,1\}$ with distributions
$P^{(c)}$ on $X_V$. Assume there exists a DAG $G=(V,E)$ such that each $P^{(c)}$ is Markov and faithful to $G$ and
the two regimes share the same graph $G$ while allowing mechanism changes across regimes at an unknown subset of
nodes.

\subsection{Operator and Notations}\label{appendix::ops}
\paragraph{Population conditional invariance.}
Let $V$ index the observed variables and let $P^{(0)}$ and $P^{(1)}$ denote the joint distributions over $X_V$
in regimes $0$ and $1$, respectively. For $v\in V$ and $Z\subseteq V\setminus\{v\}$, define the population invariance
indicator
\[
\Inv(v\mid Z)
\ :=\
\ind\!\Big\{
P^{(0)}(X_v\mid X_Z)=P^{(1)}(X_v\mid X_Z)
\Big\},
\qquad
\Chg(v\mid Z)\ :=\ 1-\Inv(v\mid Z).
\]
Here equality represents $P^{(0)}(X_v\mid X_Z=z)=P^{(1)}(X_v\mid X_Z=z)$ for almost every $z$.
The model uses estimators $\hInv(v\mid Z)$ and $\hChg(v\mid Z):=1-\hInv(v\mid Z)$.

\paragraph{Admissible subsets.}
Let $V$ index the observed variables. We define a nonempty family $\Sadm \subseteq 2^V$ of \emph{admissible subsets}.
Intuitively, $\Sadm$ specifies the collections of variables on which the model runs a single-regime causal discovery routine with controlled statistical error and feasible computational cost.
Throughout, all local graph constraints and all conditioning sets appearing in our rules are required to lie within
some $S\in\Sadm$.

\paragraph{Admissible tests and testing families.}
Let $\Sadm\subseteq 2^V$ be the collection of \emph{admissible subsets} on which we estimate local within-regime structure,
and let
\[
\Tadm \ \subseteq\ \big\{(v,Z): v\in V,\; Z\subseteq V\setminus\{v\}\big\}
\]
be the collection of tested invariance queries. We say that $(\Sadm,\Tadm)$ is an \emph{admissible testing family} if
for every $(v,Z)\in\Tadm$ there exists at least one $S\in\Sadm$ such that
\[
\{v\}\cup Z \ \subseteq\ S.
\]
Every invariance query used by the model is supported by some admissible subset containing all involved variables.

%%%%%%%%%%%%%%%%%%%%%%%%%%%%%%%%%%%%%%%%%%%%%%%%%%%%%%%%%%%%%%%%%%%%%%%%%%%%
\section{Definitions}\label{appendix::defs}

\begin{definition}[Local PDAG]\label{def::pdag}
    Assume we have per regime causal discovery estimator $\mathcal{D}^{(c)}$. If $P_S^{(c)}$ is the marginal distribution on some $S \in \Sadm$, we define local PDAG as:
    \[
    E_{S}^{(c)} = \mathcal{D}^{(c)}(P_S^{(c)})
    \]
    Notice that we do not assume $E_{S}^{(c)}$ represent the MEC of true subgraph $G[S]$ due to potential unseen confounders introduced by marginalizing with subsets. 
\end{definition}

\begin{definition}[Restricted $\Psi$-Markov and Constraint Family]\label{res-psi-markov}
Fix a DAG $G=(V,E)$ and two regimes $c\in\{0,1\}$.Consider an admissible testing family $(\Sadm,\Tadm)$ and a within-regime
discovery operator $\mathcal D^{(c)}$ as in Definition~\ref{def::pdag}. Set a two-regime fixed-graph modular soft-intervention SCM class $\mathcal M$. Let $\mathcal M(G)$ denote the set of two-regime distribution pairs generated by models in $\mathcal M$ with underlying DAG $G$. We say $(P^{(0)},P^{(1)}) \in \mathcal M(G)$ satisfy restricted
$\Psi$-Markov with respect to $G$ and $(\Sadm,\Tadm)$ if for all $S\in\Sadm$ and $c\in\{0,1\}$ the local population
PDAG is given by
\[
E_S^{(c)} \ :=\ \mathcal D^{(c)}(P_S^{(c)}),
\]
and for every tested pair $(v,Z)\in\Tadm$ the population invariance indicator is
\[
\Inv(v\mid Z)
\ :=\
\ind\!\Big\{
P^{(0)}(X_v\mid X_Z)=P^{(1)}(X_v\mid X_Z)
\Big\}.
\]

We can in turn define the restricted $\Psi$-constraint family as
\[
\mathcal F_{\psi}(\Sadm,\Tadm)
\ :=\
\big\{\, E^{(c)}_S(\cdot)\ :\ S\in\Sadm,\ c\in\{0,1\}\,\big\}
\ \cup\
\big\{\, \Inv(v\mid Z)(\cdot)\ :\ (v,Z)\in\Tadm \,\big\},
\]
where for a distribution pair $(P^{(0)},P^{(1)})$,
$E^{(c)}_S(P^{(0)},P^{(1)})$ denotes the population PDAG summary produced by applying $\mathcal D^{(c)}$ to $P^{(c)}_S$, and
$\Inv(v\mid Z)(P^{(0)},P^{(1)})$ denotes the invariance indicator defined from $(P^{(0)},P^{(1)})$.
\end{definition}

\begin{definition}[Restricted $\Psi$-equivalence]\label{def:res-psi-equivalence}
Fix an admissible testing family $(\Sadm,\Tadm)$, a within-regime discovery operator $\mathcal D$ (Definition~\ref{def::pdag}),
and a two-regime fixed-graph modular soft-intervention SCM class $\mathcal M$.
Let $\mathcal M(G)$ denote the set of two-regime distribution pairs generated by models in $\mathcal M$ with underlying DAG $G$.

Define the restricted $\Psi$-image of $G$ under $(\Sadm, \Tadm)$ as
\[
\Psi^{\mathrm{res}}_{\Sadm,\Tadm}(G;\mathcal M)
:=\Big\{\big(\{E_S^{(c)}\}_{S\in\Sadm,\ c\in\{0,1\}},\ \{\Inv(v\mid Z)\}_{(v,Z)\in\Tadm}\big)
:\ (P^{(0)},P^{(1)})\in \mathcal M(G)\Big\},
\]
where $E_S^{(c)}=\mathcal D(P_S^{(c)})$ and $\Inv(v\mid Z)$ is defined from $(P^{(0)},P^{(1)})$ as in
Definition~\ref{res-psi-markov}.

Two DAGs $G$ and $G'$ on $V$ are \emph{restricted $\Psi$-Markov equivalent} (w.r.t.\ $(\Sadm,\Tadm)$, $\mathcal D$, and $\mathcal M$) if
\[
\Psi^{\mathrm{res}}_{\Sadm,\Tadm}(G;\mathcal M)
=\Psi^{\mathrm{res}}_{\Sadm,\Tadm}(G';\mathcal M).
\]
\end{definition}

\begin{definition}[Test-induced $\Psi$ essential graph]\label{def:Gtest}
Fix an admissible testing family $(\Sadm,\Tadm)$, a within-regime discovery operator $\mathcal D$, and a two-regime SCM class $\mathcal M$.
For a DAG $G$ on $V$, define its restricted-$\Psi$ equivalence class as
\[
[G]_{\mathrm{res}}
\ :=\
\big\{\, G' \text{ DAG on } V : \Psi^{\mathrm{res}}_{\Sadm,\Tadm}(G';\mathcal M)=\Psi^{\mathrm{res}}_{\Sadm,\Tadm}(G;\mathcal M)\,\big\}.
\]

Denote $e(.)$ as edge between two nodes. Define the test-induced $\Psi$ essential graph $\Gtest(G)$ as the PDAG on $V$ whose skeleton is the
intersection of graph skeletons in the equivalence class $[G]_{\mathrm{res}}$, i.e.
\[
e(u - v) \in \Gtest(G)
\quad\Longleftrightarrow\quad
e(u - v) \in G', \quad \forall G' \in [G]_{\mathrm{res}},
\]
and whose orientations are also the intersections, i.e.:
\[
e(u \to v) \in \Gtest(G)
\quad\Longleftrightarrow\quad
e(u \to v) \in G' \quad \forall G'\in [G]_{\mathrm{res}}.
\]
All remaining edges in the skeleton are left undirected. Optionally, apply a closure operator 
to obtain the maximally oriented PDAG.
\end{definition}

\begin{definition}[Admissible witness family]\label{def:witness}
Fix an admissible testing family $(\Sadm,\Tadm)$ and the population local PDAGs
$\{E_S^{(c)}: S\in\Sadm,\ c\in\{0,1\}\}$, where $E_S^{(c)}=\mathcal D(P_S^{(c)})$.
For a PDAG $H$ on node set $S$ and a node $v\in S$, let $\De^{\mathrm{def}}_{H}(v)$ denote the set of
\emph{definite descendants} of $v$ in $H$, i.e.,
\[
\De^{\mathrm{def}}_{H}(v)
:=\bigcap_{D\in\Ext{H}} \De_{D}(v),
\]
where $\Ext{H} \ne \emptyset$ is the set of all DAG extensions of $H$ and $\De_D(v)$ is the descendant set of $v$ in $D$.

For each $v\in V$, define the admissible witness family as
\[
\mathcal W(v)
:=\Big\{Z:\ (v,Z)\in\Tadm,\ \exists S\in\Sadm\ \text{s.t.}\ \{v\}\cup Z\subseteq S,\ 
Z\cap\De^{\mathrm{def}}_{E_S^{(0)}}(v)=\emptyset,\ 
Z\cap\De^{\mathrm{def}}_{E_S^{(1)}}(v)=\emptyset
\Big\}.
\]
In other words, $Z$ does not contain any definite descendants of $v$ in the directed part of PDAGs.
\end{definition}

%%%%%%%%%%%%%%%%%%%%%%%%%%%%%%%%%%%%%%%%%%%%%%%%%%%%%%%%%%%%%%%%%%%%%%%%%%%%
\section{Assumptions}\label{appendix::assmps}
\begin{assumption}[Finite admissible scope]\label{ass:finite-scope}
The families $(\Sadm,\Tadm)$ are finite (or of size permitting uniform consistency). Each $(v,Z)\in\Tadm$ satisfies
$\{v\}\cup Z\subseteq S$ for some $S\in\Sadm$.
\end{assumption}

\begin{assumption}[Local PDAG consistency]\label{ass:local-pdag-cons}
Fix a within-regime discovery operator $\mathcal D$ (Definition~\ref{def::pdag}). For each $S\in\Sadm$ and
$c\in\{0,1\}$, let the population local PDAG be $E_S^{(c)}:=\mathcal D(P_S^{(c)})$ and let
$\widehat E_S^{(c)}:=\mathcal D(\widehat P_S^{(c)})$ denote its sample-based estimate. We assume
\[
\Prb\!\Big(\forall S\in\Sadm,\forall c\in\{0,1\}: \widehat E_S^{(c)} = E_S^{(c)}\Big)\to 1
\qquad \text{as } n\to\infty.
\]
where $n$ is number of samples.
\end{assumption}

\begin{assumption}[Invariance-test consistency]\label{ass:inv-cons}
For each $(v,Z)\in\Tadm$, the invariance test satisfies
\[
\Prb\!\Big(\forall (v,Z)\in\Tadm: \hInv(v\mid Z)=\Inv(v\mid Z)\Big)\to 1
\qquad \text{as } n\to\infty.
\]
where $\hInv(v\mid Z)$ is the sample-based estimate of the invariance test and $n$ is number of samples.
\end{assumption}

\begin{assumption}[Off-target invariance and nontrivial detectability]\label{ass:oti-detect}
Assume $(P^{(0)},P^{(1)}) \in \mathcal{M}(G)$ where $\mathcal{M}(G)$ is defined in Definition \ref{def:res-psi-equivalence},
with intervention targets $I\subseteq V$ in regime $c=1$. Then:
\begin{itemize}
    \item For every $v \notin I$,
    \[
    P^{(0)}(X_v \mid X_{\PA_G(v)}) = P^{(1)}(X_v \mid X_{\PA_G(v)}).
    \]
    \item There exists $v \in I$ and some $Z_v \in \mathcal{W}(v)$ such that
    \[
    \Chg(v\mid Z_v) = 1.
    \]
    we also use this definition to call the intervention target node $v$ detectable. 
\end{itemize}
\end{assumption}

\begin{assumption}[Regime faithfulness on tested witnesses]\label{ass:change-faithfulness}
Let $(P^{(0)},P^{(1)})\in\mathcal M(G)$ with (unknown) intervention target set $I\subseteq V$.
Introduce a binary regime label $C\in\{0,1\}$ with $\pi_0,\pi_1>0$, $\pi_0+\pi_1=1$, and define
\[
P^*(C=c,X)\ :=\ \pi_c\,P^{(c)}(X).
\]
Let the augmented graph be
\[
G^* \ :=\ (V\cup\{C\},\ E\cup\{C\to t:\ t\in I\}).
\]
Then for every admissible subset $S\in\Sadm$ and every admissible tested witness query $(v,Z)\in\Tadm$ satisfying
$Z\in\mathcal W(v)$, we have
\[
C \perp_d v \mid Z \ \text{in}\ G^*
\quad\Longleftrightarrow\quad
C \indep X_v \mid X_Z \ \text{under}\ P^* .
\]
\end{assumption}

Under 2 regimes, we propose the following lemma:

\begin{lemma}[Two-regime conditional invariance as $C$-conditional independence]\label{lem:inv-ci}
Let $P^{(0)}$ and $P^{(1)}$ be two distributions over $X$, and assume $\pi_0,\pi_1>0$
with $\pi_0+\pi_1=1$. Introduce a binary regime label $C\in\{0,1\}$ and define the joint distribution
\[
P^*(C=c,X)\ :=\ \pi_c\,P^{(c)}(X).
\]
Then for any node $v$ and conditioning set $Z$ where $\Inv(v\mid Z):=\ind\{P^{(0)}(X_v\mid X_Z)=P^{(1)}(X_v\mid X_Z)\}$, we have
\[
\Inv(v\mid Z)=1 \iff C\indep X_v\mid X_Z,
\qquad
\Inv(v\mid Z)=0 \iff C \notindep X_v\mid X_Z,
\]
all under $P^*$.
\end{lemma}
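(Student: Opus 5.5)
The plan is a direct Bayes-rule computation under the mixture $P^*$. Because $\pi_0,\pi_1>0$, the marginal of $X$ under $P^*$ is $\bar P(X)=\pi_0P^{(0)}(X)+\pi_1P^{(1)}(X)$, and each $P^{(c)}$ is absolutely continuous with respect to $\bar P$. The conditional law of $X$ given $C=c$ is exactly $P^{(c)}$. Hence, for any $v$ and $Z$, the conditional law of $X_v$ given $(C=c,X_Z=z)$ equals $P^{(c)}(X_v\mid X_Z=z)$. This holds for $\bar P$-almost every $z$ in the support of $P^{(c)}_Z$.

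The key steps, in order, are as follows.

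First, recall that $C\indep X_v\mid X_Z$ under $P^*$ is equivalent to $P^*(X_v\mid X_Z,C)=P^*(X_v\mid X_Z)$ almost surely. The equivalent form $P^*(C\mid X_v,X_Z)=P^*(C\mid X_Z)$ can serve as a cross-check.

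Second, prove ($\Leftarrow$ of the first claim). Suppose $P^{(0)}(X_v\mid X_Z)=P^{(1)}(X_v\mid X_Z)=:Q(X_v\mid X_Z)$. Then
\[
P^*(X_v\mid X_Z=z)=\sum_c P^*(C=c\mid X_Z=z)\,P^{(c)}(X_v\mid X_Z=z)=Q(X_v\mid X_Z=z),
\]
since the weights sum to one. This coincides with $P^*(X_v\mid X_Z=z,C=c)$ for each $c$, which gives independence.

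Third, prove ($\Rightarrow$). Under independence, $P^{(c)}(X_v\mid X_Z=z)=P^*(X_v\mid X_Z=z,C=c)=P^*(X_v\mid X_Z=z)$ for both $c$. Therefore the two regime conditionals agree, and $\Inv(v\mid Z)=1$.

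Fourth, the second equivalence is the contrapositive of the first. This uses $\Inv\in\{0,1\}$ with $\Chg=1-\Inv$.

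The main obstacle is measure-theoretic bookkeeping rather than any conceptual difficulty. "Almost every $z$" in the definition of $\Inv$ must be read with respect to a common dominating measure. Conditionals $P^{(c)}(X_v\mid X_Z=z)$ are only defined where $P^{(c)}_Z$ has mass, so the supports of $P^{(0)}_Z$ and $P^{(1)}_Z$ may differ.

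I would handle this by fixing $\bar P_Z$ as the reference measure. I would then interpret the equality in $\Inv$ as holding $\bar P_Z$-a.e. on the set where both $P^{(c)}_Z$ have positive density. On the complement, only one regime contributes, so the $C$-conditional is trivially unambiguous there.

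If needed, I would state a mild common-support (positivity) convention so that both sides of the equivalence are well defined. The proof would then proceed with densities with respect to a product dominating measure, using $p^*(c,x)=\pi_c p^{(c)}(x)$ and writing all conditionals as ratios of densities.
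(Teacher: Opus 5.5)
Your proposal is correct and takes essentially the same route as the paper: both identify $P^*(X_v\mid X_Z,C=c)=P^{(c)}(X_v\mid X_Z)$ and then equate $C\indep X_v\mid X_Z$ with agreement of the two regime conditionals. Your mixture-decomposition argument proves explicitly the step the paper takes ``by definition of conditional independence,'' and your overlap/positivity convention resolves a support ambiguity in the a.e.\ equality that the paper leaves implicit.
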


\begin{proof}[Proof]
Write the mixture as $P^*(C=c,X)=\pi_cP^{(c)}(X)$ with $\pi_c>0$. Then for $c\in\{0,1\}$,
\begin{align*}
P^*(X_v\mid X_Z, C=c)
&=\frac{P^*(X_v,X_Z\mid C=c)}{P^*(X_Z\mid C=c)}
=\frac{\pi_c P^{(c)}(X_v,X_Z)/\pi_c}{\pi_c P^{(c)}(X_Z)/\pi_c}
= P^{(c)}(X_v\mid X_Z).
\end{align*}
Hence
\[
P^{(0)}(X_v\mid X_Z)=P^{(1)}(X_v\mid X_Z)
\quad\Longleftrightarrow\quad
P^*(X_v\mid X_Z,C=0)=P^*(X_v\mid X_Z,C=1).
\]
By definition of conditional independence,
\[
C\indep X_v\mid X_Z
\quad\Longleftrightarrow\quad
P^*(X_v\mid X_Z,C=0)=P^*(X_v\mid X_Z,C=1),
\]
Since $\Inv(v\mid Z):=\ind\{P^{(0)}(X_v\mid X_Z)=P^{(1)}(X_v\mid X_Z)\}$, we have
\[
\Inv(v\mid Z)=1 \iff C\indep X_v\mid X_Z
\]
The inverse relationship for $\Inv(v \mid z) = 0$ holds symmetrically. 
\end{proof}

\begin{remark}[Handling example failed cases]
By Lemma~\ref{lem:inv-ci}, $\Inv(v\mid Z)=1$ is equivalent to $C\indep X_v\mid X_Z$ under the two-regime mixture $P^*$,
and $\Chg(v\mid Z)=1$ to $C\notindep X_v\mid X_Z$. Assumption~\ref{ass:change-faithfulness} enforces restricted
faithfulness of these tested $C$-independence to d-separation in the augmented graph $G^*$. Thus it rules out
(i) self-cancellation (d-connection but $C\indep X_v\mid X_Z$) and (ii) Bayes reversal on admissible witnesses
(d-separation but $C\notindep X_v\mid X_Z$).
\end{remark}

\begin{assumption}[Closure soundness]\label{ass:closure-sound}
Let $\Cl(\cdot)$ denote the closure operator.
If a set of directed constraints is consistent with $\Gtest$, then $\Cl(\cdot)$ does not introduce orientations that
contradict $\Gtest$.
\end{assumption}

%%%%%%%%%%%%%%%%%%%%%%%%%%%%%%%%%%%%%%%%%%%%%%%%%%%%%%%%%%%%%%%%%%%%%%%%%%%%
\section{Restricted $\Psi$-Sound Orientation Rules}
\subsection{Single-Sided Invariance}
\begin{proposition}[Single-Sided Invariance (SSI)]\label{prop:ssi-psi}
Let $S\in\Sadm$ with $\{i,j\}\subseteq S$ such that $i-j$ is an undirected adjacency in both population local PDAGs
$E_S^{(0)}$ and $E_S^{(1)}$ (Definition~\ref{def::pdag}). Let $Z:=S\setminus\{i,j\}$ such that
$(j, Z), (i, Z) \in \Tadm$ and $Z \in \mathcal{W}(j)$, $Z \in \mathcal{W}(i)$.
Assume moreover that $i$ and $j$ are adjacent in every $G'\in [G]_{\mathrm{res}}$. If
\[
\Chg(j\mid Z)=1,
\qquad
\Inv(i\mid Z)=1,
\]
then $i\to j$ is invariant across the restricted $\Psi$-equivalence class and thus is oriented identically in $\Gtest(G)$.
\end{proposition}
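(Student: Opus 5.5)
We argue by contradiction, working first in the generating DAG $G$ and then transporting the conclusion to every $G'\in[G]_{\mathrm{res}}$. By hypothesis $i$ and $j$ are adjacent in $G$, so either $i\to j$ or $j\to i$ holds there. By Lemma~\ref{lem:inv-ci}, the premises become $C\notindep X_j\mid X_Z$ and $C\indep X_i\mid X_Z$ under $P^*$. Since $Z\in\mathcal W(j)\cap\mathcal W(i)$ and $(i,Z),(j,Z)\in\Tadm$, Assumption~\ref{ass:change-faithfulness} converts these into $C\not\perp_d j\mid Z$ and $C\perp_d i\mid Z$ in $G^*$. The goal is to show that $j\to i$ in $G$ is incompatible with these two statements.

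Assume $j\to i$ in $G$. Take a $d$-connecting path $p$ from $C$ to $j$ given $Z$ in $G^*$, and let $q=p\oplus(j\to i)$. We check that $q$ is $d$-connecting given $Z$ (in the spirit of the concatenation Lemma~\ref{lem:active-concat}). Note $i,j\notin Z$ because $Z=S\setminus\{i,j\}$. At $j$ the outgoing edge is $j\to i$, so $j$ is a non-collider on $q$ whatever the last edge of $p$ is. Because $j\notin Z$, it does not block. All other nodes of $p$ keep their status. If $i$ already lies on $p$, truncate $p$ at $i$ to get a $d$-connecting subpath from $C$ to $i$ directly. Either way $C\not\perp_d i\mid Z$, contradicting the $d$-separation obtained above. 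Hence $i\to j$ in $G$.

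The step I expect to be the main obstacle is lifting this to every $G'\in[G]_{\mathrm{res}}$. The faithfulness assumption is stated for the generating $G$ with target set $I$, whereas a competitor $G'$ comes with its own model in $\mathcal M(G')$ and its own targets $I'$.

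My plan here is to use the definition of restricted $\Psi$-equivalence. Since $\Psi^{\mathrm{res}}(G')=\Psi^{\mathrm{res}}(G)$, the realised tuple of local PDAGs and invariance literals is also achieved by some $(P'^{(0)},P'^{(1)})\in\mathcal M(G')$. That pair produces the same $E_S^{(c)}$, so $i-j$ stays undirected in both local PDAGs and $\mathcal W(i),\mathcal W(j)$ are unchanged, since they depend only on $\Tadm$ and the $E_S^{(c)}$. It also produces the same literals $\Chg(j\mid Z)=1$ and $\Inv(i\mid Z)=1$.

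Applying Assumption~\ref{ass:change-faithfulness} to this generating pair, now read as a condition on the model class, and repeating the argument above in $G'^*$ forces $i\to j$ in $G'$, using the assumed adjacency of $i,j$ in $G'$. I will state explicitly that the faithfulness assumption is understood to hold for every member of $\mathcal M$ realising the image. Given this, $i\to j$ lies in every $G'\in[G]_{\mathrm{res}}$, and Definition~\ref{def:Gtest} gives $e(i\to j)\in\Gtest(G)$.
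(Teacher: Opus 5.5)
Your proposal is correct and follows the paper's own proof. The paper likewise derives a contradiction by extending a $Z$-active path from $C$ to $j$ through $j\to i$. It then lifts the result to every $G'\in[G]_{\mathrm{res}}$ by realizing the same image tuple under some pair in $\mathcal M(G')$ and repeating the argument in $(G')^*$. Your truncation in the case where $i$ already lies on the path, and your explicit reading of the faithfulness assumption as holding for that realizing pair, are small points the paper leaves implicit.
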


\begin{proof}[Proof]
\textbf{Restricted $\Psi$-equivalence soundness.}
Fix a DAG $G=(V,E)$ such that $(P^{(0)},P^{(1)})\in \mathcal M(G)$ and consider the associated two-regime mixture
$P^*$ and augmented graph $G^*$ from Assumption~\ref{ass:change-faithfulness}. Assume for contradiction that $j\to i$ in $G[S]$.
By Lemma~\ref{lem:inv-ci},
\[
\Chg(j\mid Z)=1 \ \Longleftrightarrow\ C \not\!\indep X_j \mid X_{Z},
\qquad
\Inv(i\mid Z)=1 \ \Longleftrightarrow\ C \indep X_i \mid X_{Z},
\]
all under $P^*$. Since $(j,Z)\in\Tadm$ and $Z\in\mathcal W(j)$, Assumption~\ref{ass:change-faithfulness} implies
\[
C \not\!\indep X_j\mid X_Z \ \Longrightarrow\ C \not\!\perp_d j \mid Z \ \text{in}\ G^* .
\]
Thus there exists a d-connecting path $\pi$ given $Z$ from $C$ to $j$ in $G^*$. Concatenate $\pi$ with the directed edge $j\to i$. Because $j\notin Z$, the concatenated path is unblocked given $Z$ from $C$ to $i$,
so $C\not\!\perp_d i\mid Z$ in $G^*$. Additionally, $j$ cannot be a collider since we concatenated $j\to i$. Since $(i,Z)\in\Tadm$ and $Z\in\mathcal W(i)$, applying
Assumption~\ref{ass:change-faithfulness} again yields
\[
C \not\!\perp_d i \mid Z \ \Longrightarrow\  C \not\!\indep X_i \mid X_{Z} \ \text{under } P^*,
\]
i.e.\ $\Chg(i\mid Z)=1$, contradicting $\Inv(i\mid Z)=1$. Therefore, $j\to i$ cannot hold in $G[S]$.\\\\
\textbf{Compelled orientation in the test-induced $\Psi$ essential graph.}
Let $G'\in [G]_{\mathrm{res}}$ be arbitrary. By Definition~\ref{def:res-psi-equivalence},
\[
\Psi^{\mathrm{res}}_{\Sadm,\Tadm}(G';\mathcal M)=\Psi^{\mathrm{res}}_{\Sadm,\Tadm}(G;\mathcal M),
\]
so the same restricted $\Psi$-image tuple witnessing the premises of the rule is also achievable under some
$(Q^{(0)},Q^{(1)})\in\mathcal M(G')$. Repeating the contradiction argument above for $G'$ with its corresponding
mixture $Q^*$ and augmented graph $(G')^*$ rules out $j\to i$ in $G'[S]$. Assume $i$ and $j$
are adjacent in every $G'\in [G]_{\mathrm{res}}$, it follows that $i\to j$ holds in every $G'\in [G]_{\mathrm{res}}$,
i.e.\ $i\to j$ is compelled across the restricted $\Psi$-equivalence class. By Definition~\ref{def:Gtest}, this implies
$i\to j$ appears in $\Gtest(G)$.
\end{proof}

\subsection{Contrastive V structure}

\begin{proposition}[Contrastive V-structure(CVT)]\label{prop:cvt-psi}
Let $S\in\Sadm$ with $\{i,j,k\}\subseteq S$ such that $i-j-k$ is an unshielded triple in the local skeleton on $S$.
Let $Z:=S\setminus\{i,j,k\}$ and assume $(i,Z),(j,Z),(k,Z)\in\Tadm$ with $Z\in\mathcal W(i)\cap\mathcal W(j)\cap\mathcal W(k)$.
Assume moreover that $i-j$ and $j-k$ are edges in the common skeleton of $[G]_{\mathrm{res}}$ (equivalently, in the skeleton of $\Gtest(G)$).
If
\[
\Chg(j\mid Z)=1,\qquad \Inv(i\mid Z)=1,\qquad \Inv(k\mid Z)=1,
\]
then $i\to j\leftarrow k$ is compelled across $[G]_{\mathrm{res}}$ and hence appears in $\Gtest(G)$.
\end{proposition}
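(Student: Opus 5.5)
The plan is to mirror the proof of Proposition~\ref{prop:ssi-psi}: rule out every non-collider orientation of the triple $i-j-k$ by a contradiction through the regime node $C$, then lift the conclusion to all of $[G]_{\mathrm{res}}$. Fix $G$ with $(P^{(0)},P^{(1)})\in\mathcal M(G)$, and form the mixture $P^*$ and the augmented graph $G^*$ of Assumption~\ref{ass:change-faithfulness}.

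First, apply Lemma~\ref{lem:inv-ci} to the premises. This gives $C\notindep X_j\mid X_Z$, $C\indep X_i\mid X_Z$ and $C\indep X_k\mid X_Z$ under $P^*$. The queries $(i,Z),(j,Z),(k,Z)$ lie in $\Tadm$ and $Z\in\mathcal W(i)\cap\mathcal W(j)\cap\mathcal W(k)$. Hence Assumption~\ref{ass:change-faithfulness} turns these into $C\not\perp_d j\mid Z$, $C\perp_d i\mid Z$ and $C\perp_d k\mid Z$ in $G^*$. Fix a $Z$-active path $\pi$ from $C$ to $j$. Note that $i,j,k\notin Z$, since $Z=S\setminus\{i,j,k\}$.

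Next, do the case analysis. Since $i-j$ and $j-k$ are adjacencies (and the triple is unshielded), a non-collider at $j$ means at least one of $j\to i$ or $j\to k$ holds; these are the chains $i\to j\to k$, $i\leftarrow j\leftarrow k$ and the fork $i\leftarrow j\to k$. Suppose $j\to i$; the case $j\to k$ is symmetric.
\begin{itemize}
\item If $\pi$ does not visit $i$, append the edge $j\to i$ to $\pi$. On the extended path $j$ is a non-collider, because the new edge points out of $j$, and $j\notin Z$. All other nodes keep their status, so the path is $Z$-active and $C\not\perp_d i\mid Z$.
\item If $\pi$ already visits $i$, take the subpath of $\pi$ from $C$ to $i$. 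An initial segment of an active path, ending at a node that is now an endpoint, is still active. So again $C\not\perp_d i\mid Z$.
\end{itemize}
Either way, Assumption~\ref{ass:change-faithfulness} together with Lemma~\ref{lem:inv-ci} yields $\Chg(i\mid Z)=1$, contradicting $\Inv(i\mid Z)=1$. With the symmetric argument for $k$, this excludes all three non-collider patterns, so $i\to j\leftarrow k$ holds in $G$.

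Finally, lift the conclusion as in Proposition~\ref{prop:ssi-psi}. For any $G'\in[G]_{\mathrm{res}}$, equality of restricted $\Psi$-images means the observed tuple is also realized by some $(Q^{(0)},Q^{(1)})\in\mathcal M(G')$. That tuple includes the same $\Inv/\Chg$ values on the tested queries and the same local PDAGs, hence the same witness families $\mathcal W(\cdot)$. Repeating the argument in $(G')^*$ with $Q^*$ forces a collider at $j$ in $G'$. Because $i-j$ and $j-k$ are adjacencies in every member of the class, both arrowheads into $j$ are compelled, and by Definition~\ref{def:Gtest} they appear in $\Gtest(G)$.

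I expect the main obstacle to be the path-concatenation step. One must check that appending $j\to i$ keeps a simple active path: the cases where $\pi$ passes through $i$ or $k$, or enters $j$ with an arrowhead, must not create a blocked collider. One must also check that Assumption~\ref{ass:change-faithfulness} is legitimately available in $(G')^*$, with the target set and witness sets of $G'$, when $G'$ is merely restricted-equivalent to $G$. I would handle the first by the subpath argument above. For the second I would argue explicitly that $\mathcal W$ is a function of the shared local PDAGs and $\Tadm$ alone, so it is identical across the class.
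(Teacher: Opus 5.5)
Your proposal is correct and follows the paper's proof step for step. You translate the premises via Lemma~\ref{lem:inv-ci} and Assumption~\ref{ass:change-faithfulness}, exclude $j\to i$ (and symmetrically $j\to k$) by extending a $Z$-active path from $C$ to $j$ through the tail at $j$, and then lift to every $G'\in[G]_{\mathrm{res}}$ via equality of restricted $\Psi$-images. Your subpath case for when that path already visits $i$, and your observation that $\mathcal W(\cdot)$ is fixed by the shared local PDAGs, tidy up details the paper leaves implicit; as in the paper, however, regime faithfulness for the realizing pair $(Q^{(0)},Q^{(1)})$ in $(G')^*$ is taken from the assumption rather than derived from that observation.
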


\begin{proof}[Proof]
\textbf{Restricted $\Psi$-equivalence soundness.}
Fix a DAG $G=(V,E)$ such that $(P^{(0)},P^{(1)})\in\mathcal M(G)$ and consider the associated two-regime mixture $P^*$
and augmented graph $G^*$ from Assumption~\ref{ass:change-faithfulness}. Let $Z:=S\setminus\{i,j,k\}$.
By Lemma~\ref{lem:inv-ci}, under $P^*$ we have
\[
\Chg(j\mid Z)=1 \Longleftrightarrow C \not\!\indep X_j\mid X_Z,\qquad
\Inv(i\mid Z)=1 \Longleftrightarrow C \indep X_i\mid X_Z,\qquad
\Inv(k\mid Z)=1 \Longleftrightarrow C \indep X_k\mid X_Z .
\]
Since $(j,Z)\in\Tadm$ and $Z\in\mathcal W(j)$, Assumption~\ref{ass:change-faithfulness} implies
\[
C \not\!\indep X_j\mid X_Z \ \Longrightarrow\ C \not\!\perp_d j\mid Z \quad\text{in }G^*,
\]
so there exists a path $\pi$ from $C$ to $j$ in $G^*$ that is unblocked given $Z$.

Assume for contradiction that $j\to i$ in $G[S]$. Concatenating $\pi$ with the directed edge $j\to i$ yields a path
from $C$ to $i$ that is unblocked given $Z$. Because $j\notin Z$ and the added edge is $j \to i$, the node $j$ is not a collider on the
concatenated path. Therefore $C\not\!\perp_d i\mid Z$ in $G^*$.
Since $(i,Z)\in\Tadm$ and $Z\in\mathcal W(i)$, Assumption~\ref{ass:change-faithfulness} gives
\[
C \not\!\perp_d i\mid Z \ \Longrightarrow\ C \not\!\indep X_i\mid X_Z \quad\text{under }P^*,
\]
contradicting $\Inv(i\mid Z)=1$. Hence $j\to i$ is impossible, and since $i$ and $j$ are adjacent in $G[S]$, we must
have $i\to j$ in $G[S]$. Invoking the same argument for $\Inv(k\mid Z)=1$ similarly rules out $j\to k$ and yields $k\to j$ in $G[S]$.
Because $i$ and $k$ are non-adjacent, this implies $i\to j\leftarrow k$ is in $G[S]$.

\textbf{Compelled orientation in the test-induced $\Psi$ essential graph.}
Let $G'\in [G]_{\mathrm{res}}$ be arbitrary. By Definition~\ref{def:res-psi-equivalence},
\[
\Psi^{\mathrm{res}}_{\Sadm,\Tadm}(G';\mathcal M)=\Psi^{\mathrm{res}}_{\Sadm,\Tadm}(G;\mathcal M),
\]
so the same restricted $\Psi$-image tuple witnessing the premises of the proposition is achievable under some
$(Q^{(0)},Q^{(1)})\in\mathcal M(G')$. Repeating the contradiction arguments above for $G'$ (with its corresponding
mixture $Q^*$ and augmented graph $(G')^*$) rules out $j\to i$ and $j\to k$ in $G'[S]$, hence forces
$i\to j$ and $k\to j$. Since the unshielded triple $i-j-k$ exists in every $G'\in [G]_{\mathrm{res}}$ by assumption,
$i$ and $k$ remain non-adjacent in every such $G'$, so $i\to j\leftarrow k$ holds in every $G'\in [G]_{\mathrm{res}}$.
Therefore the collider is compelled across $[G]_{\mathrm{res}}$, and by Definition~\ref{def:Gtest} it appears in
$\Gtest(G)$.
\end{proof}

\subsection{Contrastive Discriminating Path}

We first provide some definitions that will be useful for the proofs.
\begin{definition}[Candidate discriminating path in a local skeleton]\label{def:candidate-dp-motif}
Let $S\in\Sadm$ and let $H$ be a partially directed graph on vertex set $S$.
A path
\[
\pi=\langle v_0,v_1,\ldots,v_{m-1},j,k\rangle
\]
in the skeleton of $H$ is a \emph{candidate discriminating-path motif for $(j,k)$} if:
\begin{enumerate}
    \item[(i)] $\pi$ is simple (no repeated vertices);
    \item[(ii)] $v_0$ is non-adjacent to $k$ in the skeleton of $H$;
    \item[(iii)] $j$ is adjacent to $k$ in the skeleton of $H$.
\end{enumerate}
The path is called \emph{common} if the same skeleton path exists in both local skeletons.
\end{definition}

\begin{definition}[Replay-node local status and replay assignment]\label{def:replay-status}
Fix a candidate discriminating-path 
\[
\pi=\langle v_0,\ldots,v_{m-1},j,k\rangle .
\]
For each internal node $v_\ell$ ($1\le \ell \le m-1$), define its \emph{local discriminating-path status}
(with respect to $(\pi,k)$) as the tuple
\[
\mathrm{st}_\pi(v_\ell)
\;:=\;
\Big(
\mathbf{1}\{\text{$v_\ell$ is a collider on $\pi$}\},
\mathbf{1}\{\text{$v_\ell$ is a parent of $k$}\}
\Big)\in\{0,1\}^2,
\]

Let $\mathcal V_{\mathrm{rep}}(\pi)\subseteq \{v_1,\ldots,v_{m-1}\}$ be a designated set of internal nodes.
A \emph{replay assignment} on $\mathcal V_{\mathrm{rep}}(\pi)$ is a map
\[
\sigma:\mathcal V_{\mathrm{rep}}(\pi)\to\{0,1\}^2,
\qquad
v\mapsto \sigma(v),
\]
that specifies the local discriminating-path statuses of the replay nodes.
\end{definition}

\begin{definition}[Replay-node set]\label{def:replay-nodes}
Fix a candidate discriminating-path motif $\pi$ and the ambient local graph constraints (skeleton adjacencies and any
already-known endpoint marks) on $S$.

A set $\mathcal V_{\mathrm{rep}}(\pi)\subseteq \{v_1,\ldots,v_{m-1}\}$ is called a \emph{sufficient replay-node set}
if, once a replay assignment $\sigma$ on $\mathcal V_{\mathrm{rep}}(\pi)$ is fixed (and is compatible with the ambient
local constraints), the standard discriminating-path logic determines a unique mark on the edge $(j,k)$.

It is called a \emph{minimal replay-node set} if it is sufficient and no proper subset of it is sufficient.
\end{definition}

\begin{definition}[Discriminating-path mark]\label{def:dp-mark}
Fix a candidate discriminating path 
\[
\pi=\langle v_0,\ldots,v_{m-1},j,k\rangle
\]
and a sufficient replay-node set $\mathcal V_{\mathrm{rep}}(\pi)$.

Define the \emph{discriminating path mark}
\[
\mathrm{DP}_\pi:\ \sigma \mapsto \mathrm{DP}_\pi(\sigma)
\]
on replay assignments $\sigma$ that are compatible with the local constraints, where $\mathrm{DP}_\pi(\sigma)$ is the
mark returned by the standard discriminating-path logic for the target edge $(j,k)$ after substituting the replay
statuses specified by $\sigma$.

$\mathrm{DP}_\pi(\sigma)$ may be represented as:
\begin{itemize}
    \item[(a)] an endpoint \emph{mark} on $(j,k)$ (e.g., tail vs.\ arrowhead at $j$), or
    \item[(b)] a full \emph{orientation} of $(j,k)$ when the local logic (possibly together with closure rules) determines it.
\end{itemize}
We refer to either representation generically as the \emph{discriminating-path mark} on $(j,k)$.
\end{definition}

Here we formulate and prove the assisting lemmas and the proposition.
\begin{lemma}[Concatenation of unblocked paths at a non-collider]\label{lem:active-concat}
Let $\pi_1$ be an unblocked path from $a$ to $b$ and let $\pi_2$ be an unblocked path $b$ to $d$ in a DAG, both conditioning on $Z$.
Assume:
\begin{enumerate}
    \item[(i)] $\pi_1$ and $\pi_2$ intersect only at $b$;
    \item[(ii)] $b\notin Z$;
    \item[(iii)] the two incident edge-marks at $b$ in the concatenated trail $\pi_1\oplus \pi_2$ do not make $b$ a collider.
\end{enumerate}
Then the concatenated path $\pi_1\oplus \pi_2$ is unblocked from $a$ to $d$ given $Z$.
\end{lemma}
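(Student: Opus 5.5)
The plan is to verify the standard d-connection criterion directly on the concatenated trail, node by node. Recall that a path is unblocked given $Z$ iff (a) every non-collider on it lies outside $Z$, and (b) every collider on it lies in $Z$ or has a descendant in $Z$; in other words, every collider is in $\mathrm{An}(Z)$. Write $\pi_1=\langle a=u_0,\ldots,u_p=b\rangle$ and $\pi_2=\langle b=w_0,\ldots,w_q=d\rangle$, and set $\pi:=\pi_1\oplus\pi_2=\langle u_0,\ldots,u_p=w_0,\ldots,w_q\rangle$.

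\textbf{Step 1: $\pi$ is a path.} By (i), the vertex sets of $\pi_1$ and $\pi_2$ meet only in $b$. Since each of $\pi_1,\pi_2$ is simple, $\pi$ has no repeated vertices. Consecutive vertices of $\pi$ are adjacent in the DAG because each edge of $\pi$ is an edge of $\pi_1$ or of $\pi_2$. In particular $a\neq d$, unless one of the paths is trivial, which is a degenerate case handled by inspection.

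\textbf{Step 2: internal nodes other than $b$ keep their status.} Take an internal node of $\pi$ different from $b$, say $u_\ell$ with $0<\ell<p$ (the case $w_\ell$ with $0<\ell<q$ is symmetric). Its two incident edges on $\pi$ are exactly its two incident edges on $\pi_1$, with the same orientations in the DAG. Hence it is a collider on $\pi$ iff it is a collider on $\pi_1$. Because $\pi_1$ is unblocked given $Z$, conditions (a) and (b) hold at this node, and they depend only on its collider status, on $Z$, and on the ambient DAG's descendant relation, none of which changed. So $u_\ell$ satisfies the criterion on $\pi$.

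\textbf{Step 3: the junction $b$.} On $\pi$, $b$ is a new internal node whose incident edges are the last edge of $\pi_1$ and the first edge of $\pi_2$. By (iii), these marks do not make $b$ a collider, so only condition (a) applies, and it holds by (ii). The endpoints $a$ and $d$ are endpoints of $\pi$ just as they were of $\pi_1$ and $\pi_2$, so any requirement on them is inherited. Combining Steps 1–3, every internal node of $\pi$ satisfies the criterion, so $\pi$ is unblocked from $a$ to $d$ given $Z$.

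The main (mild) obstacle is Step 1. Without (i), the concatenation could be a walk with a repeated vertex, where the collider status at the repeated vertex is ill-defined. Hypothesis (i) is exactly what rules this out, and I would remark that in the later applications (e.g.\ Lemma~\ref{lem:dpt-j-to-r}) one either checks (i) or first shortcuts the walk to a path, using the standard fact that a $Z$-active walk contains a $Z$-active path between the same endpoints.
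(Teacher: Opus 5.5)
Your proposal is correct and takes essentially the same route as the paper. Every internal node of $\pi_1$ and $\pi_2$ keeps its collider status, and hence stays unblocked; hypothesis (i) makes $b$ the only new internal node; and (ii)--(iii) make $b$ an unblocked non-collider. Your version is simply more explicit about why the collider and descendant conditions are unchanged away from $b$, and it also handles the degenerate cases.
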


\begin{proof}[Proof]
Because $\pi_1$ and $\pi_2$ are each unblocked given $Z$, every internal node of $\pi_1$ and of $\pi_2$ satisfies the
usual $d$-separation condition. Under (i), concatenation introduces no repeated internal nodes; the only new
internal node in $\pi_1\oplus\pi_2$ is the join node $b$. By (iii), $b$ is a non-collider in the concatenated path, and by
(ii), $b\notin Z$, so $b$ is also unblocked given $Z$. Therefore every internal node of $\pi_1\oplus\pi_2$ is unblocked
given $Z$, and $\pi_1\oplus\pi_2$ is unblocked from $a$ to $d$ given $Z$.
\end{proof}

\begin{lemma}[First replay disagreement under alternative discriminating path mark]\label{lem:dpt-first-mismatch}
Consider a candidate discriminating path
\[
\pi=\langle v_0,\ldots,v_{m-1},j,k\rangle
\]
and replay nodes $\mathcal V_{\mathrm{rep}}(\pi)\subseteq\{v_1,\ldots,v_{m-1}\}$.
Let $\sigma^\star$ be the replay-node status assignment that yields the target discriminating-path mark
$\mathrm{DP}(\pi)$ on $(j,k)$.

If an alternative replay assignment $\sigma\neq \sigma^\star$ changes the discriminating-path mark on $(j,k)$,
then there exists a first replay node $r\in\mathcal V_{\mathrm{rep}}(\pi)$ (ordered from $j$ to $v_0$)
such that $\sigma(r)\neq \sigma^\star(r)$ while $\sigma(u)=\sigma^\star(u)$ for all replay nodes $u$ between $j$ and $r$
along $\pi$.
\end{lemma}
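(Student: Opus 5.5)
The statement is essentially a well-ordering argument on a finite set, so I will prove it directly from the definitions of replay assignments (Definition~\ref{def:replay-status}) and the discriminating-path mark (Definition~\ref{def:dp-mark}). First I fix the ordering: list the replay nodes in $\mathcal V_{\mathrm{rep}}(\pi)\subseteq\{v_1,\ldots,v_{m-1}\}$ by decreasing path index, i.e.\ by distance from $j$ along $\pi$. Because $\pi$ is simple (Definition~\ref{def:candidate-dp-motif}(i)), this is a strict total order on a finite set. I then define the disagreement set
\[
\mathcal D(\sigma,\sigma^\star):=\{u\in\mathcal V_{\mathrm{rep}}(\pi):\ \sigma(u)\neq\sigma^\star(u)\}.
\]

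The key step is to show $\mathcal D(\sigma,\sigma^\star)\neq\varnothing$. I will argue by contraposition. Both $\sigma$ and $\sigma^\star$ are maps with the same domain $\mathcal V_{\mathrm{rep}}(\pi)$. If they agreed on every replay node, they would be equal as functions, which already contradicts the hypothesis $\sigma\neq\sigma^\star$. For robustness I will also note the semantic version. $\mathcal V_{\mathrm{rep}}(\pi)$ is a sufficient replay-node set (Definition~\ref{def:replay-nodes}), so the mark $\mathrm{DP}_\pi$ is a well-defined function of the replay assignment. Equal assignments on $\mathcal V_{\mathrm{rep}}(\pi)$ therefore give equal marks on $(j,k)$, contradicting the assumption that $\sigma$ changes the mark. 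Either route gives a nonempty disagreement set.

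With nonemptiness in hand, I take $r$ to be the minimum of $\mathcal D(\sigma,\sigma^\star)$ in the chosen order, which exists because the set is finite and totally ordered. By construction $\sigma(r)\neq\sigma^\star(r)$. Every replay node $u$ strictly between $j$ and $r$ along $\pi$ precedes $r$ in the order, so by minimality $u\notin\mathcal D$, i.e.\ $\sigma(u)=\sigma^\star(u)$. This is exactly the claimed conclusion.

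The only step needing care is the semantic justification via sufficiency: I must check that $\mathrm{DP}_\pi$ depends on the ambient local constraints only through the replay statuses, so that agreement on $\mathcal V_{\mathrm{rep}}(\pi)$ really forces the same mark. I will state this explicitly as the well-definedness of $\mathrm{DP}_\pi$ on compatible assignments. The purely set-theoretic route via $\sigma\neq\sigma^\star$ is the fallback that makes the lemma hold regardless.
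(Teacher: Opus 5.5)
Your proposal is correct and matches the paper's proof: it orders the replay nodes from $j$ toward $v_0$, shows the disagreement set is nonempty because agreement on every replay node would give the same mark on $(j,k)$, and takes its minimum element as $r$. Your additional observation that $\sigma\neq\sigma^\star$ as functions already forces a disagreement is a valid shortcut that the paper does not spell out.
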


\begin{proof}[Proof]
Let the replay nodes be listed in the order from $j$ to $v_0$:
\[
\mathcal V_{\mathrm{rep}}(\pi)=\{r_1,\ldots,r_q\},
\]
where $r_1$ is the replay node closest to $j$ and $r_q$ is the one farthest from $j$.

Assume $\sigma$ is an alternative replay assignment whose induced discriminating-path mark on $(j,k)$ differs from
the target mark induced by $\sigma^\star$. If $\sigma(r_a)=\sigma^\star(r_a)$ for every $a\in\{1,\ldots,q\}$, then
$\sigma$ and $\sigma^\star$ present the same replay information to the discriminating-path logic. Hence they must induce the same discriminating-path mark on $(j,k)$, contradicting the assumption that the marks differ.

Therefore, the set
\[
A:=\{a\in\{1,\ldots,q\}:\sigma(r_a)\neq \sigma^\star(r_a)\}
\]
is a nonempty finite set. Take the minimum element $a^\star:=\min A$.
Set $r:=r_{a^\star}$. By construction,
\[
\sigma(r)\neq \sigma^\star(r),
\]
and for every replay node $u=r_b$ with $b < a^\star$,
we have
\[
\sigma(u)=\sigma^\star(u).
\]
Thus $r$ is the first replay disagreement.
\end{proof}

\begin{lemma}[Valid path from trigger node to first mismatch node]\label{lem:dpt-j-to-r}
Fix a candidate discriminating path
\[
\pi=\langle v_0,\ldots,v_{m-1},j,k\rangle
\]
and a target replay assignment $\sigma^\star$ on $\mathcal V_{\mathrm{rep}}(\pi)\subseteq\{v_1,\ldots,v_{m-1}\}$.
Let $Z^\pi$ be a tested witness set such that $(j,Z^\pi)\in\Tadm$ and $(v,Z^\pi)\in\Tadm$ for every
$v\in\mathcal V_{\mathrm{rep}}(\pi)$, with $Z^\pi\in\mathcal W(j)\cap \bigcap_{v\in\mathcal V_{\mathrm{rep}}(\pi)}\mathcal W(v)$.

Assume at a trigger node $j$, that $\Chg(j\mid Z^\pi)=1$. Let $\sigma$ be any alternative replay assignment whose discriminating path mark on $(j,k)$ differs from the target mark, and let $r$ be the first replay disagreement given by Lemma~\ref{lem:dpt-first-mismatch}.

Assume the following hold for this $(\pi,\sigma^\star,Z^\pi)$:
\begin{enumerate}
    \item[(a)] The segment of $\pi$ from $r$ to $j$ is present in the common skeleton.
    \item[(b)] Under the marks induced by $\sigma$ on that segment, every internal node of the segment is unblocked conditioning on $Z^\pi$.
    \item[(c)] The edge of the segment next to $j$ has a tail at $j$, and $j\notin Z^\pi$.
    \item[(d)] The unblocked path from $C$ to $j$ given $Z^\pi$ intersects the segment from $r$ to $j$ only at $j$.
\end{enumerate}
Then
\[
C\not\!\perp_d r\mid Z^\pi
\qquad \text{in } G^* .
\]
\end{lemma}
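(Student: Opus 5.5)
The plan is to build an explicit $d$-connecting path from $C$ to $r$ given $Z^\pi$ in $G^*$ by joining two pieces at $j$ and then applying Lemma~\ref{lem:active-concat}. This is the same gluing argument used in the proofs of Proposition~\ref{prop:ssi-psi} and Proposition~\ref{prop:cvt-psi}, but here the second piece is a whole path segment rather than a single edge.

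\textbf{Step 1: the path into $j$.} Since $(j,Z^\pi)\in\Tadm$ and $Z^\pi\in\mathcal W(j)$, Lemma~\ref{lem:inv-ci} converts $\Chg(j\mid Z^\pi)=1$ into $C\notindep X_j\mid X_{Z^\pi}$ under $P^*$. Assumption~\ref{ass:change-faithfulness} then yields $C\not\!\perp_d j\mid Z^\pi$ in $G^*$. So there is a path $\pi_1$ from $C$ to $j$ in $G^*$ that is unblocked given $Z^\pi$.

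\textbf{Step 2: the path from $j$ to $r$.} Let $\pi_2$ be the segment of $\pi$ running from $j$ back to $r$.
\begin{itemize}
\item By (a), this segment lies in the common skeleton, so it is an actual path in $G$ and hence in $G^*$.
\item By (b), every internal node of $\pi_2$ is unblocked given $Z^\pi$ under the marks induced by $\sigma$.
\item The endpoint $r$ itself needs no condition, since endpoints are never blocking.
\end{itemize}
Hence $\pi_2$ is an unblocked path from $j$ to $r$ given $Z^\pi$.

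\textbf{Step 3: gluing at $j$.} Apply Lemma~\ref{lem:active-concat} with $a=C$, $b=j$, $d=r$, $\pi_1$ and $\pi_2$. Its hypotheses are checked as follows.
\begin{itemize}
\item Hypothesis (i), that the paths meet only at $j$, is exactly assumption (d).
\item Hypothesis (ii), $j\notin Z^\pi$, is given in (c).
\item Hypothesis (iii) holds because, by (c), the first edge of $\pi_2$ leaves $j$ with a tail, i.e.\ it is $j\to\cdot$. Whatever mark $\pi_1$ puts at $j$, the node $j$ therefore has at most one arrowhead on the concatenated path, so it is a non-collider.
\end{itemize}
The lemma then gives an unblocked path $\pi_1\oplus\pi_2$ from $C$ to $r$ given $Z^\pi$, i.e.\ $C\not\!\perp_d r\mid Z^\pi$ in $G^*$.

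\textbf{Main obstacle.} The delicate point is the interpretation in Step 2. Conditions (b)--(c) are stated in terms of the marks induced by the alternative assignment $\sigma$, while $d$-connection must be evaluated in the actual graph $G^*$. I would resolve this the way the earlier propositions do. Argue by contradiction: suppose the graph under consideration, namely $G$ or any $G'\in[G]_{\mathrm{res}}$ with its augmentation, realizes $\sigma$. Then the segment marks are genuine edge orientations in that graph, and (b)--(c) become statements about real colliders and non-colliders. A second subtlety is that a collider on the segment that is unblocked via a descendant in $Z^\pi$ still counts as open, and (b) already includes this case. I would also check that the node $C$ cannot lie on $\pi_2$: every node of $\pi_2$ is in $S\subseteq V$, so $C\notin\pi_2$, and together with (d) the concatenated path is simple.
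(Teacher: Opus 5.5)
Your proposal is correct and follows the paper's proof step for step. You obtain $\pi_1$ from $\Chg(j\mid Z^\pi)=1$ via Lemma~\ref{lem:inv-ci} and Assumption~\ref{ass:change-faithfulness}, take the $j$-to-$r$ segment as $\pi_2$ using (a)--(b), and glue at $j$ with Lemma~\ref{lem:active-concat} using (c)--(d). Your explicit use of (d) for hypothesis (i), and your remark that (b)--(c) must be read as statements about the actual orientations in $G^*$ (as in the contradiction setting of Lemma~\ref{lem:dpt-contradiction}), spell out points the paper leaves implicit.
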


\begin{proof}[Proof]
By Lemma~\ref{lem:inv-ci} and Assumption~\ref{ass:change-faithfulness}, $\Chg(j\mid Z^\pi)=1$ implies
$C\not\!\perp_d j\mid Z^\pi$ in $G^*$. Hence there exists an unblocked path $\pi_1$ from $C$ to $j$ given $Z^\pi$.

By (a) and (b), the segment of $\pi$ from $j$ to $r$ forms an unblocked path $\pi_2$ conditioning on $Z^\pi$. By (c), the join node $j$ is a non-collider in the concatenation and $j\notin Z^\pi$.
Applying Lemma~\ref{lem:active-concat} to $\pi_1$ and $\pi_2$ yields an unblocked path from $C$ to $r$ given $Z^\pi$.

Therefore $C\not\!\perp_d r\mid Z^\pi$ in $G^*$.
\end{proof}

\begin{lemma}[Contradiction for alternative replay assignment]\label{lem:dpt-contradiction}
Fix $S\in\Sadm$, a candidate discriminating path
\[
\pi=\langle v_0,\ldots,v_{m-1},j,k\rangle
\]
on $S$, and a target replay assignment $\sigma^\star$ on $\mathcal V_{\mathrm{rep}}(\pi)\subseteq\{v_1,\ldots,v_{m-1}\}$.
Let $Z^\pi$ be a common tested witness set supported on $S$ such that
\[
(j,Z^\pi)\in\Tadm,\qquad (v,Z^\pi)\in\Tadm\ \ \forall v\in\mathcal V_{\mathrm{rep}}(\pi),
\]
with
\[
Z^\pi\in\mathcal W(j)\cap \bigcap_{v\in\mathcal V_{\mathrm{rep}}(\pi)}\mathcal W(v).
\]
Assume
\[
\Inv(v\mid Z^\pi)=1\ \ \forall v\in\mathcal V_{\mathrm{rep}}(\pi),
\qquad
\Chg(j\mid Z^\pi)=1.
\]

Assume conditions in Lemma~\ref{lem:dpt-j-to-r} hold for every alternative replay assignment $\sigma$ whose discriminating path mark on $(j,k)$ differs from the target mark. Then no such alternative replay assignment exists. In other words, $\sigma^*$ on $(j,k)$ gives the
unique mark to $(j, k)$ compatible with the tested contrastive literals.
\end{lemma}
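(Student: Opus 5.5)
The plan is a proof by contradiction that chains the three preceding lemmas. Suppose some alternative replay assignment $\sigma\neq\sigma^\star$, compatible with the local constraints on $S$, yields a discriminating-path mark on $(j,k)$ that differs from $\mathrm{DP}_\pi(\sigma^\star)$. Then $\sigma$ and $\sigma^\star$ must disagree somewhere on $\mathcal V_{\mathrm{rep}}(\pi)$, since $\mathrm{DP}_\pi$ is a function of the replay assignment alone by Definition~\ref{def:dp-mark}. So Lemma~\ref{lem:dpt-first-mismatch} applies. It gives a first replay disagreement $r\in\mathcal V_{\mathrm{rep}}(\pi)$, ordered from $j$ toward $v_0$: we have $\sigma(r)\neq\sigma^\star(r)$, and $\sigma$ agrees with $\sigma^\star$ on every replay node strictly between $j$ and $r$.

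Next, by hypothesis the conditions (a)--(d) of Lemma~\ref{lem:dpt-j-to-r} hold for this $\sigma$ and hence for this $r$. Together with $\Chg(j\mid Z^\pi)=1$, Lemma~\ref{lem:dpt-j-to-r} gives $C\not\!\perp_d r\mid Z^\pi$ in $G^*$. The ingredients behind this are:
\begin{itemize}
\item Lemma~\ref{lem:inv-ci} and Assumption~\ref{ass:change-faithfulness}, which apply because $(j,Z^\pi)\in\Tadm$ and $Z^\pi\in\mathcal W(j)$, and produce an unblocked path from $C$ to $j$;
\item Lemma~\ref{lem:active-concat}, which concatenates that path with the unblocked segment from $j$ to $r$, using the tail at $j$ and $j\notin Z^\pi$.
\end{itemize}

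Then I transfer this graphical dependence back to the distribution. Since $r\in\mathcal V_{\mathrm{rep}}(\pi)$, we have $(r,Z^\pi)\in\Tadm$ and $Z^\pi\in\mathcal W(r)$. So the direction $C\not\!\perp_d r\mid Z^\pi \Rightarrow C\notindep X_r\mid X_{Z^\pi}$ of Assumption~\ref{ass:change-faithfulness} applies. Lemma~\ref{lem:inv-ci} then converts this into $\Chg(r\mid Z^\pi)=1$, i.e.\ $\Inv(r\mid Z^\pi)=0$. This contradicts the premise $\Inv(v\mid Z^\pi)=1$ for all replay nodes. Hence no alternative assignment changing the mark exists.

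Finally, I conclude uniqueness. Every assignment compatible with the local constraints and the tested literals induces the same mark as $\sigma^\star$. Since $\mathcal V_{\mathrm{rep}}(\pi)$ is sufficient (Definition~\ref{def:replay-nodes}), this mark is determined, so $\mathrm{DP}_\pi(\sigma^\star)$ is the unique compatible mark on $(j,k)$.

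The step I expect to need the most care is interpreting the hypothesis that the conditions of Lemma~\ref{lem:dpt-j-to-r} hold for every alternative $\sigma$. The segment from $r$ to $j$ must be evaluated in the true augmented graph $G^*$, since d-connection is a property of $G^*$. I would therefore read the marks "induced by $\sigma$" as those realized in $G$ whenever $\sigma$ is the actual status configuration. The quantifier over all alternatives is exactly what makes this reading legitimate: whatever configuration $G$ actually realizes, if it differs from $\sigma^\star$ in the mark, it produces the forbidden dependence at $r$. I would make this explicit, and otherwise treat the argument as a direct composition of the lemmas.
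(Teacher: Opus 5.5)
Your proposal is correct and follows the paper's proof essentially step for step: you assume an alternative $\sigma$ that changes the mark, extract the first replay disagreement $r$ via Lemma~\ref{lem:dpt-first-mismatch}, obtain $C\not\!\perp_d r\mid Z^\pi$ from Lemma~\ref{lem:dpt-j-to-r}, and convert this through Assumption~\ref{ass:change-faithfulness} and Lemma~\ref{lem:inv-ci} into $\Chg(r\mid Z^\pi)=1$, which contradicts $\Inv(r\mid Z^\pi)=1$. Your added remark that the segment marks must be read as those actually realized in $G^*$ is a sensible clarification that the paper leaves implicit, and it matches how Proposition~\ref{prop:dpt-psi} uses the lemma to conclude that the target mark holds in $G[S]$.
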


\begin{proof}[Proof]
Assume for contradiction that an alternative replay assignment $\sigma \neq \sigma^*$ induces a different mark on $(j,k)$.
By Lemma~\ref{lem:dpt-first-mismatch}, there exists a first replay disagreement $r\in\mathcal V_{\mathrm{rep}}(\pi)$.

By Lemma~\ref{lem:dpt-j-to-r}, $\Chg(j\mid Z^\pi)=1$ implies
\[
C\not\!\perp_d r\mid Z^\pi
\qquad \text{in } G^* .
\]
Since $(r,Z^\pi)\in\Tadm$ and $Z^\pi\in\mathcal W(r)$, Assumption~\ref{ass:change-faithfulness} gives
\[
C\not\!\perp_d r\mid Z^\pi \ \Longrightarrow\ C\not\!\indep X_r\mid X_{Z^\pi}.
\]
By Lemma~\ref{lem:inv-ci}, this implies $\Chg(r\mid Z^\pi)=1$, contradicting $\Inv(r\mid Z^\pi)=1$. Therefore $\sigma$ does not exist.
\end{proof}

\begin{proposition}[Contrastive Discriminating Path (DPT)]\label{prop:dpt-psi}
Let $S\in\Sadm$ and let $(j,k)$ be an adjacency that is undirected in both local PDAGs
$E_S^{(0)}$ and $E_S^{(1)}$. Suppose the common local skeleton on $S$ contains a candidate discriminating-path
\[
\pi=\langle v_0,\ldots,v_{m-1},j,k\rangle
\]
(with $v_0$ non-adjacent to $k$) such that the unknown replay-node statuses in $\mathcal V_{\mathrm{rep}}(\pi)\subseteq\{v_1,\ldots,v_{m-1}\}$ prevents the discriminating path mark on $(j, k)$. Assume there exists a common tested witness $Z^\pi$ supported on $S$ such that
\[
(j,Z^\pi)\in\Tadm,\qquad (v,Z^\pi)\in\Tadm\ \ \forall v\in\mathcal V_{\mathrm{rep}}(\pi),
\]
with
\[
Z^\pi\in\mathcal W(j)\cap \bigcap_{v\in\mathcal V_{\mathrm{rep}}(\pi)}\mathcal W(v),
\]
and
\[
\Inv(v\mid Z^\pi)=1\ \ \forall v\in\mathcal V_{\mathrm{rep}}(\pi),
\qquad
\Chg(j\mid Z^\pi)=1.
\]
Assume the following holds:
\begin{enumerate}
    \item[(i)] Common skeleton for $\pi$ exist in every $G'\in [G]_{\mathrm{res}}$;
    \item[(ii)] For every $G' \in [G]_{\mathrm{res}}$ and every alternative replay assignment that would change the
    discriminating-path mark on $(j,k)$ in $G'[S]$, the conditions in Lemma~\ref{lem:dpt-j-to-r} hold.
\end{enumerate}
Then the discriminating-path mark on $(j,k)$ is invariant across $[G]_{\mathrm{res}}$ and hence appears in $\Gtest(G)$.
\end{proposition}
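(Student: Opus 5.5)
The argument mirrors the two-part structure used for Propositions~\ref{prop:ssi-psi} and~\ref{prop:cvt-psi}. First I would establish uniqueness of the mark for the generating DAG $G$. Then I would lift it to every $G'\in[G]_{\mathrm{res}}$ by restricted $\Psi$-equivalence.

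\textbf{Step 1 (uniqueness in $G$).} Fix $(P^{(0)},P^{(1)})\in\mathcal M(G)$, together with the mixture $P^*$ and the augmented graph $G^*$ of Assumption~\ref{ass:change-faithfulness}. Let $\sigma^\star$ be the replay assignment realized by the true statuses of the nodes of $\mathcal V_{\mathrm{rep}}(\pi)$ in $G[S]$. It is compatible with the local constraints, so by sufficiency (Definition~\ref{def:replay-nodes}) $\mathrm{DP}_\pi(\sigma^\star)$ is a well-defined mark on $(j,k)$; by construction it is the mark present in $G[S]$.

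Every hypothesis of Lemma~\ref{lem:dpt-contradiction} is supplied by the statement:
\begin{itemize}
\item the witness $Z^\pi$ is supported on $S$, and the required pairs lie in $\Tadm$ and in the corresponding $\mathcal W(\cdot)$;
\item the literals $\Inv(v\mid Z^\pi)=1$ for $v\in\mathcal V_{\mathrm{rep}}(\pi)$ and $\Chg(j\mid Z^\pi)=1$ are assumed;
\item the conditions of Lemma~\ref{lem:dpt-j-to-r} hold for every alternative assignment, by hypothesis (ii) with $G'=G$.
\end{itemize}
Applying that lemma, no alternative $\sigma$ produces a different mark. Hence the mark on $(j,k)$ is the unique one compatible with the tested contrastive literals. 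Internally this uses Lemma~\ref{lem:dpt-first-mismatch} to pick the first disagreement $r$, and Lemma~\ref{lem:dpt-j-to-r} together with Lemma~\ref{lem:active-concat} to derive $C\not\perp_d r\mid Z^\pi$. That contradicts $\Inv(r\mid Z^\pi)=1$ via Assumption~\ref{ass:change-faithfulness} and Lemma~\ref{lem:inv-ci}.

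\textbf{Step 2 (lift to the class).} Take an arbitrary $G'\in[G]_{\mathrm{res}}$. Since $\Psi^{\mathrm{res}}_{\Sadm,\Tadm}(G';\mathcal M)=\Psi^{\mathrm{res}}_{\Sadm,\Tadm}(G;\mathcal M)$, there is some $(Q^{(0)},Q^{(1)})\in\mathcal M(G')$ producing the same local PDAGs and the same invariance literals. In particular, the $\Inv$ and $\Chg$ values at $Z^\pi$ coincide, and so do the local PDAGs used to define $\mathcal W(\cdot)$, so the witness conditions are preserved.

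By (i), the skeleton path $\pi$ exists in $G'[S]$, and by (ii) the conditions of Lemma~\ref{lem:dpt-j-to-r} hold there. I would then rerun Step 1 with $Q^*$ and $(G')^*$; this uses Assumption~\ref{ass:change-faithfulness} for $G'$, applied as in the earlier propositions. The conclusion is that $G'[S]$ carries the unique mark on $(j,k)$ compatible with the literals.

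It remains to show that this mark agrees with the one in $G$. Both marks are the output of $\mathrm{DP}_\pi$ at the unique surviving assignment under identical local constraints and identical literals, so they coincide. Therefore the mark is invariant across $[G]_{\mathrm{res}}$, and by Definition~\ref{def:Gtest} it appears in $\Gtest(G)$.

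\textbf{Main obstacle.} The hardest part is this last identification: showing that the unique compatible assignment, and hence the mark, is the same for $G$ and $G'$. Uniqueness within each graph only says that each graph has exactly one surviving assignment. To close the gap I would argue as follows. Lemma~\ref{lem:dpt-contradiction} characterizes the surviving assignment purely in terms of data that are shared: the local constraints on $S$ (equal, because $E_S^{(c)}$ is part of the image) and the tested literals (equal). So the set of compatible assignments is a function of the restricted $\Psi$-image alone. Two graphs with the same image therefore admit the same singleton set of assignments and receive the same $\mathrm{DP}_\pi$ value. I would make this explicit by treating the family of compatible assignments as determined by the image, with $\mathrm{DP}_\pi$ evaluated on it.
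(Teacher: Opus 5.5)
Your two-part structure (uniqueness in $G$, then lifting to $[G]_{\mathrm{res}}$) matches the paper's. The gap is in how you handle the step you flag as the main obstacle. Your choice of $\sigma^\star$ creates that obstacle, and the resolution you propose does not go through.

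If $\sigma^\star$ is defined as the statuses realized in $G[S]$, then ``the mark in $G[S]$ is $\mathrm{DP}_\pi(\sigma^\star)$'' is true by definition. The nontrivial part, that no other assignment is compatible with the literals, is not what Lemma~\ref{lem:dpt-contradiction} delivers inside $G$. Its proof, via Lemma~\ref{lem:dpt-j-to-r}, turns the marks induced by the alternative $\sigma$ on the segment from $r$ to $j$ into an unblocked path in $G^*$. That is only legitimate when $\sigma$ is the assignment $G$ actually realizes.

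Your claim that the set of compatible assignments is a function of the restricted $\Psi$-image alone is also unsupported. Excluding an alternative relies on conditions (a)--(d) of Lemma~\ref{lem:dpt-j-to-r}, which concern a specific augmented graph: blocking of the segment given $Z^\pi$, and the particular $C$--$j$ path in (d). Hypothesis (ii) is likewise imposed graph by graph. Equal images give equal local PDAGs and equal literals, not equal ``surviving sets''. Rerunning Step 1 in $G'$ with $G'$'s own realized assignment as target yields a second singleton with nothing linking it to the first.

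The missing idea is to hold the target fixed across the class and test each graph's realized assignment against it.
\begin{enumerate}
\item[1.] Fix $\sigma^\star$, take $G'\in[G]_{\mathrm{res}}$ with realized assignment $\sigma'$ on $G'[S]$, and suppose $\mathrm{DP}_\pi(\sigma')\neq\mathrm{DP}_\pi(\sigma^\star)$.
\item[2.] Then $\sigma'$ is an alternative relative to the fixed target, so (ii) supplies the conditions of Lemma~\ref{lem:dpt-j-to-r} in $(G')^*$ at the first disagreement $r$.
\item[3.] $\Chg(j\mid Z^\pi)=1$ also holds under $Q$ by image equality, and $\sigma'$ is what $G'$ actually realizes. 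So the concatenation gives $C\not\!\perp_d r\mid Z^\pi$ in $(G')^*$.
\item[4.] Assumption~\ref{ass:change-faithfulness} for $(Q^{(0)},Q^{(1)})$, with $Z^\pi\in\mathcal W(r)$ preserved as you note, and Lemma~\ref{lem:inv-ci} then give $\Chg(r\mid Z^\pi)=1$ under $Q$.
\item[5.] This contradicts $\Inv(r\mid Z^\pi)=1$, which holds under $Q$ because the images coincide.
\end{enumerate}
Hence $\mathrm{DP}_\pi(\sigma')=\mathrm{DP}_\pi(\sigma^\star)$ directly. This is what the paper's ``re-applying Lemma~\ref{lem:dpt-contradiction} to $G'$ yields the same mark'' amounts to, with the target held fixed, and no image-determinacy claim is needed.
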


\begin{proof}[Proof]
\textbf{Restricted $\Psi$-equivalence soundness.}
Fix $G$ with $(P^{(0)},P^{(1)})\in\mathcal M(G)$. By Lemma~\ref{lem:dpt-contradiction}, $\{\Inv(v\mid Z^\pi)\}_{v\in\mathcal V_{\mathrm{rep}}(\pi)}$ together with $\Chg(j\mid Z^\pi)=1$ uniquely determine the discriminating-path mark on $(j,k)$ among replay assignments compatible with the path skeleton. Hence the target mark must hold in $G[S]$.

\textbf{Compelled in $\Gtest$.}
Let $G'\in [G]_{\mathrm{res}}$ be arbitrary. By Definition~\ref{def:res-psi-equivalence}, the same restricted $\Psi$-image
tuple witnessing the DPT premises is achievable under some $(Q^{(0)},Q^{(1)})\in\mathcal M(G')$. Re-applying
Lemma~\ref{lem:dpt-contradiction} to $G'$ yields the same discriminating-path mark on $(j,k)$ in $G'[S]$. By assumption (i),
the motif adjacencies persist in every $G'\in [G]_{\mathrm{res}}$, so the mark holds in every such $G'$. Therefore it
is compelled across $[G]_{\mathrm{res}}$ and appears in $\Gtest(G)$ by Definition~\ref{def:Gtest}.
\end{proof}

\section{Global Restricted $\Psi$ I-EG Estimation}
\subsection{Per-regime Estimand}
Since non-contrastive aggregators do not have access to cross-regime contrast, we define a reduction of the restricted $\Psi$-image and the test-induced $\Psi$ essential graph based on per-regime PDAG information, excluding these cross-regime tested invariance queries. 

\begin{definition}[Reduced restricted $\Psi$-equivalence]\label{def:red-res-psi-equivalence}
For admissible subsets in $\mathcal{S}_\mathrm{a}$, a within-regime discovery operator $\mathcal{D}$ (Definition~\ref{def::pdag}), and a two-regime fixed-graph modular soft-intervention SCM class $\mathcal{M}$, where $\mathcal{M}(G)$ denotes the set of two-regime distribution pairs generated by models in $\mathcal{M}$ with underlying DAG $G$.

Define the reduced $\Psi$-image of $G$ under $\Sadm$ as
\[
\Psi_{\Sadm}^\mathrm{res}(G;\mathcal{M})
:=
\left\{
\left(\{E_S^{(0)}\}_{S\in\Sadm},\{E_S^{(1)}\}_{S\in\Sadm}\right)
:\ (P^{(0)},P^{(1)})\in \mathcal{M}(G),\ 
E_S^{(c)}:=\mathcal{D}(P_S^{(c)})
\right\}.
\]
Two DAGs $G,G'$ on $V$ are reduced restricted $\Psi$-Markov equivalent (w.r.t.\ $\Sadm,\mathcal{D}$ and $\mathcal{M}$) if
\[
\Psi_{\Sadm}^\mathrm{res}(G;\mathcal{M})=\Psi_{\Sadm}^\mathrm{res}(G';\mathcal{M}).
\]
\end{definition}

\begin{definition}[Reduced test-induced $\Psi$ essential graph]\label{def:red-G-test}
For admissible subsets in $\mathcal{S}_\mathrm{a}$, a within-regime discovery operator $\mathcal{D}$, and a two-regime SCM class $\mathcal{M}$. For a DAG $G$ on $V$, define its reduced restricted-$\Psi$ equivalence class as
\[
[G]_{\Sadm}^\mathrm{res}
:=
\{\, G' \text{ DAG on } V \;:\; \Psi_{\Sadm}^\mathrm{res}(G';\mathcal{M})=\Psi_{\Sadm}^\mathrm{res}(G;\mathcal{M}) \,\}.
\]

Define the reduced test-induced $\Psi$ essential graph $\redGtest(G)$ as the PDAG on $V$ whose skeleton is the common skeleton of the class $[G]_{\Sadm}^\mathrm{res}$, i.e.
\[
u - v \text{ is an edge in } \redGtest(G)
\Longleftrightarrow
u \text{ and } v \text{ are adjacent in every } G' \in [G]_{\Sadm}^\mathrm{res},
\]
and whose orientations are exactly the compelled ones:
\[
u \to v \text{ in } \redGtest(G)
\Longleftrightarrow
u \to v \text{ in every } G' \in [G]_{\Sadm}^\mathrm{res}.
\]
All remaining edges in the skeleton are left undirected.
\end{definition}

\subsection{Knowledge sets and aggregator}
Let $\{ E_S^{(c)}\}_{S \in \Sadm, c \in \{ 0, 1\}}$ be the per-regime local PDAGs for each $S \in \Sadm$ and regime $c \in \{0, 1\}$, and let $H_\mathrm{obs}$ be the global observational PDAG (generated by a standard constraint- or score-based algorithm applied to the observational regime). Define the per-regime background knowledge,
\[
\mathcal{K}_\mathrm{per} := \DirE(\redGtest(G)), \qquad
H_\mathrm{per} := \MPDAG(H_\mathrm{obs} \ \text{subject to} \ \mathcal{K}_\mathrm{per})
\]
where $\DirE(\cdot)$ denotes the operator for collecting all directed edges in a graph.

Let $\mathrm{ContR}(\cdot)$ be the operator for collecting directed edges output by contrastive orientation rules such as SSI/CVT/DPT. Define the contrastive-rule background knowledge,
\[
\mathcal{K}_\mathrm{rules}
:=
\bigcup_{S\in \Sadm}
\mathrm{ContR}\!\left(E_S^{(0)}, E_S^{(1)}, \{\Inv(v \mid Z)\}_{(v,Z)\in \Tadm}\right),
\]
and the corresponding aggregation,
\[
H_\mathrm{ctr}:= \MPDAG(H_\mathrm{obs}\ \text{subject to}\ \mathcal{K}_\mathrm{per} \cup \mathcal{K}_\mathrm{rules}).
\]

\begin{definition}[Non-contrastive, subset-respecting aggregators]\label{def:non-ctr-agg}
Mapping $\mathcal{A}: \{ E_S^{(c)} \}_{S \in \Sadm, c \in \{0, 1\}} \rightarrow \PDAG(V)$ is non-contrastive and subset-respecting if:
\begin{enumerate}
    \item every directed edge in $\mathcal{A}$ follows from per-regime local PDAG information and Meek closure,
    \[
    \DirE(\mathcal{A}) \subseteq \DirE(H_\mathrm{per}).
    \]
    \item $\mathcal{A}$ is invariant to swapping regime labels $(0 \leftrightarrow 1)$.
\end{enumerate}
\end{definition}

\subsection{Monotone enrichment, soundness, and separation (population)}
\begin{definition}[Contrastively compelled edges]\label{def:R}
Let $R := \DirE(\Gtest) \setminus \DirE(\redGtest)$ denote the set of directed edges compelled in $\Gtest$ but not in $\redGtest$.
\end{definition}

\begin{lemma}[Impossibility for non-contrastive subset aggregation]\label{lem:noncontrastive-impossible}
Fix a two-regime SCM class $\mathcal{M}$ satisfying Assumptions~\ref{ass:finite-scope} 
and \ref{ass:change-faithfulness}. Assume moreover that the per-regime aggregation is closure-sound with respect to the reduced target, i.e.
\[
\DirE(H_\mathrm{per}) \subseteq \DirE(\redGtest).
\]
If $R \neq \varnothing$, then for any $e \in R$, any 
non-contrastive, subset-respecting aggregator $\mathcal{A}$ that is sound for this 
model class must leave $e$ unoriented.
\end{lemma}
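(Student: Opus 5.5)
The argument is a short chain of inclusions. Fix $e=(u\to v)\in R$. By Definition~\ref{def:R}, $e\in\DirE(\Gtest)$ but $e\notin\DirE(\redGtest)$. Let $\mathcal A$ be any non-contrastive, subset-respecting aggregator. Condition~1 of Definition~\ref{def:non-ctr-agg} gives $\DirE(\mathcal A)\subseteq\DirE(H_\mathrm{per})$, and the closure-soundness hypothesis gives $\DirE(H_\mathrm{per})\subseteq\DirE(\redGtest)$. Composing the two,
\[
\DirE(\mathcal A)\subseteq\DirE(\redGtest),
\]
so $u\to v\notin\DirE(\mathcal A)$ because $e\notin\DirE(\redGtest)$.

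It remains to exclude the reverse orientation $v\to u$ in $\mathcal A$, so that $e$ is genuinely unoriented rather than misoriented. Here I use soundness of $\mathcal A$ for the model class: every directed edge output by $\mathcal A$ must lie in $\DirE(\redGtest)$, and hence in every $G''\in[G]^{\mathrm{res}}_{\Sadm}$. The true $G$ belongs to its own reduced class. Since $e\in\DirE(\Gtest)$ and $G\in[G]_{\mathrm{res}}$, we have $u\to v$ in $G$. Therefore $v\to u$ is not in every member of $[G]^{\mathrm{res}}_{\Sadm}$, so $v\to u\notin\DirE(\redGtest)$ and hence $v\to u\notin\DirE(\mathcal A)$. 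Both orientations are excluded, so if $u,v$ are adjacent in $\mathcal A$ the edge is left undirected.

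To explain why this is forced and not an artifact of the definitions, I will also give the information-theoretic reading. Because $e\notin\DirE(\redGtest)$, there is some $G''\in[G]^{\mathrm{res}}_{\Sadm}$ with $u\to v\notin G''$. By Definition~\ref{def:red-res-psi-equivalence}, $G$ and $G''$ have identical reduced images, so they produce the same input tuple $\{E_S^{(c)}\}$. Any map of that tuple alone, in particular $\mathcal A$, returns the same output under $G$ and $G''$. Orienting $u\to v$ would then be wrong under $G''$, which contradicts soundness. Only the invariance literals in $\Tadm$ can separate $G$ from $G''$, and $\mathcal A$ does not see them. Swap invariance (condition~2) is not needed for this argument.

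The main obstacle is a gap in the paper's definitions. The hypothesis states only $\DirE(H_\mathrm{per})\subseteq\DirE(\redGtest)$, which is not literally the same as soundness of $\mathcal A$ with respect to the reduced class. My plan closes this gap by reading ``sound for this model class'' as ``never orients an edge against some member of $[G]^{\mathrm{res}}_{\Sadm}$'', and I will state that reading explicitly. With that convention the reverse-orientation step follows directly from the same-input argument above.
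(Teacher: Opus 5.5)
Your proof is correct and is essentially the paper's argument: the paper also composes $\DirE(\mathcal A)\subseteq\DirE(H_\mathrm{per})\subseteq\DirE(\redGtest)$ with $e\notin\DirE(\redGtest)$ and stops there. Your extra step excluding $v\to u$, which the paper leaves implicit, is valid and needs no special reading of ``sound'': it follows from that same composed inclusion together with $u\to v\in G\in[G]^{\mathrm{res}}_{\Sadm}$, so the ``gap'' you flag in your last paragraph does not actually arise.
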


\begin{proof}[Proof]
Fix $e = (u \to v) \in R$. By Definition~\ref{def:R}, $e \in \DirE(\Gtest)$ 
and $e \notin \DirE(\redGtest)$.

Let $\mathcal{A}$ be any non-contrastive, subset-respecting aggregator. By 
Definition~\ref{def:non-ctr-agg},
\[
\DirE(\mathcal{A}) \subseteq \DirE(H_{\mathrm{per}}).
\]
By the assumed per-regime closure-soundness,
\[
\DirE(H_{\mathrm{per}}) \subseteq \DirE(\redGtest).
\]
Hence $e \notin \DirE(H_{\mathrm{per}})$, and therefore $e \notin \DirE(\mathcal{A})$.
Thus any sound non-contrastive, subset-respecting aggregator $\mathcal{A}$ must leave $e$ unoriented.
\end{proof}

\begin{corollary}[Reduced identifiability ceiling for non-contrastive subset aggregators]\label{cor:red-ident}
It follows from Lemma~\ref{lem:noncontrastive-impossible} that any sound non-contrastive, subset-respecting aggregator $\mathcal{A}$ cannot be uniformly guaranteed to orient edges in
\[
R=\DirE(\Gtest)\setminus \DirE(\redGtest).
\]
Equivalently, the maximal universally sound directed-edge target available from per-regime subset information alone is bounded by $\redGtest$.
\end{corollary}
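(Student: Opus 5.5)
The corollary is essentially a restatement of Lemma~\ref{lem:noncontrastive-impossible} in universal form, so the plan is to derive it directly from that lemma together with Definitions~\ref{def:non-ctr-agg} and~\ref{def:R}. Under the standing hypothesis of Lemma~\ref{lem:noncontrastive-impossible} that $\DirE(H_\mathrm{per})\subseteq\DirE(\redGtest)$, I will fix an arbitrary sound non-contrastive, subset-respecting aggregator $\mathcal{A}$. The first clause of Definition~\ref{def:non-ctr-agg} gives $\DirE(\mathcal{A})\subseteq\DirE(H_\mathrm{per})$, so chaining the two inclusions yields
\[
\DirE(\mathcal{A})\subseteq\DirE(\redGtest).
\]
This inclusion is the precise content of the second sentence: the directed edges of any such aggregator are bounded by $\redGtest$.

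For the first sentence, I take any $e\in R$. By Definition~\ref{def:R}, $e\notin\DirE(\redGtest)$, so the displayed inclusion gives $e\notin\DirE(\mathcal{A})$. Since $\mathcal{A}$ was arbitrary, no sound non-contrastive, subset-respecting aggregator orients $e$. This is strictly stronger than saying it cannot be \emph{uniformly guaranteed} to orient $e$. If $R=\varnothing$ the statement is vacuous.

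To justify the word \emph{maximal} in the second sentence, I will also check that the bound is attained rather than merely an upper bound. The per-regime aggregation $H_\mathrm{per}=\MPDAG(H_\mathrm{obs}\ \text{subject to}\ \mathcal{K}_\mathrm{per})$ uses $\mathcal{K}_\mathrm{per}=\DirE(\redGtest)$ as background knowledge, and the $\MPDAG$ operator keeps every background-knowledge edge. Hence $\DirE(\redGtest)\subseteq\DirE(H_\mathrm{per})$, and together with the assumed reverse inclusion this gives equality. So $H_\mathrm{per}$ is itself a non-contrastive aggregator that achieves the ceiling. I should also confirm that it is invariant to swapping regime labels, so that it satisfies clause~2 of Definition~\ref{def:non-ctr-agg}. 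This holds because the reduced image in Definition~\ref{def:red-res-psi-equivalence} treats the two regimes symmetrically.

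The only delicate point is interpretive: what "universally sound" means, and whether $\redGtest$ could be beaten by an aggregator that exploits the $\mathcal{M}$-specific structure of the per-regime PDAGs. I expect this to be the main obstacle, and I would handle it by noting that Definition~\ref{def:non-ctr-agg} imposes the bound $\DirE(\mathcal{A})\subseteq\DirE(H_\mathrm{per})$ by fiat, so the argument stays purely set-theoretic and needs no further use of Assumption~\ref{ass:change-faithfulness}.
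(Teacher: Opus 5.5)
Your core argument is correct and matches the paper: the corollary is stated as an immediate consequence of Lemma~\ref{lem:noncontrastive-impossible}, whose proof is the same chain $\DirE(\mathcal{A})\subseteq\DirE(H_\mathrm{per})\subseteq\DirE(\redGtest)$, from which $e\in R$ gives $e\notin\DirE(\redGtest)$ and hence $e\notin\DirE(\mathcal{A})$. Your attainment paragraph is not needed, because the statement only claims an upper bound, and you should drop it since its key claim is not supported: an aggregator in Definition~\ref{def:non-ctr-agg} must be a map on the local PDAGs $\{E_S^{(c)}\}$, but $H_\mathrm{per}$ is built from the global $H_\mathrm{obs}$ and from $\mathcal{K}_\mathrm{per}=\DirE(\redGtest(G))$, and neither is shown to be a function of the observed local PDAGs (the latter depends on the reduced class of the unknown $G$).
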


\begin{theorem}[Contrastive aggregation guarantees and $\Gtest$-soundness]\label{thm:ctr-dom-soundness}
Working at the per-regime PDAG level with access to $\{E_S^{(c)}\}_{S \in \Sadm, c \in \{ 0, 1 \}}$ and $\{ \mathrm{Inv}(v \mid Z) \}_{(v, Z) \in \Tadm}$ for a testing family $(\Sadm, \Tadm)$, under Assumptions~\ref{ass:change-faithfulness} and ~\ref{ass:closure-sound}, and assuming
\[
\DirE(H_\mathrm{per}) \subseteq \DirE(\redGtest),
\]
the following hold:
\begin{enumerate}
    \item \textbf{Monotone enrichment:} $\DirE(H_\mathrm{obs}) \subseteq \DirE(H_\mathrm{per}) \subseteq \DirE(H_\mathrm{ctr})$.
    \item $\Gtest$-\textbf{soundness:} $\DirE(H_\mathrm{ctr}) \subseteq \DirE(\Gtest)$.
    \item \textbf{Separation:} If $R \neq \varnothing$, then for any \emph{sound} non-contrastive, subset-respecting aggregator $\mathcal{A}$,
    \[
    \DirE(\mathcal{A}) \subsetneq \DirE(\Gtest)
    \qquad\text{and}\qquad
    \DirE(H_\mathrm{per}) \subsetneq \DirE(\Gtest).
    \]
\end{enumerate}
\end{theorem}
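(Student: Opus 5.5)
The plan proves the three items in order, relying throughout on one structural fact: $\MPDAG(H_\mathrm{obs}\ \text{subject to}\ \mathcal K)$ is monotone in the background-knowledge set $\mathcal K$. Adding directed constraints and then applying Meek closure never removes an orientation that was already present.

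\textbf{Item 1.} For the first inclusion $\DirE(H_\mathrm{obs})\subseteq\DirE(H_\mathrm{per})$, we use that $H_\mathrm{per}$ is obtained from $H_\mathrm{obs}$ by imposing $\mathcal K_\mathrm{per}$ and closing. Every directed edge of $H_\mathrm{obs}$ is retained as a constraint, so it persists after closure. For the second inclusion, we note $\mathcal K_\mathrm{per}\subseteq\mathcal K_\mathrm{per}\cup\mathcal K_\mathrm{rules}$. Monotonicity of Meek closure under enlarged knowledge then gives $\DirE(H_\mathrm{per})\subseteq\DirE(H_\mathrm{ctr})$. This requires the enlarged knowledge set to be consistent, i.e.\ an extension exists; that consistency is supplied by item 2, since all constraints are satisfied by the true $G$.

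\textbf{Item 2.} We first show that every directed edge of $\mathcal K_\mathrm{per}\cup\mathcal K_\mathrm{rules}$ lies in $\DirE(\Gtest)$, and that $\DirE(H_\mathrm{obs})$ does as well.
\begin{itemize}
\item \emph{Rule edges.} For $\mathcal K_\mathrm{rules}$, each edge is produced by SSI, CVT or DPT. Propositions~\ref{prop:ssi-psi}, \ref{prop:cvt-psi} and \ref{prop:dpt-psi} state that such edges are compelled across $[G]_{\mathrm{res}}$, hence lie in $\Gtest$.
\item \emph{Per-regime edges.} For $\mathcal K_\mathrm{per}=\DirE(\redGtest)$, the restricted image refines the reduced image: equal restricted images imply equal projections onto the $E_S^{(c)}$ components. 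Hence $[G]_{\mathrm{res}}\subseteq[G]^{\mathrm{res}}_{\Sadm}$, so any orientation shared by every member of the larger class is shared by every member of the smaller one. This gives $\DirE(\redGtest)\subseteq\DirE(\Gtest)$.
\item \emph{Observational edges.} $\DirE(H_\mathrm{obs})\subseteq\DirE(\Gtest)$ follows from item 1 combined with the hypothesis $\DirE(H_\mathrm{per})\subseteq\DirE(\redGtest)$.
\end{itemize}
The combined directed constraint set is therefore consistent with $\Gtest$. Assumption~\ref{ass:closure-sound} then ensures the MPDAG closure introduces no orientation contradicting $\Gtest$. The edges newly oriented by Meek closure must still be shown compelled. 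I would argue this as follows: every $G'\in[G]_{\mathrm{res}}$ satisfies all imposed constraints, and Meek rules are sound for the set of DAG extensions. So each closure orientation holds in every such $G'$ and hence in $\Gtest$.

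\textbf{Item 3.} By Definition~\ref{def:non-ctr-agg} and the hypothesis, $\DirE(\mathcal A)\subseteq\DirE(H_\mathrm{per})\subseteq\DirE(\redGtest)\subseteq\DirE(\Gtest)$, using the inclusion shown in item 2. If $R\neq\varnothing$, pick $e\in R$. Then $e\in\DirE(\Gtest)$, while Lemma~\ref{lem:noncontrastive-impossible} gives $e\notin\DirE(H_\mathrm{per})$ and $e\notin\DirE(\mathcal A)$. Both inclusions are therefore strict.

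The main obstacle is item 2. The difficulty is justifying that closure-propagated edges are compelled over $[G]_{\mathrm{res}}$ rather than merely non-contradictory, and that $H_\mathrm{obs}$'s orientations are sound for the whole class. I would handle both by arguing over the set of DAG extensions, using that $[G]_{\mathrm{res}}$ is contained in the set of DAGs consistent with every imposed constraint.
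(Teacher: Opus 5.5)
Most of your proposal follows the paper's proof. Item 1 uses monotonicity of $\MPDAG(\cdot)$ in the background knowledge. Item 2 uses the refinement $[G]_{\mathrm{res}}\subseteq[G]^{\mathrm{res}}_{\Sadm}$ (your projection-of-images justification is correct) together with Propositions~\ref{prop:ssi-psi}--\ref{prop:dpt-psi}. Item 3 uses Lemma~\ref{lem:noncontrastive-impossible}. Your explicit chain $\DirE(\mathcal A)\subseteq\DirE(H_\mathrm{per})\subseteq\DirE(\redGtest)\subseteq\DirE(\Gtest)$ in item 3 is tidier than the paper's. One small fix in item 1: the consistency you need comes from the true $G$ being an extension of $H_\mathrm{obs}$ that contains $\mathcal K_\mathrm{per}\cup\mathcal K_\mathrm{rules}$. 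It does not need your observational bullet, which itself uses item 1, so state it that way to avoid an apparent circularity.

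The step that fails is your own justification that closure-propagated orientations are compelled. You argue that every $G'\in[G]_{\mathrm{res}}$ satisfies the imposed constraints and that Meek's rules are sound over DAG extensions. But Meek's rules are sound only over \emph{consistent extensions} of $H_\mathrm{obs}$ with the given knowledge, i.e.\ DAGs sharing the skeleton and v-structures of $H_\mathrm{obs}$. The rules fire on non-adjacencies (R1 turns $a\to b - c$ into $b\to c$ only because $a$ and $c$ are non-adjacent) and on forbidding new colliders.

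Containing every edge of $\DirE(H_\mathrm{obs})\cup\mathcal K_\mathrm{per}\cup\mathcal K_\mathrm{rules}$ does not make $G'$ such an extension. Moreover, $[G]_{\mathrm{res}}$ is cut out only by subset-level PDAGs and finitely many invariance literals, so it is in general coarser than the Markov equivalence class of $G$. Nothing in its definition stops a member from adding an $a$--$c$ adjacency, lacking $b$--$c$, or orienting $c\to b$. In each of these cases the propagated $b\to c$ is not in $\DirE(\Gtest)$. So the containment of $[G]_{\mathrm{res}}$ in the extension set, on which your final sentence rests, is unproved and false in general.

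The paper does not derive this step at all. It takes it from Assumption~\ref{ass:closure-sound}, read as saying that closing a constraint set consistent with $\Gtest$ produces only orientations of $\Gtest$. You are right that ``introduces no contradiction'' is literally weaker than ``lands in $\DirE(\Gtest)$'', and the paper's proof makes that jump silently. There are two ways to close the gap:
\begin{itemize}
\item Invoke the assumption in that stronger reading, as the paper effectively does.
\item Add a hypothesis placing $[G]_{\mathrm{res}}$ inside the Markov equivalence class of $G$. Then every member is a consistent extension of $H_\mathrm{obs}$ with $\mathcal K_\mathrm{per}\cup\mathcal K_\mathrm{rules}$, and your Meek-soundness argument goes through as written.
\end{itemize}
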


\begin{proof}[Proof]
(1) By definition, $H_\mathrm{obs}$ is the global MPDAG obtained without adding any directed background knowledge beyond Meek closure. Incorporating per-regime knowledge $\mathcal{K}_\mathrm{per}$ yields
\[
H_{\mathrm{per}} = \MPDAG(H_{\mathrm{obs}}\ \text{subject to}\ \mathcal{K}_{\mathrm{per}}).
\]
Similarly, incorporating the contrastive-rule directions gives
\[
H_\mathrm{ctr} = \MPDAG(H_{\mathrm{obs}}\ \text{subject to}\ \mathcal{K}_{\mathrm{per}} \cup \mathcal{K}_{\mathrm{rules}}).
\]
By monotonicity of the $\MPDAG(\cdot)$ operator under consistent background knowledge, since
\[
\varnothing \subseteq \mathcal{K}_\mathrm{per} \subseteq \mathcal{K}_\mathrm{per}\cup\mathcal{K}_\mathrm{rules},
\]
it follows that
\[
\DirE(H_\mathrm{obs}) \subseteq \DirE(H_\mathrm{per}) \subseteq \DirE(H_\mathrm{ctr}).
\]

(2) Since $\mathcal{K}_{\mathrm{per}} := \DirE(\redGtest(G))$, and by Definitions~\ref{def:red-res-psi-equivalence} 
and~\ref{def:red-G-test}, $\DirE(\redGtest(G)) \subseteq \DirE(\Gtest)$, our original equivalence class $[G]_{\text{res}}$ is a refinement of the reduced equivalence class $[G]_{\Sadm}^{\text{res}}$. Hence, it follows that $\mathcal{K}_{\mathrm{per}} \subseteq \DirE(\Gtest)$. 
By Propositions~\ref{prop:ssi-psi}, \ref{prop:cvt-psi}, and~\ref{prop:dpt-psi}, 
\[
\mathcal{K}_{\mathrm{rules}} \subseteq \DirE(\Gtest).
\]
Hence
\[
\mathcal{K}_{\mathrm{per}} \cup \mathcal{K}_{\mathrm{rules}} \subseteq \DirE(\Gtest),
\]
and by Assumption~\ref{ass:closure-sound}, Meek closure does not orient any edges contradicting $\Gtest$, thus
\[
\DirE(H_{\mathrm{ctr}}) \subseteq \DirE(\Gtest).
\]

(3) If $R \neq \varnothing$, pick any $e \in R$. By Definition~\ref{def:R}, $e \in \DirE(\Gtest)$ and $e \notin \DirE(\redGtest)$. By Lemma~\ref{lem:noncontrastive-impossible}, for any sound non-contrastive, subset-respecting aggregator $\mathcal{A}$, we have $e \notin \DirE(\mathcal{A})$. Since $e \in \DirE(\Gtest)$, this implies
\[
\DirE(\mathcal{A}) \subsetneq \DirE(\Gtest).
\]
Moreover, by the assumed per-regime closure-soundness
\[
\DirE(H_\mathrm{per}) \subseteq \DirE(\redGtest),
\]
and since $e \notin \DirE(\redGtest)$, it follows that $e \notin \DirE(H_\mathrm{per})$. Together with $e \in \DirE(\Gtest)$, this yields
\[
\DirE(H_\mathrm{per}) \subsetneq \DirE(\Gtest).
\]
\end{proof}

\section{Asymptotic recovery under subset sampling}
In practice, we sample subsets $S_1,\dots,S_T$ from a (possibly biased/adaptive) sampler $Q$ over $\Sadm$.
For an edge $e$ (or a direction $u\to v$) define the \emph{witness family} $\Wit_e\subseteq\Sadm$ as the set of
admissible subsets on which $e$ is recovered at the population level either by a local PDAG orientation or by an application of the orientation rules. Let $\pi_e:=Q(\Wit_e)$.

\begin{lemma}[Coverage tail for i.i.d.\ subset sampling]\label{lem:coverage-tail}
Assume the sampled subsets $S_1,\dots,S_T$ are i.i.d.\ from $Q$. Fix $e$ with witness mass $\pi_e>0$ and let
$L_e:=\sum_{t=1}^T \ind\{S_t\in \Wit_e\}$. Then for any integer $r\in\{1,\dots,T\}$,
\[
\Prb(L_e<r)\ =\ \sum_{\ell=0}^{r-1} \binom{T}{\ell}\pi_e^\ell(1-\pi_e)^{T-\ell}
\ \le\ \exp\!\Big(-T\,\Dkl(r/T\ \|\ \pi_e)\Big),
\]
where $\Dkl(a\|b):=a\log\frac{a}{b}+(1-a)\log\frac{1-a}{1-b}$ is the binary KL divergence.
\end{lemma}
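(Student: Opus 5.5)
The plan is to reduce $L_e$ to a binomial count and then apply the standard Chernoff bound for its lower tail. Since $S_1,\dots,S_T$ are i.i.d.\ from $Q$, the indicators $B_t:=\ind\{S_t\in\Wit_e\}$ are i.i.d.\ Bernoulli$(\pi_e)$, because $\Prb(S_t\in\Wit_e)=Q(\Wit_e)=\pi_e$. Hence $L_e\sim\mathrm{Bin}(T,\pi_e)$. The equality
\[
\Prb(L_e<r)=\sum_{\ell=0}^{r-1}\binom{T}{\ell}\pi_e^\ell(1-\pi_e)^{T-\ell}
\]
then follows by summing the binomial pmf over $\ell\le r-1$.

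For the inequality I will use the exponential Markov (Chernoff) argument on the lower tail. For $\lambda\ge 0$,
\[
\Prb(L_e\le a T)\le e^{\lambda aT}\,\mathbb E[e^{-\lambda L_e}]=\exp\!\big(T[\lambda a+\log(1-\pi_e+\pi_e e^{-\lambda})]\big),
\]
where $a:=r/T$. Minimizing over $\lambda$ gives $e^{-\lambda^\star}=\frac{a(1-\pi_e)}{\pi_e(1-a)}$, which is valid, i.e.\ $\lambda^\star\ge0$, exactly when $a\le\pi_e$. Substituting $\lambda^\star$ yields the bound $\exp(-T\,\Dkl(a\|\pi_e))$. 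This is a routine computation that I would only check and not expand. The boundary case $a=1$ needs the convention $0\log 0=0$, and the resulting bound is still valid.

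The main obstacle is the regime of validity. The stated KL bound only holds (and is only meaningful) when $r/T\le\pi_e$. If $r/T>\pi_e$, then $\exp(-T\,\Dkl(r/T\|\pi_e))$ can be smaller than the true probability, which is then close to $1$. I would therefore state the proof under the implicit hypothesis $r\le T\pi_e$, which is the regime used downstream: $r$ is fixed or $r=o(T)$ while $T\to\infty$, so eventually $r/T<\pi_e$.

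I would also note that $\{L_e<r\}\subseteq\{L_e\le r/T\cdot T\}$, so bounding $\Prb(L_e\le aT)$ suffices; this gives the claim with the slightly looser non-strict event. If a version valid for all $r$ is needed, I would instead write $\min\{1,\cdot\}$, or use $a=(r-1)/T$, with the same derivation.
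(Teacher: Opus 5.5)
Your proof is the paper's proof: both reduce $L_e$ to $\mathrm{Bin}(T,\pi_e)$ and apply exponential Markov to the lower tail with a negative exponent (your $-\lambda$ is the paper's $t<0$), optimized at $e^{t}=\frac{a(1-\pi_e)}{\pi_e(1-a)}$ with $a=r/T$. Your regime caveat is correct, and the paper's derivation misses it. That optimizer is negative only when $a<\pi_e$, and for $r/T>\pi_e$ the stated inequality is actually false: $T=r=1$, $\pi_e=0.1$ gives $\Prb(L_e<1)=0.9$ versus $\exp(-\Dkl(1\|0.1))=0.1$. So the lemma needs the hypothesis $r\le T\pi_e$, which costs nothing downstream because the consistency theorem only uses $\Prb(L_e=0)=(1-\pi_e)^T$.

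One correction to your last sentence: neither $\min\{1,\cdot\}$ nor $a=(r-1)/T$ gives a bound valid for all $r$, because $\exp(-T\,\Dkl(a\|\pi_e))<1$ whenever $a\neq\pi_e$. A correct all-$r$ version is $\exp\big(-T\,\Dkl(r/T\|\pi_e)\,\ind\{r\le T\pi_e\}\big)$. The choice $a=(r-1)/T$ is a valid sharpening only when $r-1\le T\pi_e$.
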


\begin{proof}
Let
$
H_t =
\begin{cases}
1 & \text{if } S^{(t)} \in S_e^{\dagger} \\
0 & \text{otherwise }
\end{cases}
$
and $\pi_e = \mathcal Q(S_e^{\dagger})$, then $H_t$ can be represented as a Bernoulli trial with success corresponding to a good hit for $e$, $H_t \sim \text{Bernoulli}(\pi_e)$. 
\\
\\
Since $S^{(t)}$ are i.i.d. samples from $\mathcal Q$, Bernoulli trials $H_t$ (where $t \in \{1,...,T\}$) are independent.
\\
\\
Let $L_e:=\sum_{t=1}^T \mathbf{1}\{S^{(t)}\in\mathcal S_v^\dagger\} = \sum_{t=1}^T H_t$. Then, $L_e$ is the sum of T independent Bernoulli trials and therefore, $L_e \sim \text{Binomial}(T, \pi_e)$.
\\
\\
We can calculate $\Pr(L_e < r_e)$ for any integer $r_e \in \{1,...,T\}$ as a sum of probabilities for all $L_e = l$ s.t. $0 \leq l < r_e$: $\Pr(L_e<r_e)\ =\ \sum_{\ell=0}^{r_e-1}\binom{T}{\ell}\pi_e^\ell(1-\pi_e)^{T-\ell}$. \\
\\
Since we want to find the upper bound for the CDF $\Pr(L_e<r_e)$, we can rewrite it as $\Pr(tL_e > tr_e)$, where $t<0$ and use Chernoff bound to upper bound the tail of the distribution.
\\
\\
First, we can write out the Markov inequality:
\[
\Pr(e^{tL_e} \geq e^{tr_e}) \leq \frac{\mathbb{E}[e^{tL_e}]}{e^{tr_e}}\text{, where } t<0
\]
\[
\mathbb{E}[e^{tL_e}] = (1-\pi_e + \pi_e e^t)^T
\]
\[
\Pr(L_e < r_e) \leq \frac{(1-\pi_v + \pi_e e^t)^T}{e^{tr_e}} = \exp(T \log(1-\pi_e + \pi_ee^t)-tr_e)
\]
\\
Let $f(t) = T \log(1-\pi_e+\pi_ee_t)-tr_e)$
\\
\\
To minimize the right hand side of the inequality:
\[
\frac{d}{dt}f(t) = \frac{T\pi_ee^t}{1-\pi_e+\pi_e e^t} - r_v = 0
\]
\[
\frac{\pi_ee^t}{1-\pi_e+\pi_ee^t} = \frac{r_e}{T}
\]
\[
e^t = \frac{\frac{r_e}{T}(1-\pi_e)}{\pi_e(1-\frac{r_e}{T})}
\]
\[
t = log(\frac{\frac{r_e}{T}(1-\pi_e)}{\pi_e(1-\frac{r_e}{T})})
\]
\\
\\
Plug these back into $f(t)$:\\
\begin{align*}
    f(t)
    &= T\log(\frac{1-\pi_e}{1-\frac{r_e}{T}}) - r_e \log\frac{\frac{r_e}{T}(1-\pi_e)}{\pi_e(1-\frac{r_e}{T})} \\
    & = T\log(\frac{1-\pi_e}{1-\frac{r_e}{T}}) - r_e\log\frac{\frac{r_e}{T}}{\pi_e} - r_e\log\frac{1-\pi_e}{1-\frac{r_e}{T}} \\
    & = (T-r_e)\log\frac{1-\pi_e}{1-\frac{r_e}{T}} - r_e(\log\frac{\frac{r_e}{T}}{\pi_e})\\
    & = -T\left((1-\frac{r_e}{T})\log\frac{1-\frac{r_e}{T}}{1-\pi_e}+\frac{r_e}{T}\log\frac{\frac{r_e}{T}}{\pi_e}\right) \\
    & = -TD(\frac{r_e}{T} || \pi_e)
\end{align*}
where $D(a||b) = a\log\frac{a}{b} + (1-a)\log\frac{1-a}{1-b}$ is the binary KL.
\\
\\
As a result, 
\[
\Pr(L_e<r_e)\ =\ \sum_{\ell=0}^{r_e-1}\binom{T}{\ell}\pi_e^\ell(1-\pi_e)^{T-\ell} \leq \text{exp}(-TD(\frac{r_e}{T} || \pi_e))
\]
\end{proof}

\begin{assumption}[Witness coverage for $\Gtest$]\label{ass:witness-coverage-psi}
For every compelled directed edge $u\to v$ in $\Gtest$, there exists a nonempty witness family $\Witu\subseteq\Sadm$
such that on each $S\in\Witu$ the direction $u\to v$ is recovered at the population level by either a local PDAG
orientation or one of the orientation rules, and the sampler assigns positive mass $\pi_{u\to v}:=Q(\Witu)>0$.
\end{assumption}

\begin{theorem}[Consistency for the test-induced (restricted $\Psi$) essential graph]\label{thm:main-psi}
Let $\widehat G$ be the model output after $T$ sampled subsets:
(i) estimate $\widehat E_S^{(c)}$ on each sampled $S$ and each regime $c$,
(ii) estimate $\hInv(v\mid Z)$ for tested $(v,Z)$ pairs on sampled witnesses,
(iii) apply orientation rules using these estimates, and
(iv) aggregate all inferred constraints with closure $\Cl$ to output $\widehat G$.
Under Assumptions~\ref{ass:finite-scope}--\ref{ass:closure-sound} and witness coverage
(Assumption~\ref{ass:witness-coverage-psi}), as $n,T\to\infty$,
\[
\Prb\!\left(\widehat G = \Gtest\right)\ \to\ 1.
\]
\end{theorem}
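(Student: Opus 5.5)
The plan is to split the failure event into a statistical part and a sampling part, and to bound each separately. Let $\mathcal E_n$ be the event that every population query the procedure can invoke is answered correctly:
\[
\widehat E_S^{(c)}=E_S^{(c)}\ \text{for all } S\in\Sadm,\ c\in\{0,1\},
\qquad
\hInv(v\mid Z)=\Inv(v\mid Z)\ \text{for all } (v,Z)\in\Tadm .
\]
Because $(\Sadm,\Tadm)$ is finite (Assumption~\ref{ass:finite-scope}), Assumptions~\ref{ass:local-pdag-cons} and~\ref{ass:inv-cons} together with a union bound over the two families give $\Prb(\mathcal E_n)\to1$ as $n\to\infty$. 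This holds uniformly in $T$ and in the realized sampled subsets, since the sampled subsets always lie in $\Sadm$.

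On $\mathcal E_n$, every rule application in step (iii) is fed population inputs. Its output is therefore exactly the population output of SSI, CVT or DPT on the sampled $S$. The knowledge set built from $S_1,\dots,S_T$ is then a subset of the population knowledge $\mathcal K_{\mathrm{per}}\cup\mathcal K_{\mathrm{rules}}$ restricted to sampled subsets. By Propositions~\ref{prop:ssi-psi}, \ref{prop:cvt-psi} and~\ref{prop:dpt-psi}, and by the argument in part (2) of Theorem~\ref{thm:ctr-dom-soundness}, every inserted directed constraint lies in $\DirE(\Gtest)$. Assumption~\ref{ass:closure-sound} then shows that $\Cl$ adds nothing contradicting $\Gtest$, which gives the inclusion $\DirE(\widehat G)\subseteq\DirE(\Gtest)$ on $\mathcal E_n$.

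The skeleton needs the same treatment. It is aggregated from the local PDAGs, so on $\mathcal E_n$ it coincides with the population skeleton once enough subsets are seen. I would treat adjacencies of $\Gtest$ as "edges" in the witness-coverage sense and handle them with the same argument as the directed edges.

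For the reverse inclusion, fix a compelled edge $u\to v\in\DirE(\Gtest)$. Assumption~\ref{ass:witness-coverage-psi} gives a witness family $\Witu$ with mass $\pi_{u\to v}>0$. Lemma~\ref{lem:coverage-tail} with $r=1$ gives
\[
\Prb\big(L_{u\to v}=0\big)=(1-\pi_{u\to v})^T\to0 .
\]
On the event $\mathcal E_n\cap\{L_{u\to v}\ge1\}$, some sampled $S\in\Witu$ recovers $u\to v$ at population level, so $u\to v$ is inserted. By monotonicity of the closure operator it survives into $\widehat G$.

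There are finitely many pairs of nodes. A union bound therefore gives
\[
\Prb(\widehat G\neq\Gtest)\le\Prb(\mathcal E_n^c)+\sum_{e}(1-\pi_e)^T ,
\]
where the sum runs over the compelled directed edges and the adjacencies of $\Gtest$. Both terms vanish as $n,T\to\infty$ jointly, and no rate coupling between $n$ and $T$ is needed.

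The main obstacle is the step showing that equality of directed edge sets and of skeletons forces $\widehat G=\Gtest$. This requires that closure never orients an edge left undirected in $\Gtest$. Assumption~\ref{ass:closure-sound} only excludes contradicting orientations, so I would argue this as follows. If $\Gtest$ is taken closed under $\Cl$, as Definition~\ref{def:Gtest} permits, then $\Cl$ applied to any sound subset of $\DirE(\Gtest)$ stays inside $\Cl(\Gtest)=\Gtest$. That reduces the claim to the already established inclusion.

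A second subtlety is that the rules require the adjacency premise "in every $G'\in[G]_{\mathrm{res}}$". I would handle this by noting that on $\mathcal E_n$ this premise is a population property of $G$ and is not estimated, so it is unaffected by sampling.
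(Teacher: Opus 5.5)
This is essentially the paper's own argument. It uses the same correctness event $\mathcal E_n$, controlled by finiteness of $(\Sadm,\Tadm)$ and a union bound, and derives soundness of inserted orientations from Propositions~\ref{prop:ssi-psi}, \ref{prop:cvt-psi}, \ref{prop:dpt-psi} and Assumption~\ref{ass:closure-sound}. Coverage again comes from $\Prb(L_e=0)=(1-\pi_e)^T\to 0$, followed by a final union bound over finitely many edges.

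Your constraint-by-constraint derivation of $\DirE(\widehat G)\subseteq\DirE(\Gtest)$ is more direct than the paper's detour through ``$\widehat G\in[G]_{\mathrm{res}}$''. Your skeleton and no-over-orientation steps also cover points the paper's proof skips: it infers $\widehat G=\Gtest$ from the two directed-edge inclusions alone. However, your skeleton patch needs coverage for adjacencies and the absence of spurious marginalization-induced adjacencies, and Assumption~\ref{ass:witness-coverage-psi} supplies neither.
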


\begin{proof} Let $\mathcal E_n$ be the event that all invoked local PDAGs and invariance tests used by the model are correct over the finite admissible families:

\[
\mathcal E_n := 
\Big\{\forall S\in\Sadm,\forall c\in\{0,1\}: \widehat E_S^{(c)} = E_S^{(c)}\Big\}
\cap
\Big\{\forall (v,Z)\in\Tadm: \hInv(v\mid Z)=\Inv(v\mid Z)\Big\}
\]
\[
\Prb\big(\neg \mathcal E_n) 
= 
\sum_{S, c}\Prb(\widehat E_S^{(c)} \neq E_S^{(c)}) 
+ 
\sum_{(v, Z)} \Prb(\hInv(v\mid Z) \neq \Inv(v\mid Z))
\]
\\
By Assumptions \ref{ass:local-pdag-cons} and \ref{ass:inv-cons}, each of these terms $\to 0$ as $n \to \infty$. By Assumption~\ref{ass:finite-scope}, both $\Sadm$ and $\Tadm$ are finite. As a result, $\Prb(\neg\mathcal E_n) \to 0$ as $n \to \infty$.\\
\[
\Prb(\mathcal E_n) = 1 - \Prb(\neg \mathcal E_n)
\]
Since $\Prb(\neg \mathcal E_n) \to 0$ as $n \to \infty$, $\Prb( \mathcal E_n) \to 1$ as $n \to \infty$.

Given that $\mathcal E_n$ holds, by Definition ~\ref{def:res-psi-equivalence}, $\Psi^{\mathrm{res}}_{\Sadm,\Tadm} (\widehat{G}) = \Psi^{\mathrm{res}}_{\Sadm,\Tadm} (G)$, for some ground truth graph $G$. Then, $\widehat{G} \in [G]_{\mathrm{res}}$. By $\Gtest$-soundness in Theorem ~\ref{thm:ctr-dom-soundness}, on the event $\mathcal E_n$, $\mathrm{DirE}(\widehat{G}) \subseteq \mathrm{DirE}(\Gtest)$. Since, $\Prb(\mathcal E_n) \to 1$ as as $n \to \infty$,
\[
\Prb(\mathrm{DirE}(\widehat{G}) \subseteq \mathrm{DirE}(\Gtest)) \to 1
\]
as $n \to \infty$.\\
\\
For any compelled edge $e$ ($u \to v$) in $\Gtest$, by Assumption~\ref{ass:witness-coverage-psi}, there exists a nonempty witness family $\Wit_e\subseteq\Sadm$ with $\pi_{e}=Q(\Wit_e)>0$. By Lemma~\ref{lem:coverage-tail}, the probability of the samples not hitting a witness can be expressed as 
\\
\[
\Prb(L_e = 0) = (1-\pi_e)^T \to 0 \text{ as } T \to \infty
\]
since $\pi_e > 0$. Then the probability of the sampler hitting a witness is $\Prb(L_e \geq 1) = 1 - \Prb(L_e = 0) \to 1 \text{ as } T \to \infty$. \\
\\
On $\mathcal{E}_n \cap \{L_e \geq 1\}$, there exists some t such that $S_t \in \Wit_e$. By the definition of witness family $\Wit_e$, the edge $e$ is recovered on $S_t$ and inserted into the aggregation at the population level. By Assumption ~\ref{ass:closure-sound}, closure does not introduce orientations that contradict $\Gtest$. Then, $e \in \DirE(\widehat G)$ on $\mathcal{E}_n \cap \{L_e \geq 1\}$. Therefore,
\[
\mathcal{E}_n \cap \{L_e \geq 1\} \subseteq \{e \in \DirE(\widehat G) \}
\]
\[
e \notin \DirE(\widehat{G}) \subseteq \neg \mathcal{E}_n \cup L_e=0
\]
Define the event $\mathcal A_e$ such that
\[ 
\mathcal A_e := \{e \notin \DirE(\widehat{G}) \}
\]
Apply union bound
\[
\Prb \Big(\bigcup_{e \in \DirE(\Gtest)} \mathcal{A}_e \Big) 
\leq 
\sum_{e \in \DirE(\Gtest)} \Prb(\mathcal{A}_e) 
\leq 
\sum_{e \in \DirE(\Gtest)} \Prb(\neg \mathcal E_n) + \sum_{e \in \DirE(\Gtest)} \Prb(L_e = 0) 
\]

As $n, T \to \infty \text{, } \Prb(\neg \mathcal E_n) \to 0 \text{ and } \Prb(L_e = 0) \to 0$ and therefore, the right hand side of the inequality $\to 0$. Consequently, $\Prb(\forall e \in \DirE(\Gtest)\ : e \in \DirE(\widehat G)) \to 1$. Therefore
\[
\Prb (\DirE(\Gtest) \subseteq \DirE(\widehat{G})) \to 1
\]

As a result, $\Prb(\widehat G = \Gtest) \to 1$ as $n, T \to \infty$.
\end{proof}

\section{Model Architecture} \label{appendix::arch}
We highlight the complete architecture details for SCONE in the below sections. 

\subsection{Constructing Marginal Causal Structures}
Let $V = \{ 1, ..., N \}$ be the nodes in the DAG $G = (V, E)$. We form $T$ admissible subsets $S_1, ..., S_T \subseteq V$ and fix a global catalog of valid candidate undirected edges $e_k = \{ i, j\}$ for $k = 1,..., K$ unique unordered node pairs making up each edge. We consider two regimes $c \in \{ 0, 1\}^N$.

\subsubsection{Sampling admissible subsets of nodes}\label{appendix:arch-sampling}
We first construct selection scores that prioritize soft intervention targets using cross-regime contrast while ensuring graph coverage. We then propose a sampling approach that greedily optimizes selection scores to sample admissible subsets of nodes.

\paragraph{Selection scores} Let $c \in \{ 0, 1\}^N$ be the environment labels. We define node-wise environment sensitivity scores $s_i \in [0, 1]^N$, which combines standardized mean shift and dependence on the environment:
\begin{equation}\label{eq:arch-sensitivity-scores}
s_i = \mathrm{rescale} \left ( w_1 \frac{|\hat{\mu}_i^{(0)} - \hat{\mu}_i^{(1)}|}{\sqrt{\frac{1}{2}\left (\hat{\sigma}_i^{(0)2} - \hat{\sigma}_i^{(1)2} \right )}} + w_2 \mathrm{MI}(X_i;c) \right ), w_1, w_2 \geq 0,
\end{equation}
where $\hat{\mu}_i^{(c)}$ and $\hat{\sigma}_i^{(c)}$ are the sample mean and standard deviation of $X_i$ in environment $c \in \{ 0, 1\}$, $\mathrm{MI}$ is the mutual information between $X_i$ and $c$. We also define pairwise contrast scores $\Gamma \in [0, 1]^{N \times N}$ as:
\begin{equation}
\Gamma_{i, j} := \mathrm{rescale}\left (| \mathrm{corr}^{(0)}(X_i, X_j) - \mathrm{corr}^{(1)}(X_i, X_j)| \right).
\end{equation}
As nodes are sampled, we update the selection scores to downweight reused pairs. Let $n_{i,j}^t$ be the number of times the pair $(i, j)$ has appeared in $S_1,...,S_{t - 1}$ and let $n_i^t$ be the number of times node $i$ has appeared. We downweight selection scores $\alpha_{i,j}^t$ and $\Gamma^t_{i,j}$ for pair $(i,j)$ for subset $t$ by visit frequency via:
\[
\alpha_{i, j}^t = \frac{\alpha_{i,j}}{q^{n_{i, j}^t}}, q>1; \qquad \Gamma_{i,j}^t = \frac{\Gamma_{i,j}}{r^{n_{i,j}^t}}, r>1,
\]
where $\alpha_{i,j}$ are global statistics such as inverse covariance or precision.

\paragraph{Greedy selection} We greedily sample nodes to form subsets $S_t$ based on selection scores $\alpha^t$ and $\Gamma^t$. Fix hyperparameters $\rho \in [0, 1]$ $\lambda \in [0, 1]$, and $\beta_{\mathrm{cov}} \geq 0$. We initialize $S_t$ with a seed by drawing a single node with probability:
\begin{equation}
S_t \leftarrow \{ i: i \sim \mathrm{Categorical}(\tilde{\alpha}_1^t, ..., \tilde{\alpha}_N^t)\}, \qquad \tilde{\alpha}_i^t := \left [ (1 - \rho) \sum_{j \neq i} \alpha_{i,j}^t + \rho s_i \right ] \cdot \frac{1}{(1 + n_{i,j}^t)^{\beta_\mathrm{cov}}}.
\end{equation}
While $|S_t| < k$, we add one node at a time with probability:
\begin{equation}
S_t \leftarrow S_t \cup \left \{ j : j \sim \mathrm{Categorical}(\tilde{\alpha}_{1, S_t}^t, ..., \tilde{\alpha}_{N, S_t}^t) \right \},
\end{equation}
where 
\begin{equation}
\tilde \alpha^{\,t}_{j,S_t}
\ :=\
\lambda \sum_{i\in S_t}\alpha^{\,t}_{i,j}
\ +\
(1-\lambda)\left( s_j\ +\ \frac{1}{|S_t|}\sum_{i\in S_t}\Gamma^{\,t}_{i,j}\right)
\quad \text{if } j\notin S_t,\qquad
\tilde \alpha^{\,t}_{j,S_t}\ :=\ 0 \text{ otherwise,}
\end{equation}
followed by a coverage bonus $\tilde{\alpha}_{j,S_t}^t \leftarrow \tilde{\alpha}_{j,S_t}^t / (1 + n_j^t)^{\beta_\mathrm{cov}}$. After forming $S_t$ we update the visit counts $n_i^{t + 1} \leftarrow n_i^t + \mathbf{1}\{ i \in S_t\}$ and pair counts $n_{i,j}^t \leftarrow n_{i,j}^t + \mathbf{1}\{ i,j \in S_t\}$, then refresh $\alpha^{t + 1}$ and $\Gamma^{t + 1}$ via the formulas above. 

\subsubsection{Classical causal discovery ensemble}\label{appendix:arch-ensemble}
Given node observations $X \in \mathbb{R}^{N \times S}$ for $N$ nodes and $S$ samples, the causal discovery module produces a distribution over edge classes $\{ i \to j, j \to i, i - j, \text{no-edge}\}$ for each node pair $(i,j)$. 

\paragraph{Bootstrap replicates and vote decoding} For each of the $R_\mathrm{boot}$ replicates, we draw $X^{(r)} \in \mathbb{R}^{N \times M}$ by resampling $M$ observations with replacement. Each replicate is passed to each base learner $l \in \{ 1, ..., L\}$ to produce a score matrix $\mathbf{A}^{(l, r)} \in [0, 1]^{N \times N}$, where $\mathbf{A}^{(l, r)}[i, j]$ estimates $P(i \to j)$. Let the hyperparameters $\varepsilon$ be the tie-breaking margin and $\tau_l$ a per-learner confidence threshold. Each score matrix is then decoded into a one-hot vote $v_{i,j}^{(l, r)} \in \{ 0, 1 \}^4$ as:
\[
v_{i,j}^{(l, r)} := \begin{cases}
i \to j & \text{if} \ \mathbf{A}^{(l, r)}[i, j] - \mathbf{A}^{(l, r)}[j, i] > \varepsilon \ \text{and}\ \max(\mathbf{A}^{(l, r)}[i, j], \mathbf{A}^{(l, r)}[j, i]) \geq \tau_l, \\
j \to i & \text{if}\ \mathbf{A}^{(l,r)}[j, i] - \mathbf{A}^{(l, r)}[i, j] > \varepsilon \ \text{and}\ \max(\mathbf{A}^{(l, r)}[i, j], \mathbf{A}^{(l, r)}[j, i] \geq \tau_l,\\
i - j & \text{if}\ \max(\mathbf{A}^{(l, r)}[i, j], \mathbf{A}^{(l, r)}[j, i] \geq \tau_l\ \text{and neither direction dominates},\\
\text{no-edge} & \text{if}\ \max(\mathbf{A}^{(l, r)}[i, j], \mathbf{A}^{(l, r)}[j, i] < \tau_l
\end{cases}
\]
\paragraph{Majority voting}
Weighted vote counts $\mathbf{W}[i,j] \in \mathbb{R}^4$ are accumulated across all learners and replicates:
\[
\mathbf{W}[i,j] = \sum_{l = 1}^L w_l \sum_{r = 1}^{R_\mathrm{boot}} v_{i,j}^{(l, r)}
\]
The final edge class is determined by:
\begin{equation}
e_{i,j}:= \begin{cases}
\argmax_{(i,j)}(\mathbf{W}[i, j]) & \text{if the argmax is unique},\\
\text{no-edge} & \text{otherwise} \ 
\end{cases}
\end{equation}
and is outputted as a one-hot encoding for the edge class, $\hat{e}_{i,j} = \mathbf{1} \{e_{i,j}\} \in \{ 0, 1 \}^4$.
\paragraph{PolyBIC for pairwise orientation} We primarily use PolyBIC as our base causal discovery learner. For each pair PolyBIC fits polynomial ridge regressions of degree $D$ in both directions and selects the causal direction as the one yielding the lower Bayesian Information Criterion (BIC) over the residuals. The orientation score is:
\begin{equation}
\mathbf{A}^{(l, r)}[i, j] = \sigma \left ( \frac{\mathrm{BIC}(X_i \mid X_j) - \mathrm{BIC}(X_j \mid X_i)}{\mathbb{E}_b \left [|\mathrm{BIC}^b(X_i \mid X_j) - \mathrm{BIC}^b(X_j \mid X_i) |\right]} \right )
\end{equation}
where $\sigma$ is the sigmoid function and $b$ is batch. A higher orientation score $\mathbf{A}^{(l, r)}[i, j]$ favors the orientation $i \to j$.

\subsubsection{Edge Tokenization}
\label{appendix:arch-tokenization}

\paragraph{Candidate Edge Construction}
For each graph, the ensemble CD module runs independently on both environments for each sampled subset $S_t$ of size $k$, producing per-environment local CPDAGs whose directed and undirected edges are lifted to global node indices and deduplicated. Additionally, the top-$K_{\text{env}}$ node pairs by absolute correlation difference $|\text{corr}_1(i,j) - \text{corr}_0(i,j)|$ are included as environment-contrast candidates. The union of CD edges and environment-contrast edges forms the candidate set $K_{\text{cand}}$ of $K$ edges, optionally capped at a maximum size.

\paragraph{Endpoint Encoding}
Each candidate edge $e = \{i,j\}$ is represented per environment $c \in \{0,1\}$ by a raw feature vector $x_{e,c} \in \mathbb{R}^{6 + 2N + F}$ formed by concatenating three components. First, a 6-dimensional endpoint encoding $\mathbf{o}_{e,c} \in \{0,1\}^6$ encodes the CPDAG edge type at each endpoint as a one-hot over three states $\{\varnothing, \bullet, \triangleright\}$, representing no edge, tail, and arrowhead respectively:
\begin{equation}
\mathbf{o}_{e,c} = [\mathbf{1}_{\varnothing}(i),\ \mathbf{1}_{\bullet}(i),\ \mathbf{1}_{\triangleright}(i),\ \mathbf{1}_{\varnothing}(j),\ \mathbf{1}_{\bullet}(j),\ \mathbf{1}_{\triangleright}(j)]
\end{equation}
Second, one-hot node identifiers $\mathbf{1}_N(i), \mathbf{1}_N(j) \in \{0,1\}^N$ provide global node identity. Third, per-environment pairwise statistics $\phi^{(c)}_{ij} \in \mathbb{R}^F$ with $F=4$ contain:
\begin{equation}
\phi^{(c)}_{ij} = \left[\hat{\rho}^{(c)}_{ij},\ -\hat{\Omega}^{(c)}_{ij},\ \tanh\!\left(\frac{\hat{\sigma}^{(c)}_{ij\to i}}{\hat{\sigma}^2_i}\right),\ \tanh\!\left(\frac{\hat{\sigma}^{(c)}_{ij\to j}}{\hat{\sigma}^2_j}\right)\right]
\end{equation}
where $\hat{\rho}^{(c)}_{ij}$ is the sample correlation, $\hat{\Omega}^{(c)}_{ij}$ is the normalized partial correlation from the precision matrix, and the last two entries are tanh-scaled regression coefficients in each direction.

\paragraph{Marginal Edge Embeddings}
The raw feature vectors are projected to dimension $d$ via a two-layer MLP and augmented with sinusoidal positional embeddings along both the subset and edge axes:
\begin{equation}
H_E[b, t, e, c, :] = \mathrm{MLP}(x_{e,c}) + \mathrm{PE}_T(t) + \mathrm{PE}_K(k)
\end{equation}
where $\mathrm{PE}_T$ and $\mathrm{PE}_K$ are sinusoidal embeddings masked to valid subset and edge positions respectively, yielding $H_E \in \mathbb{R}^{B \times T \times K \times 2 \times d}$.

\paragraph{Global Stream Construction}
In parallel, the global stream $H_\rho \in \mathbb{R}^{B \times N \times N \times d}$ is constructed from the symmetrized precision matrix of the environment-averaged node features:
\begin{equation}
\bar{X} = \frac{1}{2}(X^{(0)} + X^{(1)}), \qquad \hat{\Omega} = \frac{1}{2}\left(\Sigma^{-1}(\bar{X}) + \Sigma^{-1}(\bar{X})^\top\right)
\end{equation}
Each entry is projected to dimension $d$ and summed with learned pairwise node positional embeddings:
\begin{equation}
H_\rho[b, i, j, :] = W_\rho\,\hat{\Omega}[b, i, j] + \mathbf{p}_i + \mathbf{p}_j
\end{equation}
where $W_\rho \in \mathbb{R}^{1 \times d}$ is a learned linear projection and $\mathbf{p}_i, \mathbf{p}_j \in \mathbb{R}^d$ are learned node positional embeddings. The diagonal of $\hat{\Omega}$ is zeroed before projection.

\paragraph{Invariant/Contrast Reparametrization}
Before the first axial block, marginal edge embeddings $H_E \in \mathbb{R}^{B \times T \times K \times 2 \times d}$ are reparametrized into invariant and contrast channels:
\begin{equation}
z_{\text{avg}} = \frac{1}{2}(H_E[\cdot,\cdot,\cdot,1,:] + H_E[\cdot,\cdot,\cdot,0,:]), \qquad z_{\Delta} = H_E[\cdot,\cdot,\cdot,1,:] - H_E[\cdot,\cdot,\cdot,0,:]
\end{equation}
replacing the original environment channels as the two operative channels $[z_{\text{avg}}, z_{\Delta}]$ to form $\widetilde{H}_E$. This bijective reparametrisation disentangles regime-invariant structure from regime-specific variation before attention is applied.

\subsection{Bias heads for contrastive orientation} \label{appendix:bias-heads}
Each contrastive orientation rule produces signed biases over three orientation classes $\{ i \to j, j \to i, i - j\}$ for each candidate edge. To measure how much a node's mechanism shifts across regimes, we define the invariance score:
\begin{equation}
\gamma_v = \mathrm{Norm} \left ( \sum_{e \ni v} ||\Delta_e|| \right ) \in [0, 1], g_v:= 1 - \gamma_v
\end{equation}
where $\Delta_{i,j} = \widetilde{H}_E[t, \{i, j\}, 1, :] - \widetilde{H}_E[t, \{ i, j\}, 0, :]$ is the regime contrast for edge $e = \{ i, j \}$ and $g_v$ denotes its complement. Nodes with high $g_v$ are more likely regime-invariant (non-targets) and thus provide stronger orientation signal. 

\subsubsection{Single-Sided Invariance (SSI)}\label{appendix:arch-ssi}
For each edge $\{i,j\}$, parent contexts $\mathbf{p}_i, \mathbf{p}_j \in \mathbb{R}^d$ are formed by mean-pooling embeddings of incident candidate edges appearing in at least a fraction $\rho$ of subsets:
\[
\mathbf{p}_v := \mathrm{Pool}\{ \widetilde{H}_E[t, \{ v, u\}, c, :]: u \in \mathrm{Pa}(v)\}
\]
A shared predictor $\psi$ maps each endpoint embedding and its parent context to a sufficient-statistic vector:
\begin{align}
\phi^{(c)}_j &= \psi\!\Big(\widetilde{H}_E[t,\{i,j\},c,:]\ ||\ \mathbf{p}_j\ ||\ \mathbf{m}_j\Big),\\
\phi^{(c)}_i &= \psi\!\Big(\widetilde{H}_E[t,\{j,i\},c,:]\ ||\ \mathbf{p}_i\ ||\ \mathbf{m}_i\Big),
\end{align}
and $\delta_v = \|\phi^{(1)}_v - \phi^{(0)}_v\|$ measures the cross-environment shift. A bias toward $i\!\to\!j$ is applied when $j$ shifts more than $i$:
\begin{equation}
b^{\text{SSI}}_{t,\{i,j\}}(i\!\to\!j) \mathrel{+}= \kappa \cdot \mathrm{clip}(\delta_j - \delta_i,\ -\tau,\ \tau)^+, \quad
b^{\text{SSI}}_{t,\{i,j\}}(j\!\to\!i) \mathrel{-}= \kappa \cdot \mathrm{clip}(\delta_j - \delta_i,\ -\tau,\ \tau)^+
\end{equation}
gated by $\gamma_j \geq \gamma_i + \eta$ and $\gamma_i \leq \gamma_{\max}$, ensuring the bias fires only when $j$ is the more perturbed endpoint.

\subsubsection{Contrastive V-structure (CVT)}\label{appendix:arch-cvt}
For each unshielded triple $(i,j,k)$ in $S_t$ (i.e., $\{i,j\},\{k,j\}\in K_\mathrm{cand}$ and $\{i,k\}\notin K_\mathrm{cand}$), a collider score is computed from the environment-1 embeddings and cross-environment contrasts of both incident edges:
\begin{equation}
s^{(1)}_{\mathrm{cvt}}(i\!\to\!j\!\leftarrow\!k)
= \sigma\!\left(u^\top\Big[\widetilde{H}_E[t,\{i,j\},1,:]\ \|\ \widetilde{H}_E[t,\{k,j\},1,:]\ \|\ \Delta_{i,j}\ \|\ \Delta_{k,j}\Big]\right)
\end{equation}
where $u$ is learned and $\Delta_{v,u} = \widetilde{H}_E[t,\{v,u \},1,:] - \widetilde{H}_E[t,\{v,u \},0,:]$. Scores below a confidence threshold $\theta_{\mathrm{cvt}}$ are zeroed. The bias is applied to condition-0 edge embeddings only when $\gamma_j \leq \theta_\gamma$, i.e.\ when the center node $j$ is sufficiently regime-invariant to be a plausible collider:
\begin{align}
b^{\text{CVT}}_{t,\{i,j\},0}(i\!\to\!j) &\mathrel{+}= \omega\cdot s^{(1)}_{\mathrm{cvt}},
&b^{\text{CVT}}_{t,\{i,j\},0}(j\!\to\!i) &\mathrel{-}= \omega\cdot s^{(1)}_{\mathrm{cvt}},\\
b^{\text{CVT}}_{t,\{k,j\},0}(k\!\to\!j) &\mathrel{+}= \omega\cdot s^{(1)}_{\mathrm{cvt}},
&b^{\text{CVT}}_{t,\{k,j\},0}(j\!\to\!k) &\mathrel{-}= \omega\cdot s^{(1)}_{\mathrm{cvt}},
\end{align}
promoting the collider orientation at $j$ while penalizing the reverse directions on both incident edges.

\subsubsection{Contrastive Discriminating Path (DPT)}\label{appendix:arch-dpt}
For each candidate edge $\{i,j\}$, the top-$B$ paths $\pi = (e_0, \dots, e_m)$ of length $m \leq L$ through the candidate edge graph are enumerated offline via BFS. Each path is encoded independently per environment using a DeepSets aggregator $\mathrm{DeepSet} = \mathrm{DeepSet}_\rho \circ \mathrm{DeepSet}_\phi$ over concatenated edge embeddings and cross-environment contrasts:
\begin{equation}
s^{(c)}_\pi = \sigma\!\left(\mathrm{DeepSet}\!\left(\left\{\big[\widetilde{H}_E[t, e_r, c, :]\ \|\ \Delta_{e_r}\big]\right\}_{r=0}^{m}\right)\right) \in [0,1]
\end{equation}
The environment with the higher score is selected as the \emph{winning} environment $c^* = \arg\max_c\, s^{(c)}_\pi$, and the same bias is reflected to the \emph{other} environment $\bar{c} = 1 - c^*$. Paths with $s^{(c^*)}_\pi < \theta_{\mathrm{dpt}}$ are discarded. The bias for the owner edge $\{i,j\}$ is then:
\begin{align}
b^{\text{DPT}}_{t,\{i,j\},\bar{c}}(i\!\to\!j) &\mathrel{+}= \lambda\cdot s^{(c^*)}_\pi, \qquad 
b^{\text{DPT}}_{t,\{i,j\},\bar{c}}(j\!\to\!i) \mathrel{-}= \lambda\cdot s^{(c^*)}_\pi,
\end{align}
aggregated across all candidate paths for the edge.

\subsection{Axial Aggregator}
\label{appendix:arch-aggregator}

\subsubsection{Axial Block}
Each of the $L$ axial blocks processes both streams in sequence. Given local stream $H_E \in \mathbb{R}^{B \times T \times K \times C \times d}$ we first reparametrize embeddings into $\widetilde{H}_E$, axial attention is applied along the $T$ and $K$ axes by reshaping to $[T, KC, B, d]$ and applying row- and column-wise self-attention with pre-LayerNorm residual connections:
\begin{equation}
\widetilde{H}_E \leftarrow \widetilde{H}_E + \mathrm{FFN}(\mathrm{ColAttn}(\mathrm{RowAttn}(\widetilde{H}_E)))
\end{equation}

\paragraph{Invariance gating.} After axial attention, per-node invariance scores $\gamma_v \in [0,1]$ are computed from the updated local stream (see Eq.~\ref{eq:inv-score}) and scattered to edge tokens, producing a per-edge score pair $[\gamma_i, \gamma_j] \in \mathbb{R}^{B \times T \times K \times 2}$. These are passed through a learned gating network to produce per-head, per-edge gate values:
\begin{equation}
[g_{\text{cvt}}, g_{\text{ssi}}, g_{\text{dpt}}]_{t,e} = g_{\min} + (1 - g_{\min})\cdot\sigma\!\left(\mathbf{W}_{\text{gate}}\,[\gamma_i, \gamma_j]_{t,e} / \tau_{\text{gate}}\right) \in [0,1]^3
\end{equation}
where $g_{\min}$ is a minimum gate value preventing full suppression and $\tau_{\text{gate}}$ is a temperature. Each orientation head (CVT, SSI, DPT) produces a bias $b^{(\cdot)} \in \mathbb{R}^{B \times T \times K \times 2 \times |\mathcal{C}|}$, where $|\mathcal{C}|$ is the number of edge classes, which is projected back to embedding space and added to the local stream:
\begin{equation}
\widetilde{H}_E = \widetilde{H}_E + \sum_{h \in \{\text{cvt, ssi, dpt}\}} g_h \cdot W_{\text{proj}}(b^{(h)})
\end{equation}
A gate regularization term $\mathcal{L}_{\text{gate}} = \frac{1}{3}\sum_h (1 - \bar{g}_h)^2$ encourages gates to remain active throughout training, where $\bar{g}_h$ is the mean gate value across edges and subsets.

\paragraph{Edge logits and subset pooling.} Per-edge logits are computed from the updated local stream via a linear projection:
\begin{equation}
\Theta_{t,e} = W_{\text{cls}}\,\widetilde{H}_E[t, e, :, :] \in \mathbb{R}^{C \times |\mathcal{C}|}
\end{equation}
Logits are pooled across subsets $T$ using learned attention weights over valid edges:
\begin{equation}
w_{t,e} = \frac{\exp(W_{\text{pool}}\,\widetilde{H}_E[t,e,:,:])}{\sum_{t': (t',e) \text{ valid}} \exp(W_{\text{pool}}\,\widetilde{H}_E[t',e,:,:])}, \qquad \hat{\Theta}_e = \sum_t w_{t,e} \cdot \Theta_{t,e}
\end{equation}
The pooled logits $\hat{\Theta}_e \in \mathbb{R}^{C \times |\mathcal{C}|}$ are scattered into a global node-pair table $\mathcal{N} \in \mathbb{R}^{B \times N \times N \times 2 \times |\mathcal{C}|}$ by writing $\mathcal{N}[b,u,v] \leftarrow \hat{\Theta}_e$ for each candidate edge $e = \{u,v\}$.

\paragraph{Message passing.} Pooled edge messages $m_e \in \mathbb{R}^d$ (averaged over $C$) are similarly scattered into a global message table $M \in \mathbb{R}^{B \times N \times N \times d}$, with unobserved pairs defaulting to zero. The global stream then undergoes row- and column-wise axial attention over the $N \times N$ grid:
\begin{equation}
H_\rho \leftarrow H_\rho + \mathrm{FFN}(\mathrm{ColAttn}(\mathrm{RowAttn}(M)))
\end{equation}
The updated global representations are broadcast back to the marginal stream by adding the corresponding node-pair embedding to each edge token across all subsets:
\begin{equation}
\widetilde{H}_E[b, t, e, :, :] \leftarrow \widetilde{H}_E[b, t, e, :, :] + H_\rho[b, u, v, :]
\end{equation}
closing the local-global loop at each layer.

\subsubsection{Global Head}
\label{appendix:arch-global-head}
An auxiliary linear head $W_{\text{global}} \in \mathbb{R}^{d \times |\mathcal{C}|}$ is applied to $H_\rho$ to predict edge classes over all ordered node pairs:
\begin{equation}
\Theta^{\text{global}}_{b,i,j} = W_{\text{global}}\, H_\rho[b,i,j,:]
\end{equation}
supervised with cross-entropy over all positive edges and a sampled fraction of negatives. At inference, global logits can optionally be fused with marginal logits for non-candidate edges via a blend parameter $\alpha$:
\begin{equation}
\Theta^{\text{fused}}_{b,u,v} = \begin{cases} (1-\alpha)\,\hat{\Theta}^{\text{marginal}}_{b,u,v} + \alpha\,\Theta^{\text{global}}_{b,u,v} & \text{if } \{u,v\} \in K_{\text{cand}} \\ \Theta^{\text{global}}_{b,u,v} & \text{otherwise} \end{cases}
\end{equation}
where $K_\mathrm{cand}$ are the candidate edges.

\subsubsection{Optimization Objective}
\label{appendix:arch-opt}
The full training objective is:
\begin{equation}
\mathcal{L} = \mathcal{L}_{\text{CE}} + \lambda_{\text{SHD}}\,\mathcal{L}_{\text{SHD}} + \lambda_{\text{rank}}\,\mathcal{L}_{\text{rank}} + \lambda_{\text{global}}\,\mathcal{L}_{\text{global}} + \lambda_{\text{gate}}\,\mathcal{L}_{\text{gate}}
\end{equation}
$\mathcal{L}_{\text{CE}}$ is cross-entropy over candidate edges with temperature $\tau$ and optional class-frequency reweighting. $\mathcal{L}_{\text{SHD}}$ is a soft SHD penalty:
\begin{equation}
\mathcal{L}_{\text{SHD}} = \frac{1}{|K_{\text{cand}}|}\sum_{\{u,v\} \in K_{\text{cand}}} \left[(\hat{p}_{u\to v} - \mathbf{1}[u\to v])^2 + (\hat{p}_{v\to u} - \mathbf{1}[v\to u])^2\right]
\end{equation}
where $\hat{p}_{u \to v}$ corresponds to the predicted probability of edge $e = (u \to v)$ and $\mathcal{L}_{\text{rank}}$ is a pairwise hinge loss ranking true directed edges above non-edges with margin $\mu$:
\begin{equation}
\mathcal{L}_{\text{rank}} = \frac{1}{|\mathcal{P}|}\sum_{(e^+, e^-) \in \mathcal{P}} \max(0,\ \mu - \hat{p}_{e^+} + \hat{p}_{e^-})
\end{equation}
where $\mathcal{P}$ is a set of sampled positive/negative edge pairs. A gate regularization term encourages gates to remain active throughout training, where $\bar{g}_h$ is the mean gate value across edges and subsets:
\begin{equation}
\mathcal{L}_{\text{gate}} = \frac{1}{3}\sum_{h \in \{\text{cvt, ssi, dpt}\}} (1 - \bar{g}_h)^2, \qquad \bar{g}_h = \frac{1}{BTK}\sum_{b,t,e} g_{h,b,t,e}
\end{equation}
The model is optimized with AdamW with linear warmup and cosine learning rate decay to a minimum ratio $r_{\min}$, and EMA weights with decay $\beta$ are used at evaluation.

\section{Synthetic Data Generation}\label{appendix:datagen}

\paragraph{Graph generation} Synthetic datasets were generated by sampling a random DAG with a fixed number of nodes $N$ and an expected number of edges $E$. Graphs are generated by (1) sampling a random topological ordering of the nodes, (2) adding directed edges in this order with a fixed edge probability, (3) connecting disconnected components (if any) by adding a single edge consistent with the topological order. For interventional regime, a random subset of non-root nodes are sampled independently with probability $p_{interv}$ (default value 0.4) as intervention targets.

\paragraph{Dataset generation} Given a graph and intervention targets, we generate $S$ observations for each regime using regime-specific causal mechanisms.

\paragraph{Causal mechanisms.}
Each of the nodes $y$ with parents $X$ is assigned one of the following causal mechanisms to map its parent variables to its value, following the DCDI synthetic data generation setup. $W$ represents  randomly initialized weight matrices and $\varepsilon$ is an independent noise variable:
\begin{itemize}
    \item \textbf{Linear:} $y = XW + \varepsilon$
    \item \textbf{Polynomial:} $y = W_0 + XW_1 + X^2W_2+ \times \varepsilon$
    \item \textbf{Sigmoid:} $y=\sum^d_{i=1}W_i\cdot \operatorname{sigmoid}(X_i) + \times \varepsilon$
    \item \textbf{Randomly initialized neural network (NN):} $y = \operatorname{Tanh}((X,\varepsilon)W_{in})W_{out}$
    \item \textbf{Randomly initialized neural netowrk, additive (NN additive):} $y = \operatorname{Tanh}(XW_{in})W_{out} + \varepsilon$
\end{itemize}

Similarly, DCDI defaults were used for root causal mechanisms and noise variables:
\begin{itemize}
    \item Root node values were sampled from $\operatorname{Uniform(-2,2)}$.
    \item Noise variables were sampled independently for each node and regime as $\varepsilon \sim 0.4 \cdot \mathcal{N}(0, \sigma^2)$ where $\sigma^2 \sim \operatorname{Uniform}(1,2)$.
\end{itemize}

\paragraph{Soft interventions.}
To generate two contrastive regimes, we apply soft interventions to a random subset of non-root nodes.  For observational case, we only use the regime with no interventions.

Soft interventions are implemented by sampling the mechanism weights of intervened nodes from the same distribution with modified parameter range, while keeping the noise distribution the same. Non-intervened nodes retain their original weight distribution. As a result, the two regimes share the same underlying graph but conditional distributions differ.

\paragraph{Training and test datasets.}
Training datasets consist of 512 independently generated graphs with varying numbers of nodes and edges, and $S=10,000$. Validation and test datasets contain 10 graphs each.

In the out-of-distribution setting, training datasets are generated by uniformly sampling from Linear, NN, and NN Additive mechanisms per node with Polynomial and Sigmoid mechanisms held out and used only for validation and test datasets. For in-distribution experiments, we generate training, validation, and test datasets using Polynomial mechanism only.

Random seeds are fixed for each dataset to ensure reproducibility.

\section{Extended Results}\label{appendix:extresults}

\subsection{metrics}\label{appendix:metrics}

For benchmarking we report the following standard causal discovery metrics:

\paragraph{SHD} Structural Hamming Distance computes the number of edge additions, deletions, and reversals required to transform the predicted graph into the true graph, with lower values indicating better performance. We average SHD across all graphs in the validation dataset.

\paragraph{Precision, recall and F1} are computed for each edge type (no edge, edge) in the directed adjacency graph and are averaged across all graphs in the validation set.

\subsection{Benchmarking results}\label{appendix:fullbench}

We present full experimental results below with all metrics from Appendix ~\ref{appendix:metrics}. Note that we use publicly available implementations of all baselines with their default hyperparameters and recommended settings. We do not perform additional hyperparameter sweeps or per-dataset finetuning for AVICI and SEA, since both methods claim that no additional finetuning is needed. Public implementation of SEA included unresolved runtime errors that we fixed before running experiments. No hyperparameter sweeps were performed for SCONE.

\paragraph{In-distribution Benchmarking} In the in-distribution setting, all training, validation, and test graphs are generated using the Polynomial causal mechanism. Structural recovery is  evaluated on 10 graphs with 20 nodes and 20 edges. The results are averaged across graphs and evaluated using precision, recall, F1, and SHD (Figure ~\ref{fig:in-dist-appendix}, Table~\ref{tab:results_poly_indist}).

\begin{figure}[H]
    \centering
    \includegraphics[width=\linewidth]{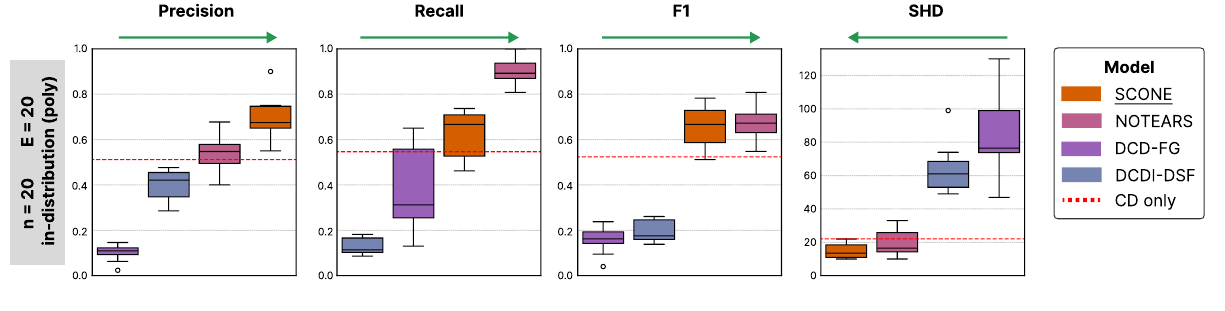}
    \caption{Precision, recall, F1 and SHD for state-of-the-art methods that only perform in-distribution graph prediction on 20 nodes 20 edges. 10 graphs were generated using the polynomial mechanism. The horizontal red dashed line indicates mean across 10 graphs on predictions from running Polynomial-BIC baseline only.}
    \label{fig:in-dist-appendix}
\end{figure}

\begin{table}[H]
\centering
\begin{tabular}{lcccc}
\toprule
Model & SHD $\downarrow$ & F1 $\uparrow$ & Precision $\uparrow$ & Recall $\uparrow$ \\
\midrule
\textsc{\textbf{Scone}}  & $\textbf{14.6} \pm 4.4$ & $0.655 \pm 0.094$ & $\textbf{0.694} \pm 0.098$ & $0.624 \pm 0.106$ \\
\textsc{Notears}         & $19.7 \pm 7.9$ & $\textbf{0.675} \pm 0.073$ & $0.544 \pm 0.081$ & $\textbf{0.899} \pm 0.058$ \\
\textsc{Dcdi-Dsf}        & $63.9 \pm 14.9$ & $0.195 \pm 0.048$ & $0.396 \pm 0.069$ & $0.130 \pm 0.036$ \\
\textsc{Dcd-Fg}          & $86.7 \pm 25.1$ & $0.157 \pm 0.057$ & $0.101 \pm 0.035$ & $0.377 \pm 0.190$ \\
\midrule \midrule
\textsc{Poly-BIC CD} & 22.1 & 0.524 & 0.511 & 0.546 \\
\bottomrule
\end{tabular}
\caption{Comparison of causal discovery methods on polynomial in-distribution graphs with $N=20$ nodes and $E=20$ edges. Results are averaged over 10 graphs (mean $\pm$ std).}
\label{tab:results_poly_indist}
\end{table}

\paragraph{Out-of-distribution generalization benchmarking}
We next evaluate generalization to unseen causal mechanisms. In this setting, to generate training dataset, node-level mechanisms are sampled from Linear, NN, and NN Additive families, while evaluation is performed on graphs generated using held-out mechanisms, Sigmoid and Polynomial. We additionally vary graph density (20 nodes with 20 and 30 edges) and size (50 nodes with 50 edges) to assess robustness under structural variation among methods capable of scaling to these settings. The results for each setting are given in Figure ~\ref{fig:ood-appendix} and Table~\ref{tab:results_main}.

\begin{figure}[H]
    \centering
    \includegraphics[width=\linewidth]{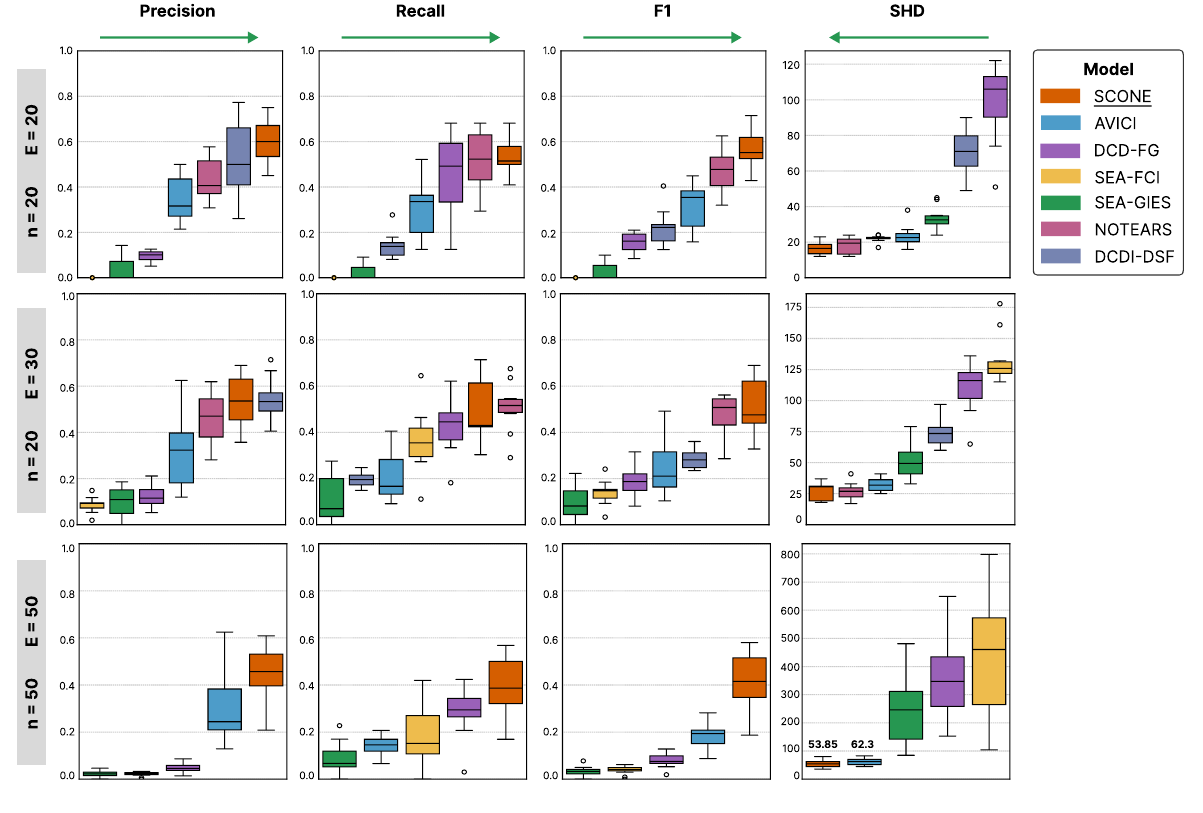}
    \caption{Precision, recall, F1 and SHD for state-of-the-art methods compared to SCONE on predictions over 10 graphs with 20 nodes 20 edges, 20 nodes 30 edges and 20 graphs for 50 nodes 50 edges. SEA-FCI obtains a precision, recall and F1 score of 0.0 in the 20 node 20 edge example, and is represented by a circle on the bottom of the axis.}
    \label{fig:ood-appendix}
\end{figure}

\begin{table}[H]
\centering
\begin{tabular}{lllcccc}
\toprule
$N$ & $E$ & Model & SHD $\downarrow$ & F1 $\uparrow$ & Precision $\uparrow$ & Recall $\uparrow$ \\
\midrule
& & \textsc{\textbf{Scone}} & $\textbf{16.3} \pm 3.7$ & $\textbf{0.571} \pm 0.098$ & $\textbf{0.610} \pm 0.110$ & $\textbf{0.537} \pm 0.091$ \\
\raisebox{1.5ex}{20} & \raisebox{1.5ex}{20} & \textsc{Avici}          & $23.3 \pm 6.2$ & $0.319 \pm 0.099$ & $0.346 \pm 0.098$ & $0.310 \pm 0.127$ \\
& & \textsc{Notears}        & $18.0 \pm 4.6$ & $0.468 \pm 0.105$ & $0.434 \pm 0.091$ & $0.517 \pm 0.139$ \\
& & \textsc{Dcd-Fg} & $98.8 \pm 22.2$ & $0.155 \pm 0.044$ & $0.096 \pm 0.024$ & $0.450 \pm 0.183$ \\
& & \textsc{Dcdi-Dsf}       & $70.3 \pm 13.5$ & $0.221 \pm 0.082$ & $0.512 \pm 0.181$ & $0.142 \pm 0.057$ \\
& & \textsc{Sea} (FCI)      & $21.9 \pm 2.0$ & $0.000 \pm 0.000$ & $0.000 \pm 0.000$ & $0.000 \pm 0.000$ \\
& & \textsc{Sea} (GIES)     & $33.5 \pm 6.7$ & $0.027 \pm 0.038$ & $0.038 \pm 0.055$ & $0.023 \pm 0.032$ \\
\midrule
\midrule
& & \textsc{\textbf{Scone}} & $\textbf{26.700} \pm 7.119$ & $\textbf{0.513} \pm 0.124$ & $0.536 \pm 0.114$ & $0.495 \pm 0.137$ \\
\raisebox{1.5ex}{20} & \raisebox{1.5ex}{30} & \textsc{Avici}          & $32.6 \pm 5.7$ & $0.243 \pm 0.119$ & $0.312 \pm 0.153$ & $0.205 \pm 0.102$ \\
& & \textsc{Notears}        & 26.900 $\pm$ 7.031 & 0.478 $\pm$ 0.090 & 0.462 $\pm$ 0.113 & $\textbf{0.509} \pm 0.110$  \\
& & \textsc{Dcd-Fg} & $110.9 \pm 20.9$ & $0.187 \pm 0.065$ & $0.121 \pm 0.047$ & $0.425 \pm 0.119$ \\
& & \textsc{Dcdi-Dsf}       & $74.3 \pm 10.985$ & $0.284 \pm 0.044$ & $\textbf{0.545} \pm 0.092$ & $0.194 \pm 0.031$ \\
& & \textsc{Sea} (FCI)      & $132.7 \pm 20.5$ & $0.139 \pm 0.055$ & $0.086 \pm 0.035$ & $0.364 \pm 0.139$ \\
& & \textsc{Sea} (GIES)     & $51.4 \pm 15.0$ & $0.094 \pm 0.077$ & $0.096 \pm 0.068$ & $0.109 \pm 0.102$ \\
\midrule
\midrule
& & \textsc{\textbf{Scone}} & $\textbf{53.85} \pm 11.60$ & $\textbf{0.424} \pm 0.102$ & $\textbf{0.455} \pm 0.102$ & $\textbf{0.398} \pm 0.103$ \\
\raisebox{1.5ex}{50} & \raisebox{1.5ex}{50} & \textsc{Avici}          & $62.3 \pm 11.9$ & $0.188 \pm 0.050$ & $0.299 \pm 0.128$ & $0.142 \pm 0.036$ \\
& & \textsc{Dcd-Fg} & $373.95 \pm 149.6$ & $0.082 \pm 0.027$ & $0.049 \pm 0.019$ & $0.294 \pm 0.087$ \\
& & \textsc{Sea} (FCI)      & $439.25 \pm 198.0$ & $0.039 \pm 0.015$ & $0.023 \pm 0.008$ & $0.184 \pm 0.114$ \\
& & \textsc{Sea} (GIES)     & $242.10 \pm 109.3$ & $0.033 \pm 0.016$ & $0.022 \pm 0.010$ & $0.084 \pm 0.060$ \\
\bottomrule
\end{tabular}
\caption{Comparison of causal discovery methods across graph sizes. Results are averaged over 10 graphs for $N=20$ and 20 graphs for $N=50$ (mean $\pm$ std).}
\label{tab:results_main}
\end{table}

\paragraph{Scalability to larger graphs}
We evaluate scalability on graphs with 100 nodes and 100 edges. Results, averaged over 20 graphs, are reported in Table ~\ref{tab:resultsscale-appendix}. SCONE achieves the lowest SHD and markedly higher F1 and precision compared to all baselines. In contrast, DCD-FG and SEA (FCI) predict dense graphs, leading to inflated recall at the cost of low precision and large SHD.

\begin{table}[H]
\centering
\begin{tabular}{lcccc}
\toprule
Model & SHD $\downarrow$ & F1 $\uparrow$ & Precision $\uparrow$ & Recall $\uparrow$ \\
\midrule
\textsc{\textbf{Scone}}      & $\textbf{126.7} \pm 25.0$ & $\textbf{0.237} \pm 0.046$ & $\textbf{0.345} \pm 0.074$ & $0.186 \pm 0.046$ \\
\textsc{Avici}      & $131.3 \pm 18.0$ & $0.100 \pm 0.042$ & $0.210 \pm 0.082$ & $0.068 \pm 0.031$ \\
\textsc{Dcd-Fg}     & $1001.3 \pm 537.9$ & $0.052 \pm 0.011$ & $0.030 \pm 0.008$ & $0.230 \pm 0.076$ \\
\textsc{Sea} (FCI)  & $3178.8 \pm 425.2$ & $0.022 \pm 0.004$ & $0.012 \pm 0.002$ & $\textbf{0.325} \pm 0.056$ \\
\textsc{Sea} (GIES) & $1797.2 \pm 296.6$ & $0.022 \pm 0.004$ & $0.012 \pm 0.002$ & $0.185 \pm 0.044$ \\
\bottomrule
\end{tabular}
\caption{Comparison of SHD, F1, precision and recall for causal discovery methods on graphs with $N=100$ nodes and $E=100$ edges. Results are averaged over 20 graphs (mean $\pm$ std).}
\label{tab:resultsscale-appendix}
\end{table}

\paragraph{Ablation on Bias Heads} We evaluate the contribution of the contrastive bias heads (SSI, CVT, DPT) by comparing \textsc{SCONE} to \textsc{SCONE-NB} with bias heads removed. Table ~\ref{tab:resultsabl-sconenbapp} reports results for 20-node 40-edge and 50-node 50-edge graphs, averaged over 20 graphs per setting.

\begin{table}[H]
\centering
\begin{tabular}{lllcccc}
\toprule
$N$ & $E$ & Model & SHD $\downarrow$ & F1 $\uparrow$ & Precision $\uparrow$ & Recall $\uparrow$ \\
\midrule
& & \textsc{\textbf{Scone}} & $\textbf{39.0} \pm 9.7$ & $\textbf{0.469} \pm 0.143$ & $\textbf{0.473} \pm 0.138$ & $\textbf{0.466} \pm 0.153$ \\
\raisebox{1.5ex}{20} & \raisebox{1.5ex}{40} & \textsc{Scone-NB} & $39.6 \pm 9.0$ & $0.453 \pm 0.123$ & $0.460 \pm 0.126$ & $0.449 \pm 0.125$ \\
\midrule
& & \textsc{\textbf{Scone}} & $\textbf{53.85} \pm 11.60$ & $\textbf{0.424} \pm 0.102$ & $\textbf{0.455} \pm 0.102$ & $\textbf{0.398} \pm 0.103$ \\
\raisebox{1.5ex}{50} & \raisebox{1.5ex}{50} & \textsc{Scone-NB} & $56.60 \pm 10.79$ & $0.393 \pm 0.092$ & $0.424 \pm 0.094$ & $0.368 \pm 0.092$ \\
\bottomrule
\end{tabular}
\caption{Comparison of SCONE with and without orientation rules. Results for $N=20$ are averaged over 10 graphs and $N=50$ over 20 graphs (mean $\pm$ std). \textsc{Scone-NB} ablates the contrastive bias heads.}
\label{tab:resultsabl-sconenbapp}
\end{table}

\paragraph{Ablation Contrastive} Contrastive features such as reparametrization, contrastive sampling, edge embedding pairwise statistics, were ablated (\textsc{SCONE-NC}) to empirically validate the importance of invariance/contrast for recovering additional edges. Results for 20-node 40-edge and 50-node 50-edge graphs were averaged over 20 graphs and are presented in Table~\ref{tab:resultsabl-sconencapp}.

\begin{table}[H]
\centering
\begin{tabular}{lllcccc}
\toprule
$N$ & $E$ & Model & SHD $\downarrow$ & F1 $\uparrow$ & Precision $\uparrow$ & Recall $\uparrow$ \\
\midrule
& & \textsc{\textbf{Scone}} & $\textbf{39.0} \pm 9.7$ & $\textbf{0.469} \pm 0.143$ & $\textbf{0.473} \pm 0.138$ & $\textbf{0.466} \pm 0.153$ \\
\raisebox{1.5ex}{20} & \raisebox{1.5ex}{40} & \textsc{Scone-NC} & $48.6 \pm 9.3$ & $0.341 \pm 0.136$ & $0.350 \pm 0.136$ & $0.335 \pm 0.139$ \\
\midrule
& & \textsc{\textbf{Scone}} & $\textbf{53.85} \pm 11.60$ & $\textbf{0.424} \pm 0.102$ & $\textbf{0.455} \pm 0.102$ & $\textbf{0.398} \pm 0.103$ \\
\raisebox{1.5ex}{50} & \raisebox{1.5ex}{50} & \textsc{Scone-NC} & $66.55 \pm 9.1$ & $0.322 \pm 0.076$ & $0.350 \pm 0.076$ & $0.300 \pm 0.077$ \\
\bottomrule
\end{tabular}
\caption{Comparison of SCONE with and without any contrastive features. Results for $N=20$ are averaged over 10 graphs and $N=50$ over 20 graphs (mean $\pm$ std). \textsc{Scone-NC} ablates the contrastive features in the model.}
\label{tab:resultsabl-sconencapp}
\end{table}

\end{document}